\pgfplotsset{compat=newest}
\pgfplotsset{
/pgfplots/custom legend/.style={
legend image code/.code={
\draw [only marks,mark=diamond*]
plot coordinates { 
(0.3cm,0cm)
};
}, },}
\newcommand{\bbl}{\begin{block}}
\newcommand{\ebl}{\end{block}}
\newcommand{\bc}{\begin{center}}
\newcommand{\bea}{{\begin{align}}}
\newcommand{\beq}{\begin{equation}}
\newcommand{\beqa}{\begin{eqnarray}}
\newcommand{\bi}{\begin{itemize}}
\newcommand{\bitem}{\begin{itemize}}
\newcommand{\bpm}{\begin{pmatrix}}
\newcommand{\cref}[1]{ {\tiny[{#1}]}}
\newcommand{\DD}{\mathcal{D}}
\newcommand{\ec}{\end{center}}
\newcommand{\eea}{{\end{align}}}
\newcommand{\eeq}{\end{equation}}
\newcommand{\eeqa}{\end{eqnarray}}
\newcommand{\ei}{\end{itemize}}
\newcommand{\euler}{\mathrm{eu}}
\newcommand{\expon}{\mathrm{ex}}
\newcommand{\id}{{id}}
\newcommand{\rprod}{\coprod}
\newcommand{\eitem}{\end{itemize}}
\newcommand{\epm}{\end{pmatrix}}
\newcommand{\fg}{\mathfrak{g}}
\newcommand{\N}{\mathbb{N}}
\newcommand{\R}{\mathbb{R}}
\newcommand{\Rn}{\mathbb{R}^n}
\newcommand{\seq}{\subseteq}
\newcommand{\mypar}[1]{\medskip\noindent\textbf{#1.} }
\newcommand{\tm}[1]{}
\newcommand{\half}{\frac{1}{2}}
\newcommand{\LL}{{\mathcal{L}}}
\newcommand{\bpf}{\begin{proof}}
\newcommand{\epf}{\end{proof}}
\newcommand{\be}{\begin{enumerate}}
\newcommand{\ee}{\end{enumerate}}
\newcommand{\barx}{\Bar{x}}
\newcommand{\frobenius}{F}
\DeclareMathOperator{\SO}{SO}
\DeclareMathOperator{\Tgp}{T}
\DeclareMathOperator{\SE}{SE}
\DeclareMathOperator{\SIM}{SIM}
\DeclareMathOperator{\Aff}{Aff}
\DeclareMathOperator{\GL}{GL}
\DeclareMathOperator{\PGL}{PGL}
\DeclareMathOperator{\SSIM}{SSIM}
\DeclareMathOperator{\RMSE}{RMSE}
\DeclareMathOperator{\NCC}{NCC}
\DeclareMathOperator{\DS}{DS}
\newcommand{\banach}{{\mathcal{B}}}
\definecolor{oceangreen}{RGB}{0,57,74}
\definecolor{oceangreen-90}{RGB}{0,73,90}
\definecolor{oceangreen-80}{RGB}{0,90,107}
\definecolor{oceangreen-70}{RGB}{17,107,123}
\definecolor{oceangreen-60}{RGB}{60,126,142}
\definecolor{oceangreen-50}{RGB}{93,146,160}
\definecolor{oceangreen-40}{RGB}{126,168,178}
\definecolor{oceangreen-30}{RGB}{157,188,198}
\definecolor{oceangreen-20}{RGB}{189,210,216}
\definecolor{oceangreen-10}{RGB}{222,233,236}
\definecolor{red-lightest}{RGB}{233,191,173}
\definecolor{red-light}{RGB}{227,32,50}
\definecolor{red-medium}{RGB}{182,22,32}
\definecolor{red-dark}{RGB}{126,20,24}
\definecolor{orange-lightest}{RGB}{240,205,178}
\definecolor{orange-light}{RGB}{236,117,4}
\definecolor{orange-medium}{RGB}{202,81,25}
\definecolor{orange-dark}{RGB}{129,53,18}
\definecolor{yellow-lightest}{RGB}{236,217,186}
\definecolor{yellow-light}{RGB}{251,186,0}
\definecolor{yellow-medium}{RGB}{191,134,21}
\definecolor{yellow-dark}{RGB}{117,83,17}
\definecolor{green-lightest}{RGB}{219,215,187}
\definecolor{green-light}{RGB}{149,187,12}
\definecolor{green-medium}{RGB}{126,133,37}
\definecolor{green-dark}{RGB}{50,89,74}
\definecolor{turquoise-lightest}{RGB}{205,219,216}
\definecolor{turquoise-light}{RGB}{59,178,160}
\definecolor{turquoise-medium}{RGB}{36,143,133}
\definecolor{turquoise-dark}{RGB}{0,90,91}
\definecolor{ocean-lightest}{RGB}{198,220,226}
\definecolor{ocean-light}{RGB}{0,174,198}
\definecolor{ocean-medium}{RGB}{0,145,167}
\definecolor{ocean-dark}{RGB}{0,97,122}
\definecolor{cyan-lightest}{RGB}{195,217,229}
\definecolor{cyan-light}{RGB}{60,169,213}
\definecolor{cyan-medium}{RGB}{0,131,173}
\definecolor{cyan-dark}{RGB}{0,87,119}
\definecolor{blue-lightest}{RGB}{195,217,236}
\definecolor{blue-light}{RGB}{111,165,206}
\definecolor{blue-medium}{RGB}{0,105,163}
\definecolor{blue-dark}{RGB}{0,70,114}
\definecolor{gray-cold-lightest}{RGB}{215,223,228}
\definecolor{gray-cold-light}{RGB}{191,203,213}
\definecolor{gray-cold-medium}{RGB}{140,157,171}
\definecolor{gray-cold-dark}{RGB}{87,105,120}
\definecolor{gray-neutral-lightest}{RGB}{226,228,228}
\definecolor{gray-neutral-light}{RGB}{208,208,210}
\definecolor{gray-neutral-medium}{RGB}{156,157,160}
\definecolor{gray-neutral-dark}{RGB}{100,100,102}
\definecolor{gray-warm-lightest}{RGB}{237,236,227}
\definecolor{gray-warm-light}{RGB}{218,214,203}
\definecolor{gray-warm-medium}{RGB}{162,159,145}
\definecolor{gray-warm-dark}{RGB}{118,115,105}
\definecolor{blue-icy}{RGB}{103,148,150}
\newcommand{\firstcol}[1]{\begin{columns}[T]\begin{column}{#1}}
\newcommand{\lastcol}{\end{column}\end{columns}}
\pgfplotsset{compat=1.13}
\newlength{\logoheightinline}\setlength{\logoheightinline}{.85em}
\newcommand{\helm}[1]{}
\newcommand{\hamb}[1]{}
\newtheorem{theorem}{Theorem}
\begin{document}

\title[]{Stationary Velocity Fields on Matrix Groups for Deformable Image Registration}


\author*[1]{\fnm{Johannes} \sur{Bostelmann}}\email{johannes.bostelmann@uni-luebeck.de}

\author*[1]{\fnm{Ole} \sur{Gildemeister}}\email{o.gildemeister@uni-luebeck.de}

\author*[1]{\pfx{} \fnm{Jan} \sur{Lellmann}}\email{jan.lellmann@uni-luebeck.de}

\affil[1]{\orgdiv{Institute of Mathematics and Image Computing}, \orgname{University of Lübeck}, \orgaddress{\street{Maria-Goeppert-Str.~3}, \city{Lübeck}, \postcode{23562},  \country{Germany}}}

\abstract{The stationary velocity field (SVF) approach allows to build parametrizations of invertible deformation fields, which is often a desirable property in image registration. Its expressiveness is particularly attractive when used as a block following a machine learning-inspired network.
However, it can struggle with large deformations. We extend the SVF approach to matrix groups, in particular $\SE(3)$. This moves Euclidean transformations into the low-frequency part, towards which network architectures are often naturally biased, so that larger motions can be recovered more easily. This requires an extension of the flow equation, for which we provide sufficient conditions for existence. We further prove a decomposition condition that allows us to apply a scaling-and-squaring approach for efficient numerical integration of the flow equation. We numerically validate the approach on inter-patient registration of 3D MRI images of the human brain.
}

\keywords{Deformable Image Registration, Implicit Neural Representation, Stationary Velocity Fields, Flow Equation, Scaling and Squaring}



\maketitle
\section{Introduction}\label{sec1}
Image registration describes the task of aligning two or more images by providing a reasonable transformation. This task has a wide range of applications, including atlas-based segmentation \cite{wyawahare2009image}, tracking of changes in medical data such as tumor growth and fracture healing~\cite{sotiras2013deformable}, Synchronous Localization and Mapping~\cite{pfingsthorn2010maximum}, and image stitching for panoramic or (satellite/drone) captures \cite{sakharkar2013image}.

Typically, the sought transformation increases the image similarity while fulfilling additional criteria for a physically plausible deformation, such as its smoothness or invertibility. The computation of a useful alignment is often described by an optimization problem, in which an objective function is minimized over a specific space of parametrized deformations. 

Given two scalar-valued images represented by functions $I_1, I_2:\Omega\to\R$ on the image domain \mbox{$\Omega\seq\R^d$,} a typical approach for finding a suitable deformation $\phi:\Omega\to\R^d$ that maps corresponding points in $I_1$ to points in $I_2$ is to formulate a variational problem of the form
\begin{align}
    \min_{\phi \in \mathcal{D}} \LL(\phi;I_1,I_2),\quad  \LL(\phi; I_1,I_2):=J(I_1,I_2\circ \phi)+ R(\phi).\label{eq:reg-intro}
\end{align}
The set of deformations $\mathcal{D}$ is often chosen to consist of all functions $\phi:x\mapsto x+ u(x)$, where the displacement $u: \Omega \to \R^d$ is an element of a suitable topological vector space. The functional $J$ is a similarity term, in which smaller values indicate a higher similarity between its arguments, such as an $L^2$ distance; the regularizer $R:\mathcal{D}\to \mathbb{R}$ shall provide a bias towards physically reasonable deformations and can be used to stabilize the problem.

Choosing suitable similarity terms and regularizers has been intensively investigated~\cite{burger2013hyperelastic,wolterink2022implicit} and is not the goal of this work. Instead, we focus on another important aspect: the choice and parametrization of the deformation space $\DD$. In practice, expressiveness and convergence can depend heavily on the parametrization used. Moreover, this choice can influence the meaning and practicability of the regularizer.

\begin{figure}[tp]
    \centering
    \begin{tabular}{cccc}
      \quad\quad\rotatebox{90}{pre-aligned}&    \includegraphics[width=0.27\linewidth]{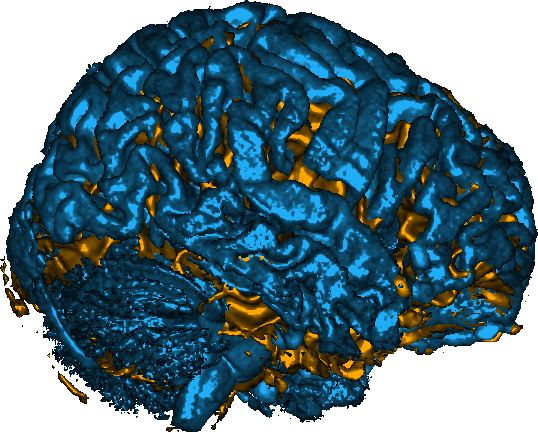}&
    \includegraphics[width=0.27\linewidth]{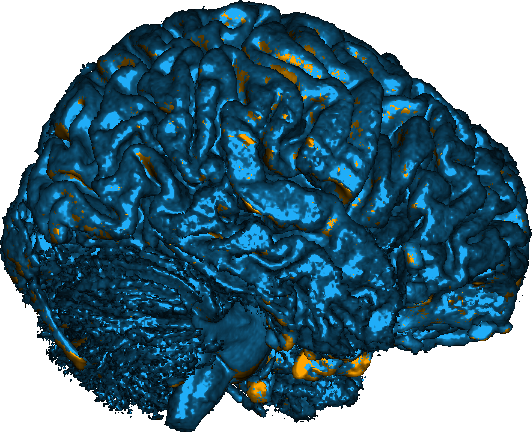}&
    \includegraphics[width=0.27\linewidth]{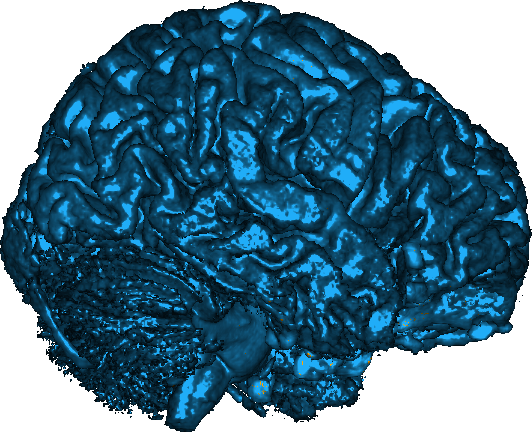}\\
        \quad\quad\rotatebox{90}{large deformation} & 
    \includegraphics[width=0.27\linewidth]{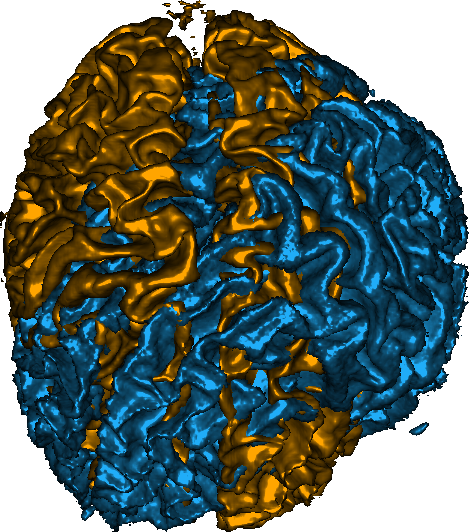} &
    \includegraphics[width=0.27\linewidth]{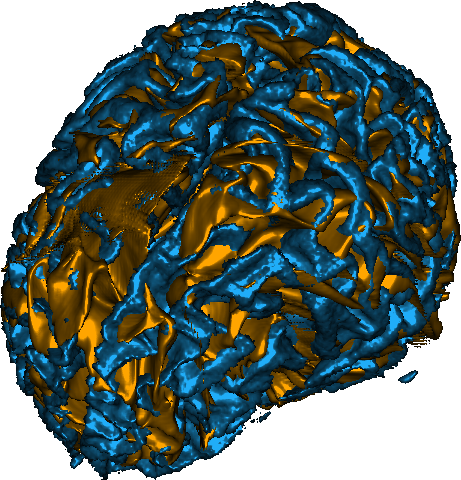} &
    \includegraphics[width=0.27\linewidth]{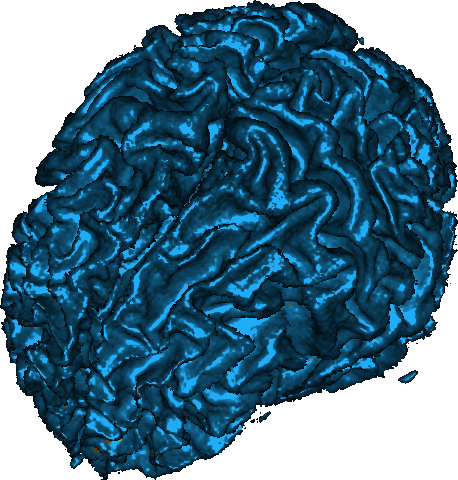}        
 \\
        & before registration & SVF & $\SE(3)$ (ours)
    \end{tabular}
        \vspace{0.5em}
    \caption{Comparison of the stationary velocity field (SVF) approach and our proposed matrix group-valued approach using the $\SE(3)$ group on synthetically deformed human brain MRI data. While both methods manage to align the 3D volumes under small deformations (top row), the SVF approach struggles when the input images are not pre-aligned (bottom row). The resulting deformation field shows clear alignment issues (center), which are alleviated by the proposed matrix group-valued approach (right).}
    \label{fig:alignments-iso}
\end{figure}
\begin{figure}[tp]
    \centering
    \subfloat[ground truth deformation]{
    \includegraphics[width=0.31\linewidth]{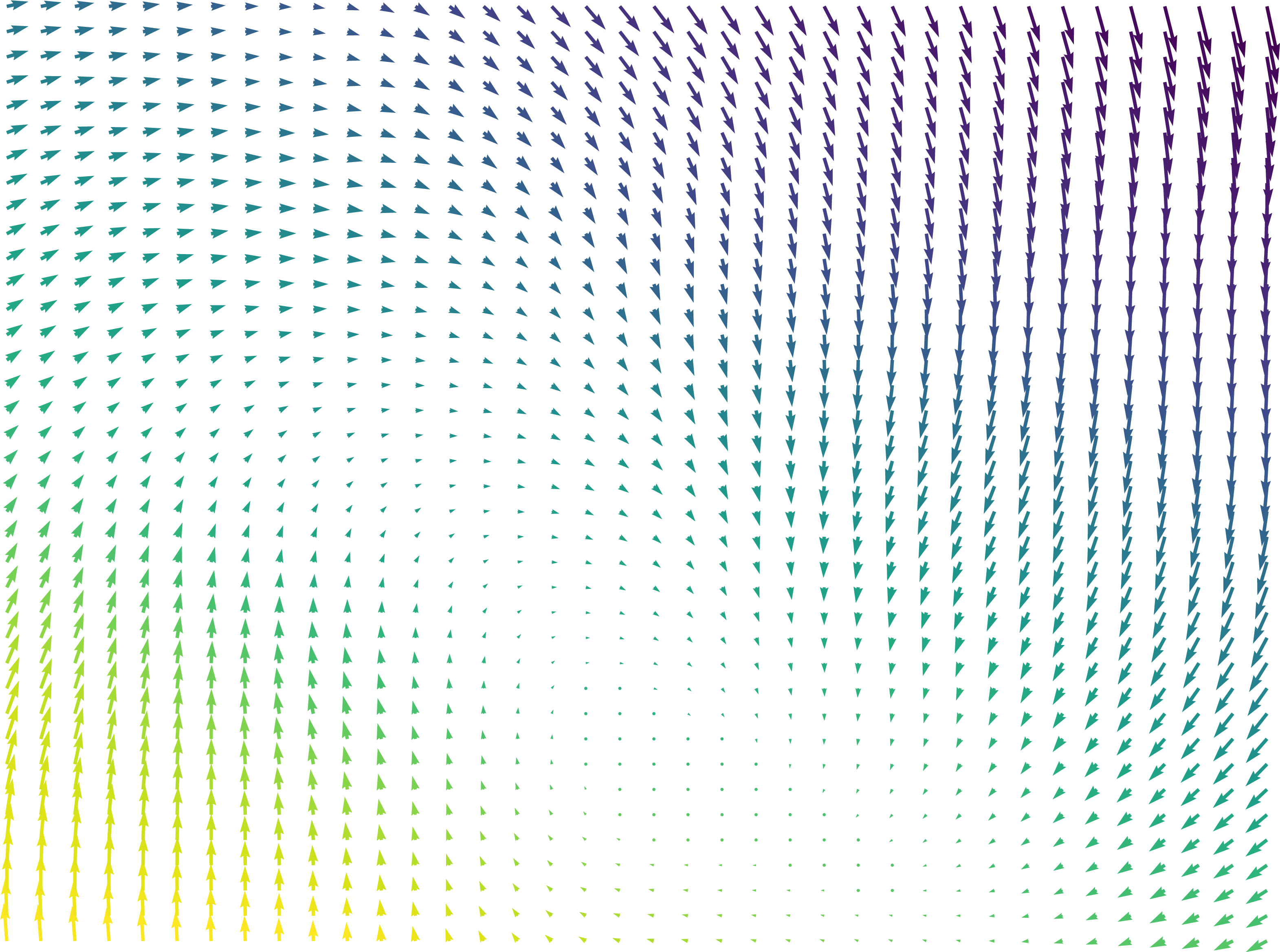}\;}
    \subfloat[SVF]{
    \includegraphics[width=0.31\linewidth]{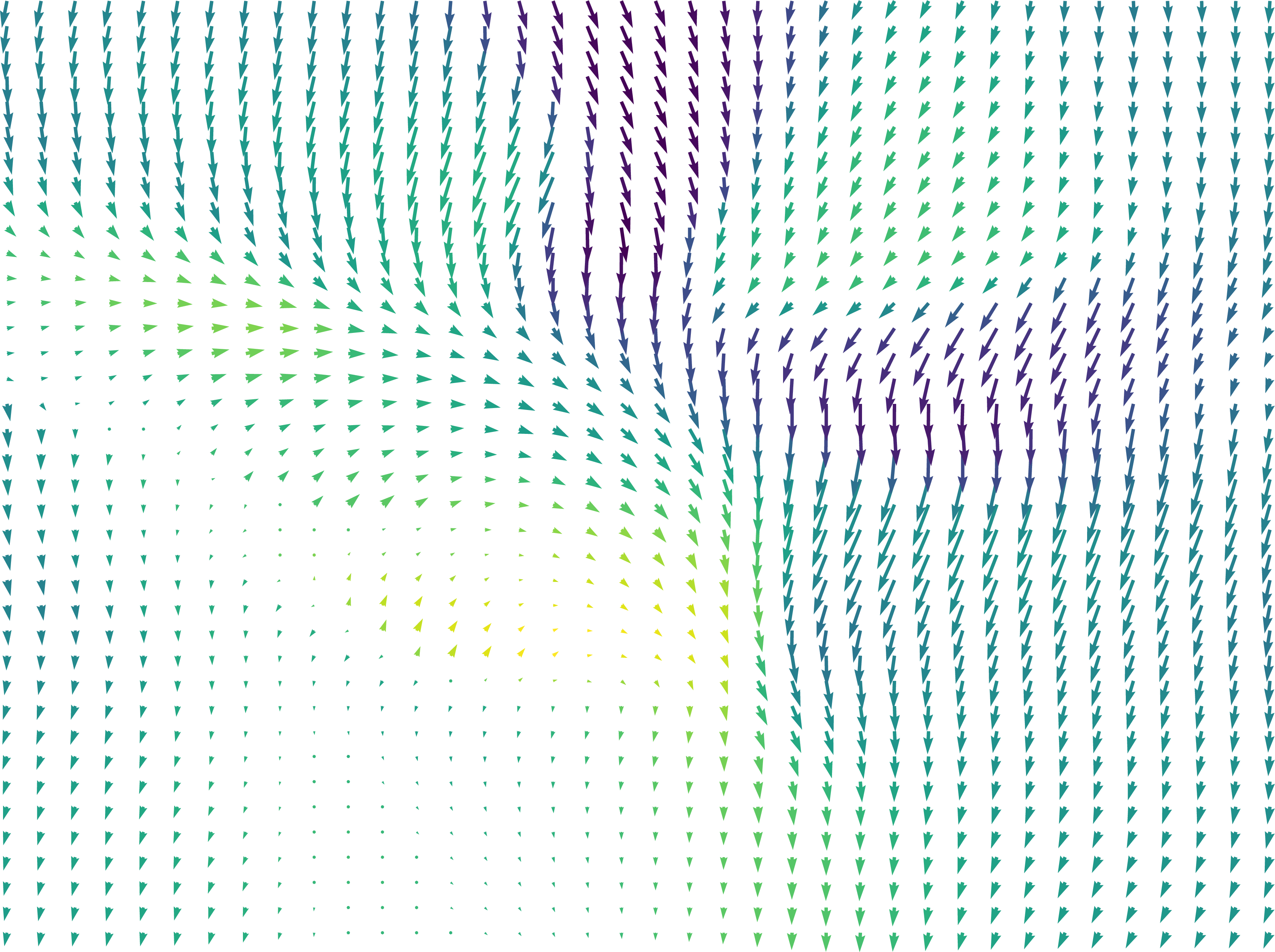}\;}
    \subfloat[SE(3) (ours)]{
    \includegraphics[width=0.31\linewidth]{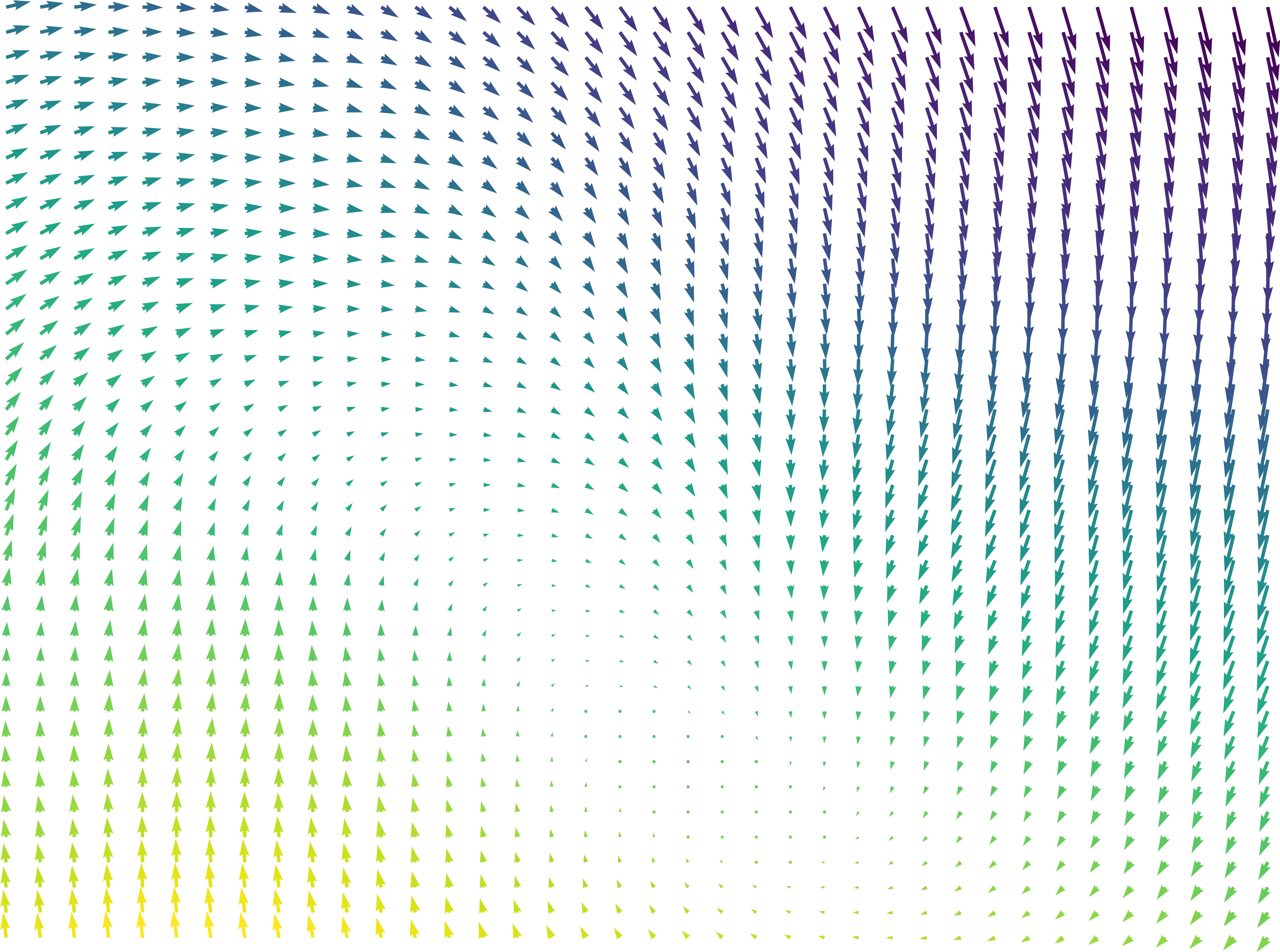}}
    \caption{2D slices of the 3D deformation fields from the bottom row of Fig.~\ref{fig:alignments-iso}. The arrow colors indicate the displacement in the direction orthogonal to the slice. While SVF roughly aligns the images judging in terms of visual quality in Fig.~\ref{fig:alignments-iso}, inspecting the deformation field (b) shows that it is far from the ground truth (a). The proposed parametrization with matrix groups (c) yields a result that closely matches the ground truth.}
    \label{fig:displacement fields}
\end{figure}

To motivate our work, consider the example in Fig.~\ref{fig:alignments-iso}, in which we performed a 3D registration of two MRI brain scans from the OASIS \cite{marcus2007open} dataset using the popular \emph{Stationary Velocity Field (SVF)} approach \cite{arsigny2006log,arguillere2015shape, han2023diffeomorphic} and our proposed method (denoted $\SE(3)$). In the first example, the sought deformation is comparably small before the registration process, and both approaches perform equally well. In the second example, the deformation includes a larger rotational component. When looking at the deformed images only, the SVF approach appears to generate an alignment that is clearly worse, but not catastrophically so. This is deceptive: inspecting the generated deformation fields (Fig.~\ref{fig:displacement fields}), it becomes clear that the SVF approach generates a deformation that -- while it maps corresponding intensity values reasonably well between the images -- is far from the ground truth.

The findings on these (synthetic) examples are compatible with our perception of the existing literature: In general, we found that SVF-based approaches mostly seem to be applied to pre-aligned images, and/or that the judgement of their accuracy is solely based on the similarity of the images after registration or on related proxies, such as the overlap of known segmented regions, that do not necessarily imply sensible deformation fields.

Nevertheless, the SVF approach was used successfully in the past and has many attractive properties, such as the possibility of generating diffeomorphic deformations and of easily obtaining the inverse deformation, for example for bidirectional approaches. Therefore, in this work, we aim to derive an extension that preserves these benefits while allowing for a better handling of large deformations.

\mypar{General approach}
In general, in order to solve \eqref{eq:reg-intro} numerically, a parametrization of the deformation is required, which we denote abstractly by $\phi_\theta$, where $\theta\in\Rn$ is a set of real parameters to be determined for each given image pair $I_1$ and $I_2$.

The registration problem  \eqref{eq:reg-intro} then becomes
\begin{align}
    \min_{\theta \in \Rn} \LL(\phi_\theta;I_1,I_2).\label{eq:reg-param}
\end{align}
Classically, $\phi_\theta$ is either -- in the finite-differences setting -- represented by its values on a grid, taken directly from the parameter vector $\theta$, or -- in a finite elements approach -- is assumed to be a linear combination of basis functions, where $\theta$ determines the coefficients in the linear combination. Typical choices include spline-based parametrizations \cite{szeliski1997spline} and the use of radial basis functions \cite{fornefett2001radial}.

While comparably straightforward to apply, such linear parametrizations have disadvantages: Firstly, it is hard to enforce global regularity of the deformation, in particular invertibility, which is often desirable or even mandated by the physical requirement that the deformation should not contain any ``folds'', i.e., self-intersections. Secondly, due to the local nature of the classical basis functions, components in $\theta$ typically affect only small parts of the deformation, which makes the optimization process more complicated: In order to correctly identify global deformations, a large share of the parameters needs to be adjusted. Furthermore, random local similarities often create local minima or stationary points in the energy, which can trap iterative solvers.

The approach taken in this work builds on the approach used in \cite{han2023diffeomorphic}:
\begin{enumerate}
\item Rather than explicitly parametrizing the deformation, we assume it to be the solution of a \emph{flow equation.} The behavior of the flow equation is governed by its right-hand side in the form of a \emph{velocity field} function $\nu$. Under certain conditions, this setup guarantees that the deformation is sufficiently regular and that an inverse exists which can be computed efficiently.
\item For the parametrization of the velocity field $\nu$, we build on the recent success of coordinate-based networks and assume that the velocity field is described by a network/neural architecture, which is parametrized -- in a nonlinear way -- by $\theta$.
\end{enumerate}

From now on, by $S$ we denote the (non-linear) solution operator that maps such a velocity field to a deformation $\phi = S(\nu)$. The neural network can, in the most generic way, be thought of as a mapping $G$ that takes a parameter vector $\theta$ and generates a velocity field $\nu_\theta = G(\theta)$.

Overall, this results in the model
\begin{align}
    \min_{\theta \in \Rn} \LL(S(G(\theta));I_1,I_2) ,\label{eq:reg-param-full}
\end{align}
where the generator $G$ turns the parameter vector $\theta$ into a velocity field, which is then mapped to a deformation by the solution operator $S$ that solves a \emph{flow equation}. Fig.~\ref{fig:theta-to-phi} visualizes the complete process of generating the deformation $\phi_\theta$ from the parameter vector $\theta$.

\usetikzlibrary{decorations.pathreplacing}
\usetikzlibrary{fadings}
\usetikzlibrary{fit}
\definecolor{fc}{HTML}{1E90FF}
\definecolor{h}{HTML}{228B22}
\definecolor{bias}{HTML}{87CEFA}
\definecolor{noise}{HTML}{8B008B}
\definecolor{conv}{HTML}{b6e1fc}
\definecolor{cyan}{HTML}{87CEFA}
\definecolor{pool}{HTML}{098bdc}
\definecolor{up}{HTML}{B22222}
\definecolor{view}{HTML}{FFFFFF}
\definecolor{bn}{HTML}{FFD700}
\tikzset{fc/.style={black,draw=black,fill=fc,rectangle,minimum height=1cm}}
\tikzset{h/.style={black,draw=black,fill=h,rectangle,minimum height=1cm}}
\tikzset{bias/.style={black,draw=black,fill=bias,rectangle,minimum height=1cm}}
\tikzset{noise/.style={black,draw=black,fill=noise,rectangle,minimum height=1cm}}
\tikzset{conv/.style={black,draw=black,fill=conv,rectangle,minimum height=1cm}}
\tikzset{block/.style={black,draw=black,fill=conv,fill opacity=0.95,rectangle,minimum height=1cm}}
\tikzset{pool/.style={black,draw=black,fill=pool,rectangle,minimum height=1cm}}
\tikzset{up/.style={black,draw=black,fill=up,rectangle,minimum height=1cm}}
\tikzset{view/.style={black,draw=black,fill=view,rectangle,minimum height=1cm}}
\tikzset{conv/.style={black,draw=black,fill=conv,rectangle,minimum height=1cm}}
\tikzset{bn/.style={black,draw=black,fill=bn,rectangle,minimum height=1cm}}
\newcommand*{\eg}{\emph{e.g.}\@\xspace}
\newcommand*{\Eg}{\emph{E.g.}\@\xspace}
\newcommand*{\ie}{\emph{i.e.}\@\xspace}
\newcommand*{\etc}{\emph{etc.}\@\xspace}
\newcommand*{\etal}{\emph{et al.}\@\xspace}
\newcommand*{\cf}{\emph{cf.}\@\xspace}
\newcommand*{\vs}{\emph{vs.}\@\xspace}
\newcommand{\KL}{\ensuremath{\mathrm{KL}}}
\newcommand{\Ber}{\ensuremath{\mathrm{Ber}}}
\begin{figure}
    \centering
    \begin{align}
        \theta \overset{G}{\rightarrow} \nu_\theta \overset{S}{\rightarrow} \phi_\theta \nonumber
    \end{align}
    \begin{tikzpicture}
    \node (p) at (0,0) {\includegraphics[width=2cm]{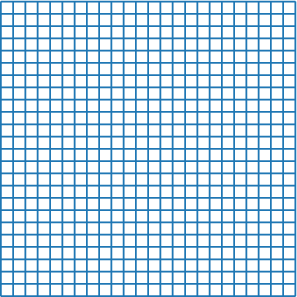}};
    \node[block, minimum width=2cm,rotate=90, label=left:G](G) at (2,0)  {Generator};
    \node[block,rotate=0, label=below:$\nu_\theta$](Vel) at (4,0)  {\includegraphics[height=1.5cm]{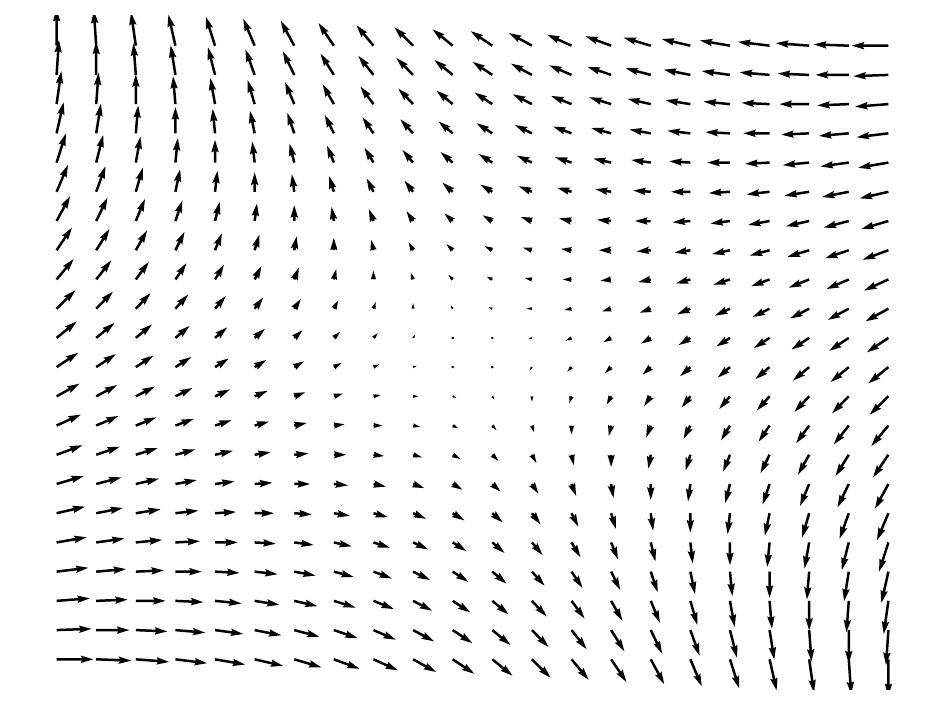}};
    \node[block, minimum width=2cm,rotate=90, label=left:S](S) at (6,0)  {Integration};
    \node[block, label=below:$\phi_\theta$](Phi) at (8,0)  {\includegraphics[height=1.5cm]{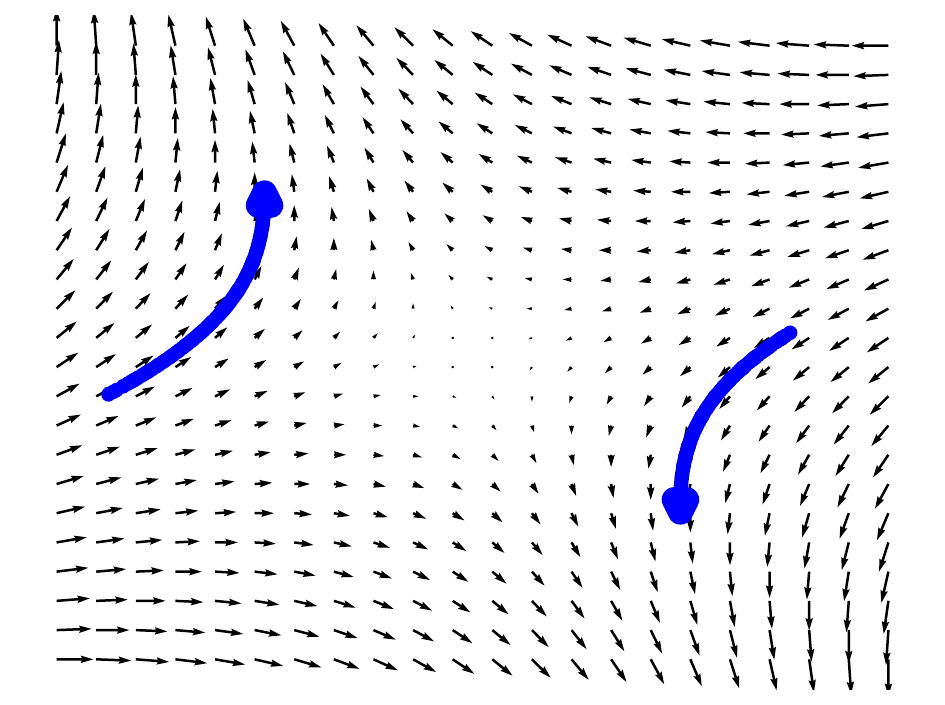}};
    \node[above of = p, node distance =2cm] (theta) {$\theta$};
    \draw[->] (p) -- (G);
    \draw[->] (theta) -| (G);
    \draw[->] (G) -- (Vel);
    \draw[->] (Vel) -- (S);
    \draw[->] (S) -- (Phi);
    \end{tikzpicture}
    \caption{Process of generating a deformation field $\phi_\theta$ from the parameter vector $\theta$. The network-based \emph{generator} $G$ transforms the parameter vector into a \emph{velocity field}~$\nu_\theta$. The \emph{solution operator} $S$ solves an associated flow equation, resulting in a final deformation $\phi_\theta$. Classically, the velocity field and the flow equation are formulated in Euclidean space; we extend the approach to the matrix group setting.}
    \label{fig:theta-to-phi}
\end{figure}

\mypar{Flow equation} In this work, we particularly focus on the flow equation that is used. In the classical setting also used in \cite{han2023diffeomorphic}, one introduces an artificial time $t\in[0,1]$ and prescribes a -- possibly time-varying -- velocity field $v$, for which at each time point~$t$, the associated velocity field $v_t$ is an element of a suitable function space $V$. The deformation is then induced through the ordinary differential equation 
\begin{align}
    \frac{\partial \phi(x,t)}{\partial t} &=  v_t(\phi(x,t)) \text{ for } t \in [0,1], \label{eq1:flow}\\
    \phi(x,0) &=  x,\nonumber
\end{align}
yielding a final deformation $\phi(\cdot,1)$ at time $t=1$.

In the LDDMM approach, this allows to construct a metric on the infinite-dimensional manifold of diffeomorphisms based on properties of $v$, which has also found use as a regularizer \cite{beg2005computing}. Finding that minimizing curves are geodesics with respect to a particular Riemannian metric, a shooting formulation with an initial momentum was subsequently developed \cite{6556700}. Promising a lower computational complexity without significantly hurting expressivity in practical applications, stationary velocity fields also found use \cite{arguillere2015shape}.
These flow-based approaches have been successful in medical image registration, as they enforce diffeomorphic deformations if the velocity field fulfills an integrability condition, 
which can be ensured by a given differentiability operator~\cite{dupuis1998variational,beg2005computing}. 

\mypar{Matrix-valued velocity fields}
A disadvantage of \eqref{eq1:flow} is that the velocity fields that represent common affine deformations such as rotations are non-trivial and require the network to generate such vector fields from scratch.

Therefore, we propose to extend the stationary flow equation \eqref{eq1:flow}
by lifting it to \emph{matrix fields:} Firstly, the deformation is parametrized as $\phi(x):=M(x) \bar{x}$ where $\bar{x}$ is the extension $(x,1)^\top$ of $x$ to homogeneous coordinates, and the matrix field $M:\Omega \to G$ maps to a subgroup $G \subset \GL(\R,d+1)$ of matrices in $\R^{(d+1)\times(d+1)}$. This allows to parametrize common linear transformations, such as rotations, using \emph{constant} matrix fields, which can potentially be learned more easily.

Secondly, in order to formulate the flow equation for these matrix fields, we replace the classical velocity field $v$ by a stationary velocity field $\nu: \Omega \to \mathfrak{g}$ with values in $\mathfrak{g}$, the finite-dimensional vector space of right-invariant vector fields on $G$, that is, the Lie algebra of $G$.

The generalized flow equation then takes the form
\begin{align}
    \frac{\partial M}{\partial t}(x,t) &= \nu(M(x,t) \bar{x})_{M(x,t)} \label{eq2:matflow} \\
    M(\cdot, 0) &= id. 
\end{align}
For the specific choice $G = T(\R,3)$, the group of translations in $\R^3$, the formulation~\eqref{eq2:matflow} reduces to the classical flow equation \eqref{eq1:flow} in the stationary case. 

In a similar spirit, the authors of \cite{park2021nerfies} proposed a parametrization of the deformation using a manifold/$\SE(3)$-valued field, which improved the learning of rotational deformations. This technique shifts the output field into a lower-frequency domain.  However, their description is not flow-based and does not inherently guarantee diffeomorphisms. 

\mypar{Contribution}
To the best of our knowledge, the differentiable structure of matrix groups has not yet been exploited for diffeomorphic, deformable (non-rigid) image registration tasks using neural architectures.
\begin{figure}
\centering   
\begin{tabular}{c|c|c|c}
 Approach & Neural fields & Flow equation & Matrix groups \\
 \hline 
Balakrishnan et al. 19  &   &  \checkmark &  \\
\hline
Han et al. 23  &  \checkmark & \checkmark &  \\
\hline 
Park et al. 21 &  \checkmark &  & \checkmark \\
\hline 
$\emph{ours}$ &  \checkmark & \checkmark & \checkmark 
\end{tabular}
\caption{Components used to parametrize deformations in recent approaches. In this work, we introduce a neural fields based combination of matrix groups with a flow equation.}
\end{figure}
This article is structured as follows.
\begin{itemize}
    \item We propose to extend the existing framework that relies on implicit neural fields and the flow equation by lifting the flow equation to the setting of matrix fields in Section \ref{sec:main}. We prove the existence of a solution of the generalized flow equation in Thm.~\ref{thm:existence} (proof in Section~\ref{sec:existence-proof}).
    \item To obtain a network of manageable depth, in Section~\ref{sec:scaling-squaring} we extend and employ the numerical scaling-and-squaring approach for integrating the flow equation to the matrix group-valued setting. Specifically, we prove that the lifted flow equation satisfies a decomposition proposition that is crucial for the scaling-and-squaring approach (Thm.~\ref{thm:decomp}, proof in Section~\ref{sec:decomp-proof}).
    \item Finally, in Section~\ref{sec4} we confirm numerically that our approach generates invertible deformations, and we validate the accuracy of the generated dense displacement field on synthetic and real-world 3-D registration problems.

\end{itemize}

\mypar{Related Work}  

Supervised learning frameworks for deformable image registration include \cite{haskins2020deep,fu2020deep,sokooti20193d} which are based on artificial deformations and label-driven approaches (also called semi-supervised) \cite{hu2018label}. Ensuring that the artificial training data generalizes well to a specific task is a difficult problem known in the literature as reality gap \cite{tremblay2018training}.

Unsupervised learning-based methods for image registration are typically tuned to minimize a loss based on an image similarity term with possible additional regularisation. 
A successful approach is to start from established image registration approaches that are based on finding a certain optimal set of parameters and replace the optimization process with a trained network: In \cite{balakrishnan2019voxelmorph}, the network is trained to output a stationary velocity field as the basis for generating the deformation in a setting solely based on image similarity and regularisation, as well as in a combined approach using additional segmentation data. In \cite{yang2017quicksilver}, a network is trained to generate initial moments for the shooting formulation of LDDMM based on precalculated moments using a classical approach. In \cite{de2019deep}, the networks directly predict parameters for \mbox{B-spline}-based registration. In \cite{mok2020large}, Laplacian Pyramid Networks are used which directly produce a deformation field. 

A conceptional somewhat different approach uses trained neural architectures as an image similarity metric called ``deep similarity metrics,'' to which classical optimization methods are applied \cite{haskins2020deep}. Combining neural architectures with LDDMM approaches, the authors of \cite{amor2022resnet} proposed a ResNet architecture to model the time-variant flow equation, while \cite{ramon2022lddmm} introduces an adversarial LDDMM learning method. 

Architecturally more akin to classical variational approaches is the concept of neural coordinate representations \cite{sitzmann2020implicit}, in which the unknown function is parametrized non-linearly using a network. There, the estimation of the network parameters is input-specific and generally requires an optimization procedure as in classical variational approaches.
Such approaches work particularly well for novel view synthesis, for which the approach is termed Neural Radiance Fields \cite{mildenhall2021nerf}; see also \cite{park2021nerfies} for an adaptation to deformable scenes. In the context of solving partial differential equations, these networks are known as physics-informed neural networks (PINNs) and have been applied to tasks such as reconstructing flows from 2D observations \cite{cai2021physics} and PDE-constrained optimization \cite{lu2021physics}. Other examples include data compression of images and videos~\cite{sitzmann2020implicit,strumpler2022implicit}.

In the context of deformable image registration, neural representations were studied in~\cite{han2023diffeomorphic,wu2022nodeo} in a diffeomorphic setting with stationary velocity fields and, among others, in~\cite{wolterink2022implicit} for implicit regularisation. 

Our proposed approach is based on a specific representation of transformations in the form of matrix groups. Transformations of the Euclidean space that preserve specific structures have already been studied and categorized for a long time. Since the 19th century, continuous transformation groups were considered with a differentiable structure, also known as infinitesimal transformations. This effort culminated in a full-fledged theory of Lie groups and their corresponding Lie algebras \cite{hawkins2012emergence}. Since then, the theory found use in applications including quantum theory \cite{iachello2006lie}, computer graphics~\cite{eade2014lie, xu2012applications}, odometry \cite{loianno2016visual}, and robotics \cite{park1995lie,selig2013geometrical}. 
In the context of machine learning, matrix groups were often employed to ensure specific invariances -- for example, in classification tasks~\cite{moskalev2022liegg} -- or equivariances~\cite{finzi2020generalizing}.
The authors of~\cite{teed2021tangent} introduced tangent space backpropagation for automatic differentiation on Lie groups including $\SO(3)$, $\SE(3)$, and $\SIM(3)$. Combining image registration with transformation groups, the authors of~\cite{park2021nerfies} proposed parametrizing a deformation pointwise by members of $\SE(3)$ to better represent rotational deformations.

\section{Mathematical preliminaries}\label{sec2}
\begin{table}[tb]
        \centering
        \begin{tabular}[width=\linewidth]{ p{1.7cm} p{1.7cm} p{1.7cm} p{1.7cm} p{1.7cm}p{1.7cm} }
        \toprule
        Matrix group & $\Tgp(\R,3)$ & $\SE(\R,3)$ &  $\SIM(\R,3)$  & $\Aff(\R,3)$ & $\PGL(\R,3)$\\  
        Dimension: & 3 & 6 & 7 & 12 & 15 \\
        Transform: & Translation & Rigid & Rigid, scaling & Affine & Perspective \\
        Preserves: & Orientation~\& & Area \&  & Angles \& & Parallels,\linebreak ratio of distances \& &  Collinearity, cross-ratio\\
        Known closed form of $\exp;\log$: & yes & yes & yes & no & no\\
        \bottomrule
        \end{tabular}
    \caption{Common matrix groups acting on $\R^3$.}
    \label{tab:lie_groups}
\end{table}
A real \emph{Lie group} is a group $(G,\cdot)$ that is also a smooth real \emph{manifold} such that the following conditions hold \cite{lee2012smooth}:
\begin{itemize}
    \item The mapping $\cdot: G \times G \to G : (g,h) \mapsto g\cdot h$ is continuous.
    \item The mapping $()^{-1}:G \to G: g \mapsto g^{-1}$ is continuous.
\end{itemize}
Table~\ref{tab:lie_groups} lists the Lie groups that are most relevant to this work, such as the special Euclidean group of rigid transformation of points in $\R^3$, $\SE(3)$, which can be thought of as a subset of real $4\times 4$ matrices when using the usual homogeneous coordinate representation for the points in $\R^3$. Such groups that are representable by (invertible) matrices are also known as $\emph{matrix groups}$. 

All groups in Table~\ref{tab:lie_groups} can thus be embedded in the 16-dimensional vector space $\R^{4\times4}$. In fact, Whitney's embedding theorem \cite{Mukherjee2015ApproximationTA} guarantees that, under weak theoretical conditions, each $d$-dimensional smooth real manifold can be embedded into a $2d$-dimensional Euclidean space. 

Therefore, the \emph{tangent space} at a point $p \in M$ can be visualized as an affine subspace in $\R^{4\times4}$ that touches and linearly approximates the manifold at $p$; a tangent vector is then a vector attached to $p$ that lies in this affine subspace. 

More rigorously, the tangent space $T_pG$ at a point $p \in G$ is defined here as a set of equivalence classes of smooth curves on the manifold $\gamma: I \subset \R \to G, \, \gamma(0)=p$ with the equivalence relation 
\begin{align}
    \gamma_1 \sim \gamma_2  \Leftrightarrow  (\phi \circ \gamma_1)'(0)=(\phi \circ \gamma_2)'(0) 
\end{align}
for every chart $\phi: U_p \to \R^k$ defined  on a neighborhood $U_p$ of $p$. If the manifold can be embedded into a vector space $\R^l$, the charts can be replaced by embeddings $i: U_p \to \R^l$ \cite{boumal2023intromanifolds}.

The disjoint union of all tangent spaces $TG= \bigsqcup_{p \in G} T_p G$ is called a \emph{tangent bundle.} A \emph{vector field} $X: G \to TG$ with $Xp \in T_pG$ is a section through the tangent bundle in the sense that it assigns to each point in $G$ a vector in the its tangent space. 

Given a differentiable map on the manifold $\chi: G \to G$, its \emph{push-forward} or \emph{differential} $\mathcal{D\chi}_p: T_p \to T_{\chi(p)}$ maps between corresponding tangent spaces. This operation can be defined at a point $p \in G$ by mapping a representative  $\gamma$ from $T_p$ to equivalent curves of $\chi \circ \gamma$, i.e.,
\begin{align}
    D(\chi)_p(\gamma)= \chi\circ\gamma .
\end{align}

A particular case of smooth vector fields is the class of \emph{left/right invariant} (smooth) vector fields, which is of interest for the following chapters. It is reasonable to consider the smooth and bijective maps induced by right/left 
multiplication of group elements~$l_g: G \to G, \, \mapsto g\cdot h$ and $r_g: G\to G, \, h \mapsto h\cdot g$, respectively. Such invariant vector fields fulfill the criteria
\begin{align*}
    \text{left invariance: }& \quad\mathcal{D}_{l_g} X_p=X_{g\cdot p} \\ \text{ right invariance: }& \quad\mathcal{D}_{r_g}X_p=X_{p\cdot g}
\end{align*}
for all $p,g \in G$, where $D_{l_g}$ describes the push-forward/differential of the left or correspondingly right translation.

In general, the space of vector fields is a function space, which makes finding a numerical representation difficult. The class of invariant vector fields, however, can be uniquely described by a single value: their value at the identity. For the finite-dimensional manifolds in this work, the space of such invariant vector fields is finite-dimensional. This greatly simplifies numerical treatment and allows for generating invariant vector fields using a coordinate-based network in which each output channel corresponds to one dimension of the matrix group.

An example of a right invariant vector field for the commutative one-dimensional group $\SO(2)$ of two-dimensional rotation matrices, which is topologically equivalent to a circle, is given in Figure~\ref{fig:left-invariant vector fields on SO2}. The right invariance of the vector field translates to the condition of the tangential vectors having the same signed length. 
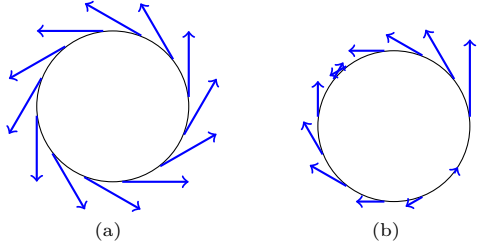
\begin{figure}
    \centering
    \subfloat[]{
    \begin{tikzpicture}
 [
   scale=0.5,
   point/.style = {draw, circle, fill=black, inner sep=0.5pt},
 ]

\def\rad{2cm}
\node (C) at (0,0) {};
\draw (C) circle (\rad);
\node (P)  at +(0:\rad)  {};
\node (Q)  at +(30:\rad) {};
\node (R)  at +(60:\rad) {};
\node (S)  at +(90:\rad) {};
\node (T)  at +(120:\rad) {};
\node (U)  at +(150:\rad) {};
\node (V)  at +(180:\rad) {};
\node (W)  at +(210:\rad) {};
\node (X)  at +(240:\rad) {};
\node (Y)  at +(270:\rad) {};
\node (Z)  at +(300:\rad) {};
\node (A)  at +(330:\rad) {};

\draw[->,thick, color=blue] (P) -- ([turn]90:2cm);
\draw[->,thick, color=blue] (Q) -- ([turn]90:2cm);
\draw[->,thick, color=blue] (R) -- ([turn]90:2cm);
\draw[->,thick, color=blue] (S) -- ([turn]90:2cm);
\draw[->,thick, color=blue] (T) -- ([turn]90:2cm);
\draw[->,thick, color=blue] (U) -- ([turn]90:2cm);
\draw[->,thick, color=blue] (V) -- ([turn]90:2cm);
\draw[->,thick, color=blue] (W) -- ([turn]90:2cm);
\draw[->,thick, color=blue] (X) -- ([turn]90:2cm);
\draw[->,thick, color=blue] (Y) -- ([turn]90:2cm);
\draw[->,thick, color=blue] (Z) -- ([turn]90:2cm);
\draw[->,thick, color=blue] (A) -- ([turn]90:2cm);
\end{tikzpicture}} 
\hspace{25pt}
\subfloat[]{
    \begin{tikzpicture}
 [
   scale=0.5,
   point/.style = {draw, circle, fill=black, inner sep=0.5pt},
 ]

\def\rad{2cm}
\node (C) at (0,0) {};
\draw (C) circle (\rad);
\node (P)  at +(0:\rad)  {};
\node (Q)  at +(30:\rad) {};
\node (R)  at +(60:\rad) {};
\node (S)  at +(90:\rad) {};
\node (T)  at +(120:\rad) {};
\node (U)  at +(150:\rad) {};
\node (V)  at +(180:\rad) {};
\node (W)  at +(210:\rad) {};
\node (X)  at +(240:\rad) {};
\node (Y)  at +(270:\rad) {};
\node (Z)  at +(300:\rad) {};
\node (A)  at +(330:\rad) {};

\draw[->,thick, color=blue] (P) -- ([turn]90:2.3cm);
\draw[->,thick, color=blue] (Q) -- ([turn]90:1.8cm);
\draw[->,thick, color=blue] (R) -- ([turn]90:1.4cm);
\draw[->,thick, color=blue] (S) -- ([turn]90:1.2cm);
\draw[->,thick, color=blue] (T) -- ([turn]90:0.8cm);
\draw[->,thick, color=blue] (U) -- ([turn]90:-0.8cm);
\draw[->,thick, color=blue] (V) -- ([turn]90:-1.2cm);
\draw[->,thick, color=blue] (W) -- ([turn]90:-1.3cm);
\draw[->,thick, color=blue] (X) -- ([turn]90:-1.4cm);
\draw[->,thick, color=blue] (Y) -- ([turn]90:-1cm);
\draw[->,thick, color=blue] (Z) -- ([turn]90:-0.8cm);
\draw[->,thick, color=blue] (A) -- ([turn]90:-0.1cm);
\end{tikzpicture}} 
\caption{(a) Visualization of a right-invariant vector field on $SO(\R,2)$, homeomorphically identified with a circle. 
(b) Visualization of a non-invariant vector field on $SO(\R,2)$. Right-invariant fields on $SO(\R,2)$ can be parametrized with a single value; this is, in general, not possible for non-invariant vector fields.} 
\label{fig:left-invariant vector fields on SO2}
\end{figure}

Curves $\gamma:\R \to G$ solving the differential equation
\begin{align}
    \frac{\partial \gamma}{\partial t}(t) &= X_{\gamma(t)}\\
    \gamma(0) &= g_0
\end{align}
are called \emph{integral curves} originating from $g_0$ \cite{lee2012smooth}.

In the case of embedded matrix groups, integral curves of right invariant vector fields $X$ take the form
\begin{equation}
    \gamma(t)= \exp(t \cdot X_{\id})g_0 \label{eq: integral curve}
\end{equation}
where ``$\exp$'' denotes the matrix exponential
\begin{equation}
    \exp(M) := \sum_{i\in \N} \frac{M^i}{i!}.
\end{equation}
This can be shown by differentiating the curve \eqref{eq: integral curve} with respect to time:
\begin{equation}
    \frac{\partial \gamma}{\partial t}(t) = 
    \exp(t X_{\id}) X_{\id} g_0 = X_{\id} \exp(t X_{\id})  g_0 = X_{\exp(t X_{\id})g_0} = X_{\gamma(t)}.  
\end{equation}

In Section \ref{sec:scaling-squaring}, the matrix exponential will be used to solve the extended flow equation on matrix groups numerically.

\section{Image registration using velocity flow on matrix group valued fields}\label{sec:main}

Before going into the analysis, we briefly summarize our approach as outlined in the introduction. We consider an overall model of the form
\begin{align}
    \min_{\theta \in \Rn} \LL(S(G(\theta));I_1,I_2).\label{eq:reg-param-main}
\end{align}
The final deformation $\phi_\theta:=S(G(\theta))$ is parametrized by the parameter vector~$\theta$. This parameter vector is first turned into a stationary velocity field $\nu_\theta:=G(\theta)$ by the neural network $G$. The solution operator $S$ then computes the final state $M(\cdot,1)$ of the flow equation
\begin{align}
    \frac{\partial M}{\partial t}(x,t) &= \nu_\theta(M(x,t) \bar{x})_{M(x,t)}\label{eq:matflow2-main} \\
    M(\cdot, 0) &= id\nonumber
\end{align}
and returns the final deformation
\beq
    \phi_\theta(x):=S(\nu_\theta)(x):=M(x,1)\begin{pmatrix}
        x\\
        1\\
    \end{pmatrix}.\label{eq:hom-transform}
\eeq

In the following, we discuss the individual parts of this approach in more detail:
\begin{itemize}
\item Sect.~\ref{sec:flow-existence} covers the generalized flow equation and existence,
\item Sect.~\ref{sec:scaling-squaring} addresses the question on how to efficiently implement the solution operator $S$ of the flow equation,
\item Sect.~\ref{sec:objective} concerns the choice of similarity measure $J$ and regularizer $R$ that form the objective $\LL$, and
\item in Sect.~\ref{sec:network-architecture}, we discuss the network architecture $G$.
\end{itemize}

In Sect. ~\ref{sec4}, we discuss numerical results for synthetic deformations and for inter-patient registration tasks.

The idea behind the generalized flow equation is to obtain more freedom and control in tuning the hyperparameters to create a bias so that specific (large) deformations become easier to learn, while preserving the theoretical features of a flow-based approach, namely ``nearly'' diffeomorphic deformations and a simple method for calculating the inverse deformation.

\subsection{Extended flow equation and existence}\label{sec:flow-existence}
\mypar{Velocity fields and vector fields} The starting point for including the flow equation~\eqref{eq:matflow2-main} in our approach is the classical \emph{stationary velocity field} approach, in which the flow equation takes the form 
\begin{align}
    \frac{\phi(x,t)}{\partial t} &=  v(\phi(x,t)) 
    \label{eq:flow2}\\
    \phi(x,0) &=  x . \nonumber
\end{align}

In contrast to a time-dependent velocity field $v_t$ as in \eqref{eq1:flow}, this formulation reduces the computational effort \cite{yang2015diffeomorphic} notably. It is used in classical frameworks \cite{yang2015diffeomorphic} as well as frameworks using a neural architecture \cite{balakrishnan2019voxelmorph,han2023diffeomorphic}.
If the velocity field $v$ is an element of $C_0^1(\Omega)$, the solution $\phi(\cdot,1)$ is continuously differentiable and has a continuously differentiable inverse, i.e., it belongs to the class of \emph{diffeomorphisms.} This is a direct consequence of the results in~\cite{arguillere2015shape}, where the more general case of time-varying velocity fields was considered.

Diffeomorphic deformations are often desirable for registration tasks in which the deformation is caused by a physical process, as is commonly the case in biomedical data. This is due to the fact that non-diffeomorphic deformations, in particular self-intersections, typically are physically implausible. Unfortunately, not every diffeomorphism can be obtained as a solution of \eqref{eq1:flow} with some stationary velocity field \cite{lorenzi2013geodesics}; however, known exceptions seem rather implausible for anatomical deformations \cite{lorenzi2013geodesics}.

Our approach \eqref{eq:matflow2-main} is based on the idea that velocity fields that correspond to basic, expected deformations -- in particular, translations and rotations -- should be ``simple'' in the sense that they are low-frequency, ideally even constant, and therefore easy to generate. For the standard SVF model \eqref{eq:flow2}, constant velocity fields can only model homogeneous translations of the whole domain. By employing \emph{matrices} $M(x,t)$ instead of vectors $v(x,t)$ as in \eqref{eq:matflow2-main} and constructing the final deformation as in \eqref{eq:hom-transform}, constant velocity fields can also model affine deformations.

In order to restrict the space of possible matrices, we require that all $M(x,t)$ are elements of a \emph{matrix group} $G$, i.e., of a continuous subgroup of the general linear group $\GL(4,\R)$. With the usual matrix multiplication as group operation, these groups are also Lie groups with a differential (manifold) structure. 

Then, the time derivative $\frac{\partial}{\partial t} M(x,t)$ in the flow equation \eqref{eq2:matflow} is an element of the tangent space $T_{M(x,t)} G$.  This requires the velocity field $\nu_\theta$ on the right-hand side to be an element of $T_{M(x,t)} G$, which is problematic, as our goal is to parametrize the velocity field by a neural network: For a general manifold, the parametrization of the tangent space depends on the location $M(x,t)$, which is not known when deciding on the structure of the network, and furthermore changes with time.

The key is to restrict the vector fields 
to the space $\mathfrak{g}$ of \emph{right-invariant vector fields} on $G$. An element $\nu(x)=:\nu' \in \mathfrak{g}, \, \nu': G \to TG$ of the space of right-invariant vector fields assigns to each point $p \in G$ a vector in the tangent space $T_pG$, denoted by $\nu'_g$. Most importantly, the right invariance assures that each such vector field~$\nu'$ is uniquely described by its value at the identity $id$. Therefore, the neural network can be constructed to map into the tangent space $T_{\id} G$ of $G$ at the identity.

As an example, consider $SO(2)$, the group of rotations around the origin in $\R^2$. Any such rotation is uniquely parametrized by a rotation angle $\alpha$. Solving the flow equation~\eqref{eq:matflow2-main} with a right-invariant vector field would take any such rotation and increase or decrease the rotation angle by the same amount, irrespective of the angle of the original rotation (Fig.~\ref{fig:left-invariant vector fields on SO2}).

As the velocity field $\nu$ is stationary, but not necessarily constant, this formulation may also capture non-rigid deformations, even when the chosen matrix group only corresponds to (a subgroup) of rigid deformations. 
Setting $G$ to the group of translations~$T(\R)$, this formulation reduces to the previous SVF setting of~\eqref{eq1:flow}.

If $G$ contains the subgroup of translations, then the space of possible deformations is at least as large as the classical SVF approach.
However, choosing a more expressible matrix group may be seen as an over-parametrization of the resulting deformation, as different velocity fields may lead to the same induced deformation. For example, in case of $G=\SE(2)$, a rotation around the origin can either be achieved by a constant field $\nu$ in which only the angular velocity is non-zero, or by a spatially varying ``swirl'' of the translational components of $SE(2)$.

\mypar{Existence and smoothness}
We first justify the use of the generalized SVF/flow equation \eqref{eq:matflow2-main} by showing the existence and smoothness of the solutions. Note that $\nu_{\id}$ maps from the image domain $\Omega$ into the space of right-invariant vector fields $\mathfrak{g}$ \emph{evaluated at $id$}, so that $\nu_{\id}(x)\in T_{\id} G$.

We are interested in a 3D registration setting, so we now consider subgroups of the affine group of $\R^3$ for $G$. Its elements can be represented as $4 \times 4$ matrices.

\begin{theorem}\label{thm:existence}

Let $\Omega$ be compact, let $\mathfrak{g}$ denote the space of right-invariant vector fields on $G \subset \GL (\R, 4)$, and let $\nu: \Omega \to \mathfrak{g}$ be such that its evaluation at the identity $\nu_{\id}$ is Lipschitz continuous.
Then the solution $M$ of \eqref{eq2:matflow} exists uniquely and remains bounded in $C(\Omega,\R^{4\times 4})$ when viewed as a map $t\mapsto M(\cdot,t)$.
\end{theorem}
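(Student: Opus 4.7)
The plan is first to reduce \eqref{eq2:matflow} to a standard ODE in the ambient matrix algebra $\R^{4\times 4}$. Using right-invariance, for any $M\in G$ and $y\in\Omega$, the differential of the right translation $r_M:h\mapsto h\cdot M$ (which is linear in $h$ for matrix groups) satisfies $\mathcal{D}r_M\,V = V\cdot M$. Hence $\nu(y)_M = \nu(y)_{\id}\cdot M$, and \eqref{eq2:matflow} takes the form
\begin{equation*}
\dot M(x,t) \;=\; F(x,M(x,t)),\qquad F(x,M) := \nu_{\id}(M\bar x)\cdot M,\qquad M(x,0)=\id.
\end{equation*}
This is a parameter-dependent ODE in $\R^{4\times 4}$ (with parameter $x\in\Omega$), which I would analyse either pointwise in $x$ or as an ODE in the Banach space $C(\Omega,\R^{4\times 4})$.

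\textbf{Extension.} To apply classical ODE theory, $\nu_{\id}$ must be well-defined (and well-behaved) at arguments $M(x,t)\bar x$ that may leave $\Omega$ as the flow evolves. Because $\Omega$ is compact and $\nu_{\id}$ is Lipschitz, its image is bounded. I would therefore extend $\nu_{\id}$ to a globally bounded and globally Lipschitz map $\tilde\nu_{\id}:\R^4\to T_{\id}G$ via the McShane--Kirszbraun extension theorem, followed by componentwise clipping (which preserves the Lipschitz constant) so that $\|\tilde\nu_{\id}\|_\infty\leqslant C_0$ and $\mathrm{Lip}(\tilde\nu_{\id})\leqslant L$ for some constants $C_0,L>0$. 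Replace $\nu_{\id}$ by $\tilde\nu_{\id}$ in $F$; on the flow $(x,t)\mapsto M(x,t)\bar x$ that starts in $\Omega$, this changes nothing as long as the argument remains in $\Omega$, and the extended ODE is globally well-posed.

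\textbf{Local existence, global extension, uniqueness.} For each fixed $x\in\Omega$, the map $M\mapsto \tilde\nu_{\id}(M\bar x)\,M$ is locally Lipschitz in $M$: indeed, for $M_1,M_2$ in a bounded set,
\begin{equation*}
\|F(x,M_1)-F(x,M_2)\| \leqslant \|\tilde\nu_{\id}(M_1\bar x)\|\,\|M_1-M_2\| + L\|\bar x\|\,\|M_2\|\,\|M_1-M_2\|,
\end{equation*}
so Picard--Lindelöf yields a unique maximal solution $M(x,\cdot)$. The main obstacle is ruling out finite-time blow-up: here the crucial point is that the extended $\tilde\nu_{\id}$ is \emph{bounded}, so $\|\dot M(x,t)\|\leqslant C_0\|M(x,t)\|$, and Gronwall gives $\|M(x,t)\|\leqslant e^{C_0 t}\leqslant e^{C_0}$ for $t\in[0,1]$. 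Thus the solution extends uniquely to the whole interval $[0,1]$ with an $x$-independent bound. (Without the global boundedness of $\tilde\nu_{\id}$, mere Lipschitz continuity would only yield a bound of the form $\|\dot M\|\leqslant L\|\bar x\|\,\|M\|^2+\text{const}\cdot\|M\|$, which can blow up; clipping is what averts this.)

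\textbf{Continuity in $x$ and group membership.} Standard continuous-dependence results for ODEs (e.g.\ applied to $F$ which is Lipschitz in $M$ uniformly on bounded sets and Lipschitz in $x$ since $\bar x$ is) then give that $x\mapsto M(x,t)$ is continuous for every $t\in[0,1]$, so $t\mapsto M(\cdot,t)$ is a curve in $C(\Omega,\R^{4\times 4})$, bounded uniformly by $e^{C_0}$. Finally, because $\tilde\nu_{\id}(M\bar x)$ is valued in $T_{\id}G=\mathfrak g$ and the right-invariant vector field it induces is tangent to $G$ everywhere, the uniqueness part of Picard--Lindelöf ensures that $M(x,t)\in G$ for all $t$, confirming the solution lives in the required group.
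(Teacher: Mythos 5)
Your proof is correct and follows the same basic strategy as the paper's: use right-invariance to rewrite the right-hand side as $\nu_{\id}(M\bar{x})\,M$, obtain a locally Lipschitz and linearly bounded right-hand side, and conclude by Picard--Lindel\"of plus Gronwall, with the tangency argument giving membership in $G$. There are two differences worth noting, one organisational and one of rigour. Organisationally, the paper treats \eqref{eq2:matflow} as a single ODE in the Banach algebra $C(\Omega,\R^{4\times 4})$ equipped with the pointwise supremum norm and invokes the Banach-space-valued Picard theorem, whereas you solve the ODE pointwise in $x$ and then recover continuity of $x\mapsto M(x,t)$ from continuous dependence on parameters; both routes are standard and deliver the same conclusion, though the Banach-space formulation is slightly more economical because continuity in $x$ and the uniform bound come out in one step. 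On rigour, you explicitly extend $\nu_{\id}$ to a globally bounded, globally Lipschitz map before running the argument, which addresses the fact that the arguments $PM(x,t)\bar{x}$ need not remain in $\Omega$ as the flow evolves; the paper's proof bounds $\sup_x|\nu_{\id}(PB(x)\bar{x})|$ by appealing to compactness of $\Omega$ even though $PB(x)\bar{x}$ may leave $\Omega$, so it implicitly relies on exactly such an extension (consistent with the constant boundary extension described in Section~\ref{sec:scaling-squaring}). Your observation that it is the \emph{boundedness} of the extended field, not merely its Lipschitz continuity, that rules out the quadratic growth $\|\dot M\|\lesssim\|M\|^2$ and hence finite-time blow-up is a genuine point that the paper's linear bound \eqref{eq:lin_bound} quietly presupposes.
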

\begin{proof}
See Section~\ref{sec:existence-proof}.
\end{proof}

\subsection{Numerical integration by scaling and squaring}\label{sec:scaling-squaring}
\begin{figure}[tbp]
\begin{minipage}{0.48\textwidth}
\begin{algorithm}[H]
\footnotesize
\caption{Scaling and Squaring} \label{alg:SnQ}
\begin{algorithmic}
\State $v_0 \gets 2^{-n}v$  \hfill \Comment{scaling}
\For{$j \in \{0,...,n-1\}$ }
    \State $v'_j \gets \text{Int}(v_j,x+v_j(x))$
    \State $v_{j+1} \gets  v'_j + v_j$ \hfill \Comment{squaring}
\EndFor
\State \Return $v_{j+1}$  
\vspace{8pt}
\end{algorithmic}
\end{algorithm}
\end{minipage}
\hfill
\begin{minipage}{0.50\textwidth}
\begin{algorithm}[H]
\footnotesize
\caption{Generalized Scaling and Sq.}\label{alg:exSnQ}
\begin{algorithmic}
    \State $\nu_{0} \gets 2^{-n} \nu $ \hfill \Comment{scaling}
    \For{$j \in  \{0,...,n-1\}$ }
        \State $M_j \gets \exp(\nu_{j})$
        \State $\nu'_{j} \gets \text{Int}(\nu_{j},PM_{j}\barx)$
        \State $\nu_{j+1} \gets \log(\exp(\nu_{j}')\cdot M_{j})$ \hfill \Comment{squaring}    
    \EndFor
    \State \Return $\exp(\nu_{j+1})$ 
\end{algorithmic}  
\end{algorithm}
\end{minipage}
\caption{Left: Classical scaling-and-squaring approach for fast numerical integration of the flow equation \eqref{eq1:flow} based on a given velocity field $v$. The interpolation operator $\text{Int}(v,x)$ computes an approximation of $v(x)$. Right: Proposed generalization to the matrix-valued setting \eqref{eq:matflow2-main} for given velocity field $\nu$. Again the interpolation operator $\text{Int}(\nu,x)$ approximates $\nu(x)$.}
\end{figure}
In the classical setting, stationary velocity fields allow the use of the \emph{scaling and squaring} approach~\cite{arsigny2006log} to approximate the solution of the ODE \eqref{eq1:flow} numerically. As can be seen from Alg.~\ref{alg:SnQ}, the first step is essentially a forward Euler step with step size~$2^{-n}$. This is followed by the squaring steps $v_{j+1} \gets v_j' + v_j$ which correspond for $k=n-j$ to the ``decomposition condition''
\begin{equation}
    \phi \left(x, 2^{-k+1} \right) = \phi \left( \phi(x, 2^{-k}), 2^{-k} \right) . \label{eq:decompcondgeneral}
\end{equation}
Intuitively, the time resolution scales \emph{exponentially} in $n$, making this approach computationally attractive.

As we aim at including the solution operator into a network-based optimization scheme, this idea is crucial for limiting the network depth. Therefore, we extend the technique to the setting of matrix groups in order to approximate the solution to \eqref{eq:matflow2-main} effectively in a discretized setting. The complete algorithm is shown in Alg.~\ref{alg:exSnQ}.
The final deformation field at $t=1$ is calculated iteratively, starting with a simple forward Euler step
\begin{align}
   M(x,2^{-n}) &= \exp \left(2^{-n}\nu_{\id} \left(x \right) \right) \label{eq:sc_and_sq}
\end{align}
and iterating over $n$ squaring steps.
The algorithm is based on the ``Exponential discretization'' scheme \eqref{eq:expscheme}, which is further discussed in Section~\ref{sec:decomp-proof}. There, we also prove the convergence of the semi-discrete scheme to the continuous solution of \eqref{eq:matflow2-main}.

For the matrix-valued deformations considered here, the ``decomposition condition'' \eqref{eq:decompcondgeneral} takes the form
\begin{align}
   M(x,2^{-k+1}) &= M \left( PM\left(x, 2^{-k} \right)\barx, 2^{-k}\right) M\left(x, 2^{-k}\right) ,\label{eq:decompcond}
\end{align} 
where $P:\R^4\to\R^3$ denotes the projection which removes the last component.

In order to successfully apply the scaling-and-squaring scheme, it is crucial that the solution of the flow equation satisfies the decomposition condition~\eqref{eq:decompcond}. This is guaranteed by the following theorem.

\begin{theorem}\label{thm:decomp}
    Let $M$ be a solution of the flow equation \eqref{eq2:matflow}. Assume that $\Omega\subset\R^3$ is compact and that the velocity term $\nu : \Omega \to \mathfrak{g}$ satisfies $\nu_{\id} \in C^1(\Omega, \R^{4\times 4})$.
    Then $M$ satisfies the decomposition condition 
    \begin{align}
       M(x,2T)   = M(PM(x, T)\Bar{x}, T)M(x, T) \quad \text{for all } T>0 . \label{eq:decomp}
    \end{align}
    Furthermore, when viewed as a map $t\mapsto M(\cdot,t)$,
    it holds that $M~\in~\mathcal{C}^2([0,1],C(\Omega,\R^{4\times 4}))$.
where we identify $M(x,t)$ with $M(t)(x)$ as required.
\end{theorem}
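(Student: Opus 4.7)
The plan is to prove the decomposition condition \eqref{eq:decomp} by a uniqueness argument based on Thm.~\ref{thm:existence}, and then to read off the $C^2$-regularity directly from the flow equation \eqref{eq2:matflow} together with the assumed $C^1$-regularity of $\nu_{\id}$.

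For the first part, I would fix $x \in \Omega$ and $T > 0$, set $y := PM(x,T)\bar{x}$, and introduce the two candidates
\begin{align*}
\tilde{M}(x,t) &:= M(x,\,t+T), \\
N(x,t) &:= M(y,t)\,M(x,T).
\end{align*}
Both agree at $t=0$, since $M(y,0) = \id$ yields $N(x,0) = M(x,T) = \tilde{M}(x,0)$. The goal is then to show that both satisfy the same ODE of the form $\partial_t Z = \nu(PZ\bar{x})_{Z}$, so that the uniqueness statement of Thm.~\ref{thm:existence} forces $N \equiv \tilde{M}$; setting $t = T$ produces \eqref{eq:decomp}.

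The key step is differentiating $N$ in $t$ and exploiting right-invariance. For right-invariant vector fields on a matrix group one has $\nu(z)_A = \nu(z)_{\id}\cdot A$, so the flow equation reads $\partial_t M(x,t) = \nu_{\id}(PM(x,t)\bar{x})\,M(x,t)$. Differentiating $N$ in $t$ through the first factor gives
\[
\partial_t N(x,t) \;=\; \nu_{\id}\bigl(PM(y,t)\bar{y}\bigr)\,M(y,t)\,M(x,T) \;=\; \nu_{\id}\bigl(PM(y,t)\bar{y}\bigr)\,N(x,t).
\]
The crucial identity to plug in is $M(y,t)\bar{y} = N(x,t)\bar{x}$, which follows because $\bar{y} = M(x,T)\bar{x}$ (this in turn uses that $M(x,T) \in G \subset \GL(\R,4)$ acts as an affine transformation and therefore preserves the homogeneous coordinate). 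With this substitution the right-hand side collapses to $\nu(N(x,t)\bar{x})_{N(x,t)}$, matching the ODE satisfied by $\tilde{M}$, and uniqueness closes the argument.

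For the regularity claim, I would work directly from $\partial_t M(x,t) = \nu_{\id}(PM(x,t)\bar{x})\,M(x,t)$. Thm.~\ref{thm:existence} provides $M \in C(\Omega \times [0,1],\R^{4\times 4})$ together with uniform bounds, so the right-hand side is continuous in $(x,t)$, i.e.\ $\partial_t M \in C(\Omega\times[0,1],\R^{4\times 4})$. Since $\nu_{\id} \in C^1$, a second differentiation via the chain rule yields
\[
\partial_t^2 M(x,t) \;=\; D\nu_{\id}(PM(x,t)\bar{x})\bigl[P\,\partial_t M(x,t)\,\bar{x}\bigr]\,M(x,t) \;+\; \nu_{\id}(PM(x,t)\bar{x})\,\partial_t M(x,t),
\]
which is again continuous in $(x,t)$. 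Uniform estimates over the compact set $\Omega$ then upgrade pointwise differentiability in $t$ to differentiability of $t \mapsto M(\cdot,t)$ as a $C(\Omega,\R^{4\times 4})$-valued map, yielding $M \in C^2([0,1],C(\Omega,\R^{4\times 4}))$.

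The main obstacle I anticipate is the bookkeeping between the homogeneous lift $\bar{x}$, the projection $P$, and the right-invariance formula on the matrix group, specifically making the identity $M(y,t)\bar{y} = N(x,t)\bar{x}$ watertight using only membership of $G$ in an affine subgroup. A secondary technical point is that $y = PM(x,T)\bar{x}$ need not lie in $\Omega$ for arbitrary $T$, so either $\nu_{\id}$ must be understood as extended Lipschitz-continuously to a sufficiently large neighborhood, or $\Omega$ must be taken invariant under the induced deformations; once such an extension is fixed, the remainder is a direct ODE computation.
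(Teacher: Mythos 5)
Your argument is correct in substance, but it takes a genuinely different route from the paper. You prove \eqref{eq:decomp} as the classical semigroup property of an autonomous flow: both $t\mapsto M(x,t+T)$ and $t\mapsto M(PM(x,T)\bar{x},\,t)\,M(x,T)$ solve the same ODE $\partial_t Z=\nu_{\id}(PZ\bar{x})Z$ with initial value $M(x,T)$, and uniqueness forces them to coincide; the identity $\overline{PM(x,T)\bar{x}}=M(x,T)\bar{x}$ you need is exactly what the restriction of $G$ to a subgroup of the affine group (last row $(0,0,0,1)$) provides. The paper instead proves the statement constructively: it establishes $M\in\mathcal{C}^2$ first, shows that a forward Euler scheme converges to $M$ in $\mathcal{B}=C(\Omega,\R^{4\times 4})$, constructs an exponential discretization $M_{\expon}$ that satisfies the decomposition \emph{exactly} at every step by virtue of being a right-to-left matrix product, bounds $\|M_{\expon}-M_{\euler}\|_{\banach}=O(\delta_t)$, and passes to the limit. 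What the paper's longer route buys is precisely the convergence analysis of the exponential scheme \eqref{eq:expscheme}, which underlies the scaling-and-squaring algorithm (Alg.~\ref{alg:exSnQ}); your route is shorter and cleaner for the theorem as stated but yields no information about the numerical scheme. One point you should tighten: Thm.~\ref{thm:existence} as stated gives uniqueness only for the $\banach$-valued ODE with initial condition $\id$, whereas you need uniqueness of the pointwise ODE in $\R^{4\times 4}$ for fixed $x$ with the non-identity initial value $M(x,T)$; this does follow from the same local Lipschitz estimate \eqref{eq: lipschitz} via Picard--Lindel\"of, but it is an extension of the cited statement rather than an application of it, and it inherits the domain issue you already flag (that $PM(x,t)\bar{x}$ may leave $\Omega$, so $\nu_{\id}$ must be extended Lipschitz-continuously), a caveat the paper's own proof shares. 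Your sketch of the $C^2$ regularity by differentiating the ODE twice matches the paper's computation of $M''$, modulo the paper's more careful Fr\'echet-derivative bookkeeping needed to upgrade pointwise-in-$x$ statements to the $C(\Omega,\R^{4\times4})$-valued setting.
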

\begin{proof}
See Section~\ref{sec:decomp-proof}.
\end{proof}

When applying the scaling-and-squaring scheme for integrating the flow equation, there are some implementation details to consider:

\begin{itemize}
\item \mypar{Network depth} Each squaring operation doubles the effective resolution in the synthetic time variable. Setting $n=0$ reduces the method to the direct/non-flow-based approach, where the neural network directly outputs the values for the displacement vector field. Larger values of $n$ yield a better numerical approximation of the ODE, at the cost of network depth and additional computational effort. In this work, $n$ is always set to $7$, which yields an implicit time discretization of $\frac{1}{128}$.
\item \mypar{Interpolation} The scaling-and-squaring approach relies on the efficient evaluation of the solution at the current time and at arbitrary points in space. When discretizing the matrix field on a spatial grid, this requires an interpolation between elements of the given matrix group. As the matrix group is generally curved and non-convex, multi-linear interpolation in the surrounding vector space does not ensure that the matrix-valued fields map to $G$. Instead, after interpolation, the matrix fields would, in general, map into $\R^{4\times 4}$. Therefore, we resort to interpolation on the space of right invariant vector fields on $G$, which carries vector space structure. The use of the matrix logarithm, which can be calculated analytically for $\SE(3)$ and $\SIM(3)$, allows us to carry out the interpolation in the Lie algebra. The result is then mapped back to the matrix group using the matrix exponential. This step guarantees that the interpolated result is still an element of the given matrix group $G$.
\item \mypar{Differentiability} The matrix logarithm and exponential can be implemented in a fully differentiable way; we rely on the implementations for $\SE(3)$ and $\SIM(3)$ in~\cite{teed2021tangent}. 
The necessary interpolation creates an additional numerical error depending on the smoothness of the velocity field. With our approach using matrix-valued fields, specific deformations, such as rigid deformations for $G=\SE(3)$, can be recreated without these numerical approximation errors, which might be advantageous.
\item \mypar{Boundary conditions} Deviating from the usual diffeomorphic approach, we do \emph{not} fix the boundary but constantly extend the velocity field by its boundary values outside of the domain with the value at the border. This allows to reconstruct translations or rigid deformations in the $\SE(3)$ setting exactly.
\end{itemize}

\mypar{Inverse deformation and validation}%
\begin{figure}[tbp]
    \centering
     \subfloat[]{\includegraphics[width=0.48\linewidth]{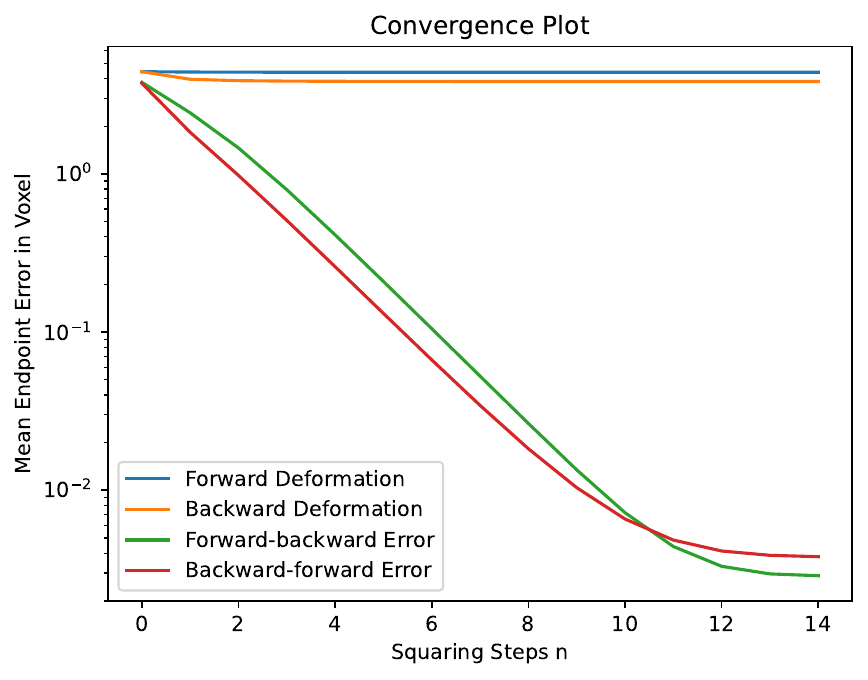}}
     \subfloat[]{\includegraphics[width=0.48\linewidth]{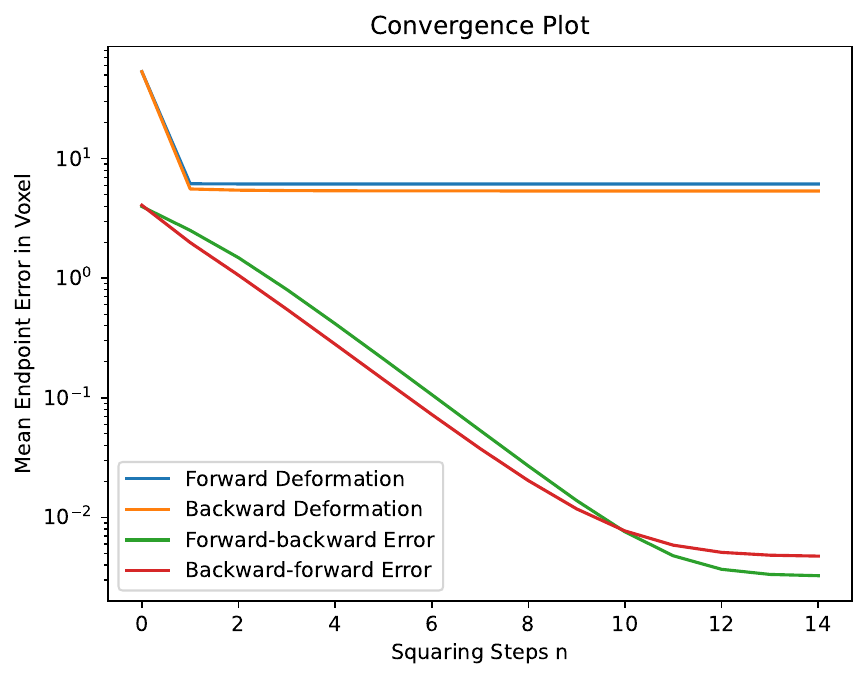}}
    \caption{Residual errors depending on the number of scaling and squaring steps for a given velocity field for (a) classical integration of the flow equation in Euclidean space and (b) integration on $\SE(3)$ using the modified flow equation.}
    \label{fig:scaling-squaring Convergence}
\end{figure}%
Taking the interpolation into account, it is prudent to ask whether the proposed scaling-and-squaring approach generates reasonably accurate deformations in practice. To approach this question, note that if
\begin{align}
    \phi=S(\nu_\theta),
\end{align}
i.e., $\phi$ solves the flow equation for given velocity field $\nu_\theta$, then the inverse $\phi^{-1}$ solves the flow equation for $-\nu_\theta$:
\begin{align}
    \phi^{-1}=S(-\nu_\theta).\label{eq:inverse-property}
\end{align}
This suggests a method for verifying the consistency of the numerical (scaling-and-squaring) integration scheme: For a given velocity field $\nu_\theta$, compute both $\phi_{\nu_\theta}$ as well as $\phi_{-\nu_\theta}$ numerically and measure the residual \emph{forward-backward error}
\begin{align}
    \int_\Omega |(\phi_{\nu_i} \circ \phi_{-\nu_i}- \id) (x)| dx.
\end{align}
Similarly, the backward-forward error can be computed from $\phi_{-\nu_i} \circ \phi_{\nu_i}$. If the discretization is perfectly self-consistent, both residuals should be close to zero.

Fig.~\ref{fig:scaling-squaring Convergence} shows the results depending on the number of integration steps $n$. Up to $n=10$, the relative error decreases exponentially before stagnating, which we assume is caused by an accumulation of interpolation errors. For our common choice $n=7$, the average forward-backward error is below $0.05$ voxels for both the classical flow equation and our proposed matrix-valued approach on $\SE(3)$.

Perfect invertibility is neither expected nor required by the overall approach: After all, the primary goal of employing the flow equation is to formulate an expressive mapping from network outputs to a deformation field. However, the good results suggest that the scheme could be useful for a bidirectional registration approach in which both $\phi$ and $-\phi$ occur in the objective. We investigate this further in the following section. 

\subsection{Similarity term and regularizer}\label{sec:objective}
Regardless of the parametrization of the deformation, the approach~\eqref{eq:reg-param-main} still requires choosing a distance $J$ for the data term as well as a regularizer $R$.

\mypar{Similarity} We consider the global negative normalized cross-correlation $\NCC$  for the data term. In order to compare the deformed template $T$ and reference images $R$ using $\NCC$, we define the mean value $\bar{T}$ as
\begin{align}
    \bar{T}= \frac{1}{|\Omega|} \int_{\Omega} T(x) dx
\end{align}
and $\bar{R}$ analogously. The data term is then given by
\begin{align}
    J(R,T) := 1 - \NCC (R,T) := 1-\frac{\langle T-\bar{T},R-\bar{R} \rangle_{L^2}}{\sqrt{\|T-\bar{T}\|_{L^2}^2 \|R-\bar{R}\|_{L^2}^2+\epsilon}} \in [0,2] .
\end{align}
This corresponds to the negative 
cosine of the angle between the vectorized zero-mean images, and is an established multi-modal similarity term due to its invariance to affine intensity transformations~\cite{lewis1995fast}. We stabilize the division by adding $\epsilon=10^{-5}$ to the square root in the denominator.

\mypar{Regularizer}
To prevent folding caused by the discretization and/or interpolation, we include a regularization term that penalizes Jacobians with determinant smaller than a threshold $\epsilon > 0$ via \cite{mok2020fast}:
\begin{align}
     R_{\epsilon}(\phi):=\int_\Omega \max[-\det(D\phi(x))+\epsilon,0] dx.
\end{align}
In our experiments, we set $\epsilon=0.01$ in order to decrease the likelihood of the first-order optimization method accidentally passing into a region of zero or negative determinant due to the non-infinitesimal step size.

The determinants can be evaluated in the discretized setting on a (transformed) grid by dividing each voxel into five 3-simplices/tetrahedra (Fig.~\ref{fig:simplex-decomposition}) and computing their oriented volume. The latter are then summed with weights of $\frac{1}{6}$ for the outer, and $\frac{1}{3}$ for the inner tetrahedra. This approach is a more isotropic variant of the one used in~\cite{haber2004numerical}, where all tetrahedra are extended from a single vertex.

\tikzset{every picture/.style={line width=0.75pt}} 
\begin{figure}
    \centering
    \begin{tikzpicture}
		[cube/.style={very thick,black},
			grid/.style={very thin,gray},
			axis/.style={->,blue,thick}]

	\draw[cube] (0,2,0) -- (2,2,0); 
        \draw[cube] (2,2,0) -- (2,0,0);
        \draw[dashed] (0,0,0) -- (0,2,0);
        \draw[dashed] (2,0,0) -- (0,0,0);
	\draw[cube] (0,0,2) -- (0,2,2) -- (2,2,2) -- (2,0,2) -- cycle;
	
	\draw[dotted] (0,0,0) -- (0,0,2);
	\draw[cube] (0,2,0) -- (0,2,2);
	\draw[cube] (2,0,0) -- (2,0,2);
	\draw[cube] (2,2,0) -- (2,2,2);

	\draw[dotted] (0,0,2) -- (2,2,2);
	\draw[dotted] (0,0,2) -- (2,0,0);
    \draw[dotted] (0,0,2) -- (0,2,0);
	\draw[dotted] (0,2,0) -- (2,2,2);
	\draw[dotted] (2,2,2) -- (2,0,0);
    \draw[dotted] (2,0,0) -- (0,2,0);
	
\end{tikzpicture}
    \caption{Decomposition of a voxel into one inner and four outer tetrahedra. The edges of the tetrahedra are drawn with dotted lines. Evaluating the oriented volume of all tetrahedra after deformation allows to detect foldings.}
    \label{fig:simplex-decomposition}
\end{figure}
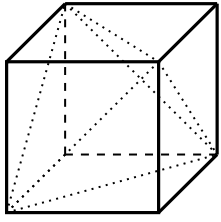

In addition to that, we also employ 
gradient or Hessian regularisation to encourage smoothness. The respective measures are defined as
\begin{align}
R_{g}(\phi):=\int_\Omega \sum_{i=1}^n\|\nabla \phi_i\|^2 dx, \quad R_{h}(\phi):=\int_\Omega \sum_{i=1}^n\|D^2 \phi_i\|^2_F dx,
\end{align}
where $\|\cdot\|_F$ denotes the Frobenius norm, evaluated at each component of the deformation vector field $\phi: \Omega \to \R^n$. Translations lie in the kernel of the sublinear functional~$R_g$ and are not penalized. Analogously, affine transformations are not penalized by~$R_h$.

\mypar{Bidirectional approach}
The simplicity of obtaining both $\phi$ and $\phi^{-1}$ suggests a \emph{bidirectional} approach: Instead of solving only
\begin{align}
    \min_{\theta \in \Rn} \LL(\phi_\theta; I_1, I_2) \quad\text{with}\quad \phi_\theta = S(G(\theta)),
\end{align}
in the bidirectional setting, we minimize
\begin{align}
    \min_{\theta \in \Rn} \half (\LL(\phi_\theta; I_1, I_2) + \LL(\phi_\theta^{-1}; I_2, I_1)) , \quad\text{where}\quad \phi_\theta^{-1} := S(-G(\theta)).\label{eq: bidirectional}
\end{align}
An appealing feature of this approach is that the loss is invariant with respect to the order of $I_1$ and $I_2$, when the deformation is inverted analogously.

\subsection{Network architecture and optimization}\label{sec:network-architecture}
\mypar{Network architecture} The network architecture for generating the velocity field $\nu_\theta$ is visualized in Fig.~\ref{subfig:experiments-2d-architecture-vae}.
The cuboid image domain is uniformly scaled to fit into the unit cube, $\Omega \subset [-1,1]^3$.
The network accepts coordinates as input and returns a right invariant vector field on $G$ from the Lie algebra $\fg$, which is a $k$-dimensional vector space. 
The batched output forms a four-dimensional tensor with dimension $n_h \times n_w \times n_z \times k$. 
Following \cite{sitzmann2020implicit}, the neural field is designed as a (fully-connected) multi-layer perceptron with sinusoidal activation functions. We modified the proposed initialization scheme by initializing the weights of the last linear layer uniformly between $[-10^{-4}, 10^{-4}]$. This ensures that the initial deformation is close to the identity transformation.

The network consists of five layers of 512 neurons each. Additionally, residual connections were added between layers 1 to 5. The neural velocity field is evaluated on an equidistant, rectangular grid to perform its subsequent integration.
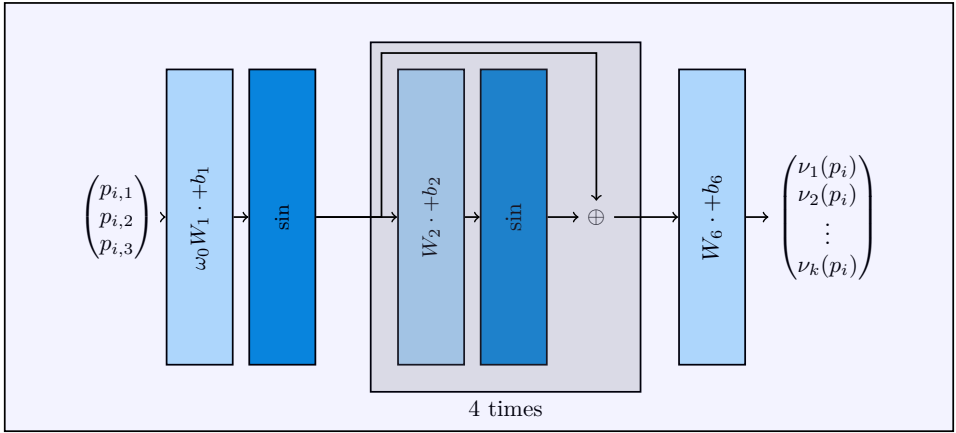
\begin{figure}[tb]
  \centering
  \hspace*{-0.75cm}
  \scalebox{0.87}{
  \begin{tikzpicture}
    \node (x) at (0,0) {$\begin{pmatrix}
        p_{i,1} \\
        p_{i,2} \\
        p_{i,3}
    \end{pmatrix}$};
 
    \node[conv,rotate=90,minimum width=4.5cm] (W1) at (1.25,0) {\small $\omega_0 W_1 \cdot + b_1$};
    \node[pool,rotate=90,minimum width=4.5cm] (sin1) at (2.5,0) {\small$\sin$};
    \node[conv,rotate=90,minimum width=4.5cm] (conv2) at (4.75,0)  {\small $W_2 \cdot + b_2$};
    \node[pool,rotate=90,minimum width=4.5cm] (pool2) at (6,0)  {$\sin$};
    \node (conv3) at (7.25,0) {$\oplus$};
    \node[conv,rotate=90,minimum width=4.5cm] (conv4) at (9,0)  {$W_6 \cdot + b_6$};
    \node (y) at (10.75,0) {$\begin{pmatrix}
        \nu_1(p_{i}) \\
        \nu_2(p_{i}) \\
        \vdots \\
        \nu_k(p_{i})
    \end{pmatrix}$};
    \node[draw,fill=gray, fill opacity=0.2, inner sep=4mm, label=below:4 times,fit=(conv2) (pool2) (conv3)] {};
    \node[draw,fill=blue, fill opacity=0.05, inner sep=10mm, fit=(x) (pool2) (y)] {};
    \draw[->] (x) -- (W1);
    \draw[->] (W1) -- (sin1);
    \draw[->] (sin1) -- (conv2);
    \draw[->] (conv2) -- (pool2);
    \draw[->] (pool2) -- (conv3);
    \draw[->] (sin1) -- (4 cm, 0) -- ++(0,2.5 cm) -| (conv3); 
    \draw[->] (conv3) -- (conv4); 
    \draw[->] (conv4) -- (y); 
    
  \end{tikzpicture}}
  \vskip 6px
  \caption{Network architecture of $G$ for generating the velocity field. The matrix-valued velocity field is described by a neural implicit representation that subsequently drives the evolution of a matrix field, generating a deformation.}
  \label{subfig:experiments-2d-architecture-vae}
\end{figure}

Overall, the network contains between $1.31\cdot 10^7$ and $1.33\cdot 10^7$ trainable weights, depending on the dimension of $\mathfrak{g}$.

\mypar{Low-frequency bias}
Studies such as \cite{rahaman2019spectral} show that neural networks exhibit a low-frequencies bias. To mitigate this effect, we scale the weights of the first layer with a preset scalar hyperparameter $w_0$ as is done in \cite{sitzmann2020implicit}:
\begin{align}
    h_j^1=\sin\left(\sum_{i=1}^3 w_0 \omega^1_{ji} \cdot x_i + b^1_j\right) .
\end{align} 
Here, the variable $x_i$ represents the $i$-th image coordinate, $h^1_j$ denotes the value of the $j$-th hidden neuron in the first layer, while $\omega^1$ and $b^1$ are the weights and biases of the first layer, respectively.

The scaling increases the frequencies of the sinusoids in the first layer during initialization which in return increases the curvature of the loss with respect to the corresponding weight parameters
For $w_0=0$, the network output does not depend on the input coordinates, but only on the biases. This produces channel-wise constant fields, yielding a translation for the SVF and a rigid deformation for the $\SE(\R,3)$ setting. Smaller values of $w_0$ generally yield smoother deformations, whereas larger values allow the reconstruction of finer deformations.

\mypar{Optimization and post-scaling} For the optimization, we used ADAM \cite{kingma2014adam} in full-batched mode, so there was no stochastic uncertainty introduced by sampling. Nonetheless, the method is not fully deterministic, as the network weights are initialized randomly and a stochastic gradient estimation is performed for the interpolation while integrating the velocity fields. As we found that the network tends to yield relatively large initial velocity fields, we introduced a scalar hyperparameter for post-scaling the field from the output of the network. In our experiments, the factor was tuned for the specific application and always lied between $10^{-4}$ and $10^{-1}]$. 
In the $\SE(3)$ and $\SIM(3)$ settings, we used two different factors, one for the rotational part (angular velocities) and one for the translational part, to accommodate their inherently different range.

Hyperparameter tuning was conducted on the learning rate (lr), the post-scaling factor (sf), and $w_0$. Using \cite{optuna_2019}, for each experiment and chosen matrix group, a tree-structured Parzen estimator was utilized and evaluated for 120 trials, each consisting of the registration of five randomly chosen volume pairs. The choice of subsequent evaluations was controlled by the estimation of the Parzen estimator. In the initial search space, $w_0$ was uniformly distributed over the interval $[0,30]$, whereas the scaling factor and learning rate were both logarithmically distributed over $[10^{-5},10^{-1}]$. Results of the tuning process are summarized in Appendix~\ref{sec:hyperparameter-tuning}.

\section{Numerical Results}\label{sec4}

To validate the proposed method and evaluate its performance, we conducted numerical experiments both on synthetic and real-world data.

The implementation for the stationary velocity fields is based on \cite{han2023diffeomorphic}, but with differently chosen hyperparameters for the learning rate, scaling factor after integration, and frequency parameter $\omega_0$, which yielded better results in our implementation. Hyperparameters used in the experiments are listed in Appendix \ref{sec:hyperparameter-tuning}.

\subsection{Datasets}\label{sec:results:datasets}
\mypar{Real-world data}
We benchmarked on the OASIS-1 dataset~\cite{marcus2007open}, which consists of $414$ brain MRI scans of individual patients in different stages of dementia and a healthy control group.
We used 5 pairs for hyperparameter tuning and the rest for evaluation. 
The image dimensions are $160\times 192 \times 224$ voxels. Besides the intensity values, each dataset contains segmentations on $35$ regions, conducted by experts. We used the skull-stripped, affinely pre-aligned volumes provided in the dataset. 

\mypar{Synthetic data}
To validate our method on data with a known ground truth deformation field, which is typically unavailable for real-world data, we generated an artificial dataset based on random non-linear deformation fields.

As our method especially aims at recovering deformations with a large rigid motion, we chose $7^3=343$ equidistantly distributed control points in the image domain $\Omega \subset \mathbb{R}^3$ and applied a rotation around a random axis to these points. Following \cite{Muller}, the axis was sampled randomly from a uniform distribution over the unit sphere; the angle was drawn uniformly distributed over the interval $[-\frac{\pi}{4}, \frac{\pi}{4}]$, producing rotations up to $45$ degrees. The rotation was then augmented by a random translation with its length chosen uniformly distributed over $[0,0.1]$.

To this global rigid deformation, we then added a non-rigid component by perturbing each control point by a random translation with length uniformly distributed over $[0,0.05]$. The dense synthetic deformation vector field $\phi_{syn}$ was ultimately computed by interpolating between the perturbed control points using radial basis functions based on second-order polyharmonic splines~\cite{forti2014efficient}.

To generate the image pairs, the deformation was applied to a zero-padded reference image from the OASIS dataset, to which Gaussian noise with mean zero and standard derivation $0.01$ was added, resulting in a synthetic template image. Some exemplary deformations can be seen in Fig.~\ref{fig:example-defs}.
\begin{figure}[tbp]
    \centering
    \scalebox{0.9}{
    \begin{tabular}{cccc}
      \rotatebox{90}{\quad\quad\quad\quad coronal}&\includegraphics[width=0.3\textwidth]{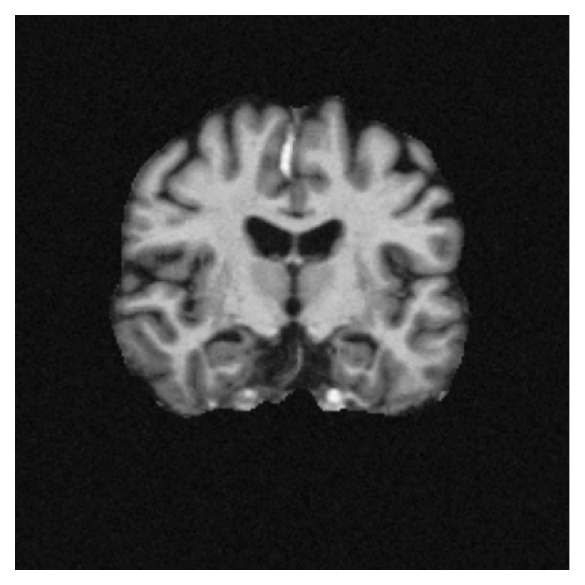} &
      \includegraphics[width=0.3\textwidth]{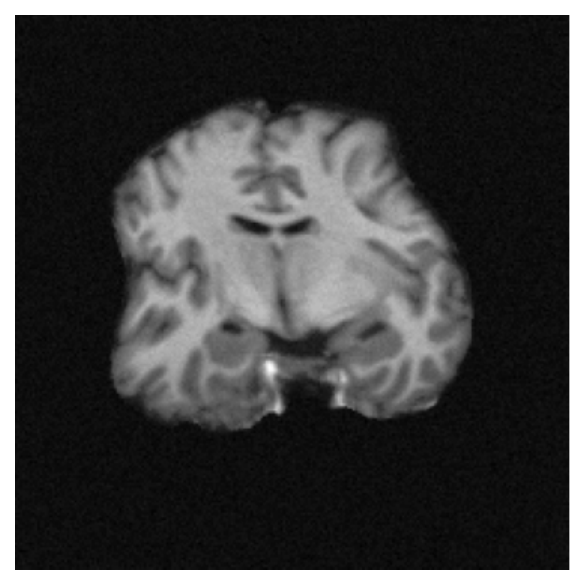} &
      \includegraphics[width=0.3\textwidth]{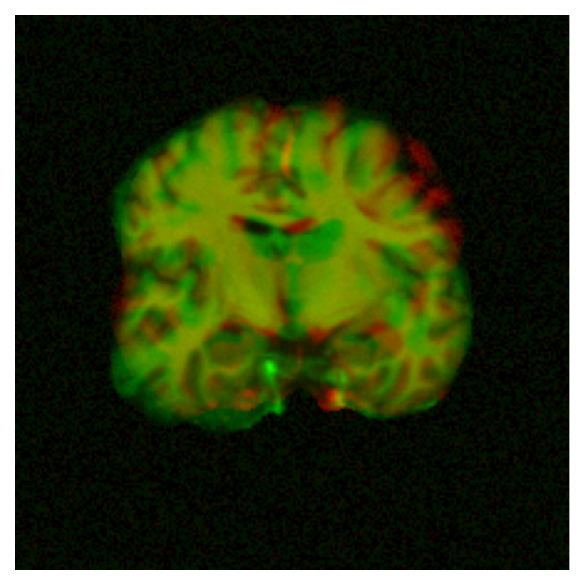} \\
        \rotatebox{90}{\quad\quad\quad\quad axial} & \includegraphics[width=0.3\textwidth]{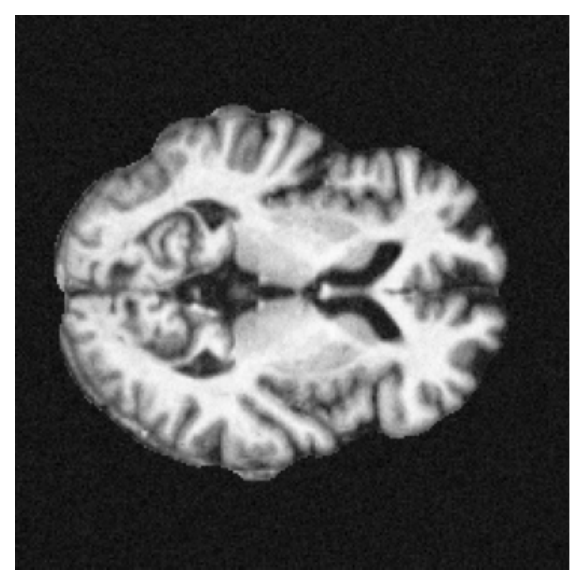} &
        \includegraphics[width=0.3\textwidth]{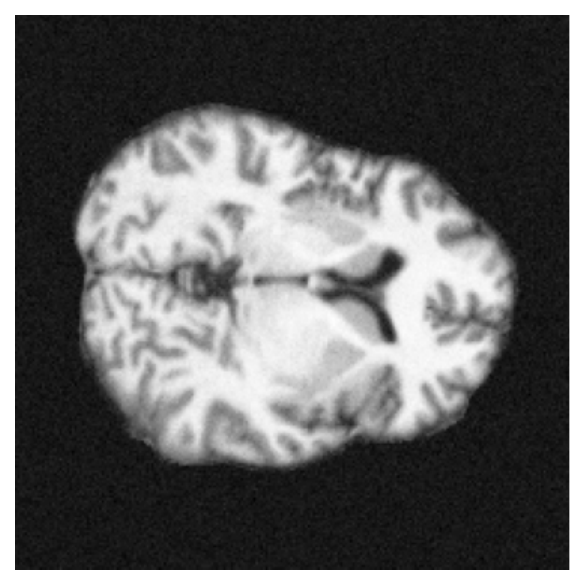} &
        \includegraphics[width=0.3\textwidth]{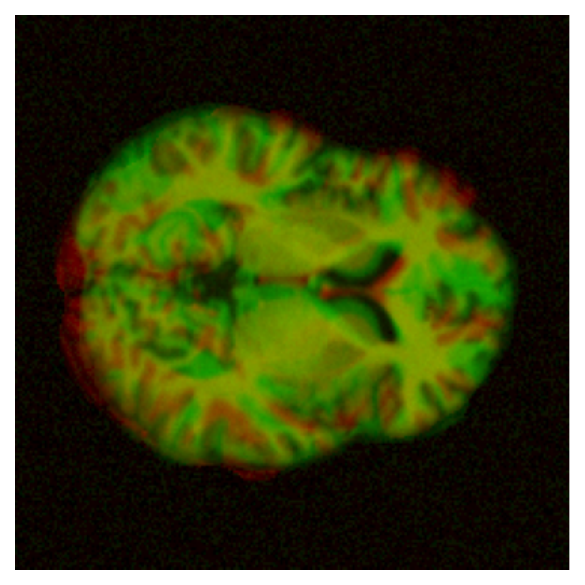}\\
        \rotatebox{90}{\quad\quad\quad\quad central} & \includegraphics[width=0.3\textwidth]{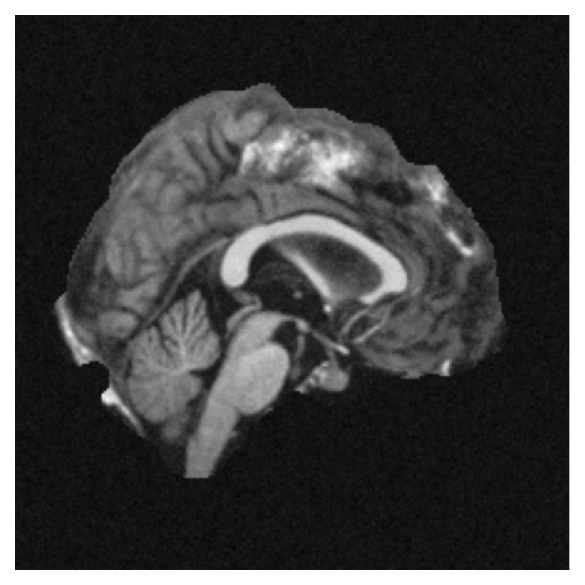} &
        \includegraphics[width=0.3\textwidth]{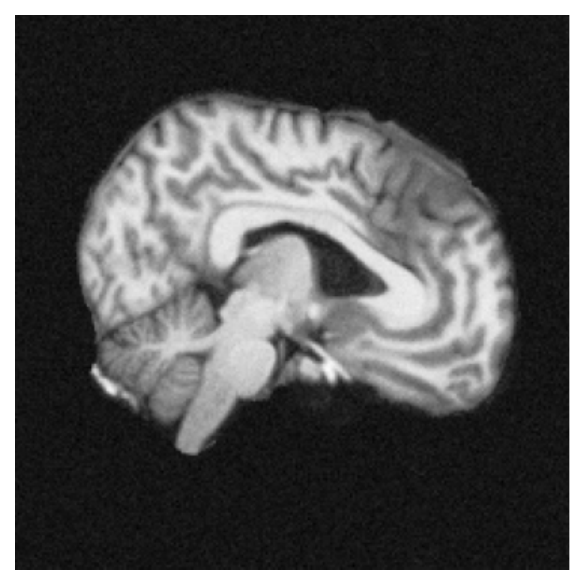} &
        \includegraphics[width=0.3\textwidth]{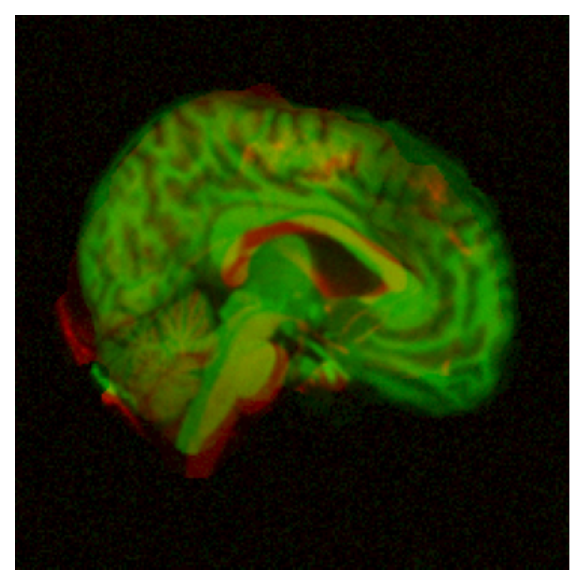} \\
        & reference & generated deformation & overlay
        \end{tabular}}
        \vspace{0.5em}
    \caption{Exemplary slices from the 3D OASIS benchmark dataset (left). In order to accurately evaluate the full registration process, we generated synthetic ground truth deformations and deformed the images (center). The color overlay (right) highlights the differences (reference image in red, deformed image in green). While this example exhibits only a small roto-translational component, the full synthetic dataset contains rotations up to 45 degrees. 
    }
    \label{fig:example-defs}
\end{figure}


\subsection{Benchmark Metrics}\label{sec:benchmark-metrics}
For the evaluation of the results, we used different metrics:
\begin{itemize}
\item \textbf{Root-mean-square error.}
For the synthetic test data with known ground truth, we computed the root-mean-square error (RMSE) between the recovered deformation field and the ground truth deformation field, measured in pixels: 
\begin{align}
    \RMSE_{\Omega_M}(\phi_{syn}, \phi_{reg}) =  \sqrt{\frac{1}{|\Omega_M|}\int_{\Omega_M} |\phi_{syn}-\phi_{reg}|^2} 
\end{align}
In this, we masked the background of the reference image by design of the domain $\Omega_M$ so that the deformation field was compared to the ground truth only at points where image information was available.

\item \textbf{Dice score.}
For the real-world data, no dense ground truth was available. To still get a coarse indication of the registration quality, we followed an approach common in medical image registration when segmentation annotations are available. Given two segmented regions (i.e., subsets of $\Omega$) $S_1$ and $S_2$ on the first and second image and a computed deformation $\phi$, we deformed $S_2$ using $\phi$ and computed the distance to $S_1$ using the (forward) Dice score
\begin{align}
    \DS(S_1, \phi^{-1}(S_2)), \label{eq:dice-score}
\end{align} where
\begin{align}
    \DS(A,B)=\frac{2|A\cap B|}{|A|+|B|}.
\end{align}

As this metric depends on the transformation direction, we also list the backward Dice score $\DS(\phi(S_1),S_2)$. 
\item \textbf{SSIM.}
As an additional metric, we computed the Structural Similarity
\begin{align}
    \SSIM(I_1, I_2 \circ \phi)
\end{align}
between reference and deformed template image, as this is closer to human perception than simpler norm-based metrics~\cite{wang2004image}.
Note that this metric is purely based on intensity values and that there are generally many deformations producing the same deformed image.

\end{itemize}
Lastly, we counted the percentage of sub-pixel simplices of the deformed grid at which the diffeomorphism property $\det D \phi (x) < 0$ was violated; for details, see Section~\ref{sec:objective}.

\subsection{Results}

All benchmarks were implemented in PyTorch 2.3.0 and performed on a 24-core AMD EPYC 74F3 system with 256GB of RAM, 3x NVIDIA A100 and CUDA 12.0.

\mypar{Fitting deformations} We first validated the effectiveness of the proposed approach for expressing arbitrary deformations with large rigid components by reconstructing synthetic deformations generated as described in Section~\ref{sec:results:datasets}. As optimization loss, the squared RMSE was used.
Note that despite being convex in the resultant deformation, the loss term and, thereby, the optimization is non-convex in the network weights $\theta$.

The results in Fig.~\ref{fig:deformation-fitting}, evaluated for 409 synthetically generated deformations, show that the matrix field-based approaches both using $\SE(3)$ and $\SIM(3)$ allow to approximate the deformation field better than with a classical SVF approach. As initial error, we measured the difference between the synthetic deformation and the identity mapping.

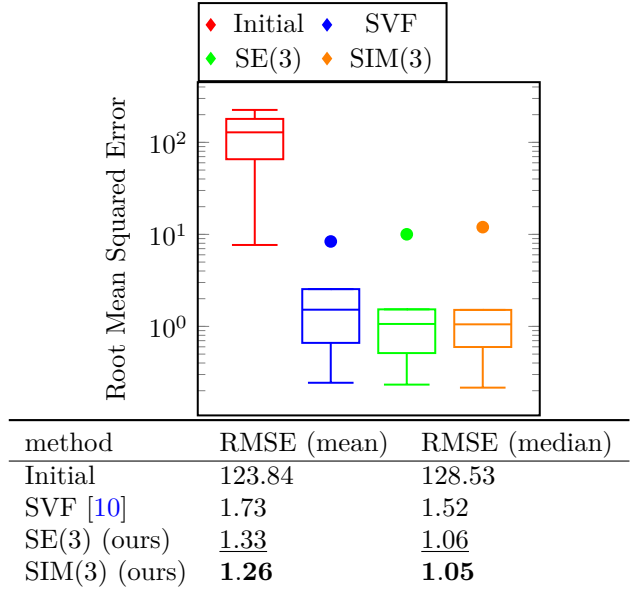
\begin{figure}[t]
  \centering
  \begin{tikzpicture}
  \begin{semilogyaxis}[
  boxplot/draw direction=y,
  height=6cm,
  ylabel={Root Mean Squared Error},
  boxplot={
      %
      draw position={1/5 + floor(\plotnumofactualtype/4) + 1/5*mod(\plotnumofactualtype,4)},
      box extend=0.15,
  },
  x=5cm,
  xtick={0,1,2,...,5},
  x tick label as interval,
  xticklabels={%
      {Velocityfield Representation\\},%
  },
  x tick label style={
      text width=2.5cm,
      align=center
  },
  cycle list={{red},{blue},{green},{orange}},
  custom legend,
            legend style={at={(0,1)},anchor=south west,legend columns=2,
                column sep=0.5em}
]    
    \addplot
    table[row sep=\\,y index=0] {
    data\\
    128.526\\
    65.593\\
    179.858\\
    7.6734\\
    225.170\\
    }; 
    \addlegendentry{Initial}
    \addplot
    table[row sep=\\,y index=0] {
    data\\
    1.519\\
    0.662\\
    2.542\\
    0.244\\
    8.375\\
    }; 
    \addlegendentry{SVF}
    \addplot
    table[row sep=\\,y index=0] {
    data\\
    1.063\\
    0.513\\
    1.532\\
    0.233\\
    10.021\\
    };
    \addlegendentry{$\SE(3)$}
    \addplot
    table[row sep=\\,y index=0] {
    data\\
    1.053\\
    0.597\\
    1.511\\
    0.216\\
    11.986\\
    };
    \addlegendentry{$\SIM(3)$}
  \end{semilogyaxis}
  \end{tikzpicture}
  \hspace{2cm}
    \begin{tabular}{lll}
    \toprule
     method & RMSE (mean) & RMSE (median)\\
     \hline
     Initial & 123.84 & 128.53 \\
     SVF \cite{han2023diffeomorphic} &   1.73 & 1.52 \\
     SE(3) (ours) & \underline{1.33} & \underline{1.06} \\
     SIM(3) (ours) & $\mathbf{1.26}$ & $\mathbf{1.05}$ \\
     \bottomrule
    \end{tabular} 
    \caption{
    RMSE of fitting large synthetic deformations using the classical SVF approach and the proposed approach with two different matrix fields $\SE(3)$ and $\SIM(3)$. The method is evaluated for $409$ random deformations, and the RMSE is computed after $120$ iterations each. The proposed parametrization aids the optimization process in finding a more precise representation of the deformation, resulting in a lower RMSE. Best scores are typeset in bold, second-best scores are underlined.}
    \label{fig:deformation-fitting}
\end{figure}

\mypar{Registering synthetic deformations} 
Figure~\ref{fig:self-warped} shows the results for the pairwise image registration on a set of 409 image pairs with synthetic deformations. While the SVF approach greatly struggled to reconstruct the deformations here, presumably due to their large rigident, the proposed use of matrix fields allowed us to robustly approximate the correct global deformation.

\begin{figure}
  \centering
  \begin{tikzpicture}
  \begin{semilogyaxis}[
  boxplot/draw direction=y,
  ylabel={Root Mean Squared Error},
  height=6cm,
  boxplot={
      %
      draw position={1/5 + floor(\plotnumofactualtype/4) + 1/5*mod(\plotnumofactualtype,4)},
      box extend=0.15,
  },
  x=5cm,
  xtick={0,1,2,...,5},
  x tick label as interval,
  xticklabels={%
      {Velocityfield Representation\\},%
  },
  x tick label style={
      text width=2.5cm,
      align=center
  },
  cycle list={{red},{blue},{green},{orange}},
  custom legend,
            legend style={at={(0,1)},anchor=south west,legend columns=2,
                column sep=0.5em}
]    
    \addplot
    table[row sep=\\,y index=0] {
    data\\
    45.43280553817749\\
    24.82961869\\
    60.60968828\\
    2.557808667421341\\
    74.50289344787598\\
    }; 
    \addlegendentry{Initial}
    \addplot
    table[row sep=\\,y index=0] {
    data\\
    45.63654461247455\\
    24.8319677\\
    60.75529373\\
    2.630602305629001\\
    78.77947816120819\\
    }; 
    \addlegendentry{SVF}
    \addplot
    table[row sep=\\,y index=0] {
    data\\
    3.0103998910658425\\
    2.84435738\\
    3.33117707\\
    2.5981549975224487\\
    65.32140795321553\\
    };
    \addlegendentry{$\SE(3)$}
    \addplot
    table[row sep=\\,y index=0] {
    data\\
    3.0718562758292727\\
    2.9131065\\
    3.49729799\\
    2.6057837298517823\\
    51.32890446651967\\
    };
    \addlegendentry{$\SIM(3)$}
    \end{semilogyaxis}
    \end{tikzpicture}
    \hspace{2cm}
    {\small
    \begin{tabular}{lll}
    \toprule
     method & RMSE (mean) & RMSE (median) \\
     \hline
     Initial & 43.35 & 45.43 \\
     SVF \cite{han2023diffeomorphic} &  43.47 & 45.64 \\
     SE(3) (ours) & \underline{6.25} & $\mathbf{3.01}$ \\
     SIM(3) (ours) &  $\mathbf{4.87}$ & $\underline{3.07}$ \\
     \bottomrule
    \end{tabular}}
    \caption{Registration of synthetically generated deformations using the classical SVF approach and the proposed approach with two different matrix fields $\SE(3)$ and $\SIM(3)$. The RMSE in the deformation after $120$ iterations is shown, computed for $409$ random deformations. In this particular setting, SVF seems to fail at capturing the rotations, while the parametrization with matrix groups produces deformations close to the ground truth in most cases.}
    \label{fig:self-warped}
\end{figure}
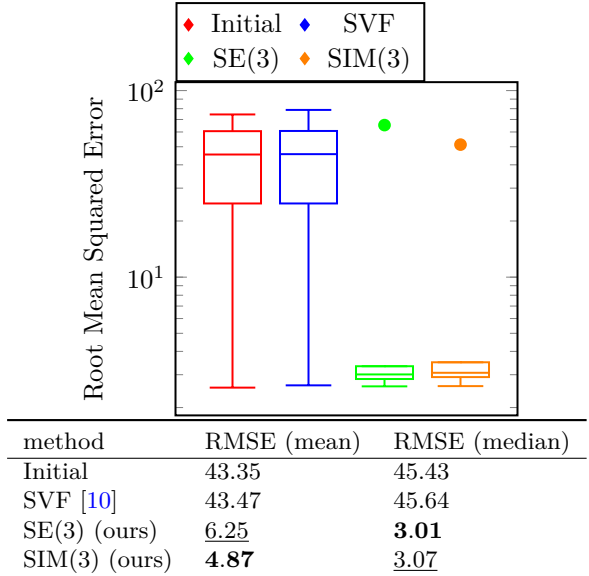

This behavior is clearly visible in Fig.~\ref{fig:alignments-iso} and Fig.~\ref{fig:displacement fields} shown in the introduction, in which the resultant images as well as an exemplary slice of the deformation field are depicted under small and large deformations. It illustrates how the matrix group approach is able to capture even large rotational deformations, whereas the use of an SVF tends to align the intensity values by local deformations instead of a global rotation, resulting in the large RMSE observed in Fig. \ref{fig:self-warped}.

To further examine at which point the SVF approach breaks down, we repeated the experiment by varying the magnitude of the synthetically generated deformations and measured the RMSE of the recovered deformations.
The experiments were performed both with and without a global rotation, and the results are presented in Fig.~\ref {fig:warping}.

\begin{figure}[tb]
    \centering
    \subfloat[without global rotation]{
    \includegraphics[width=0.49\linewidth]{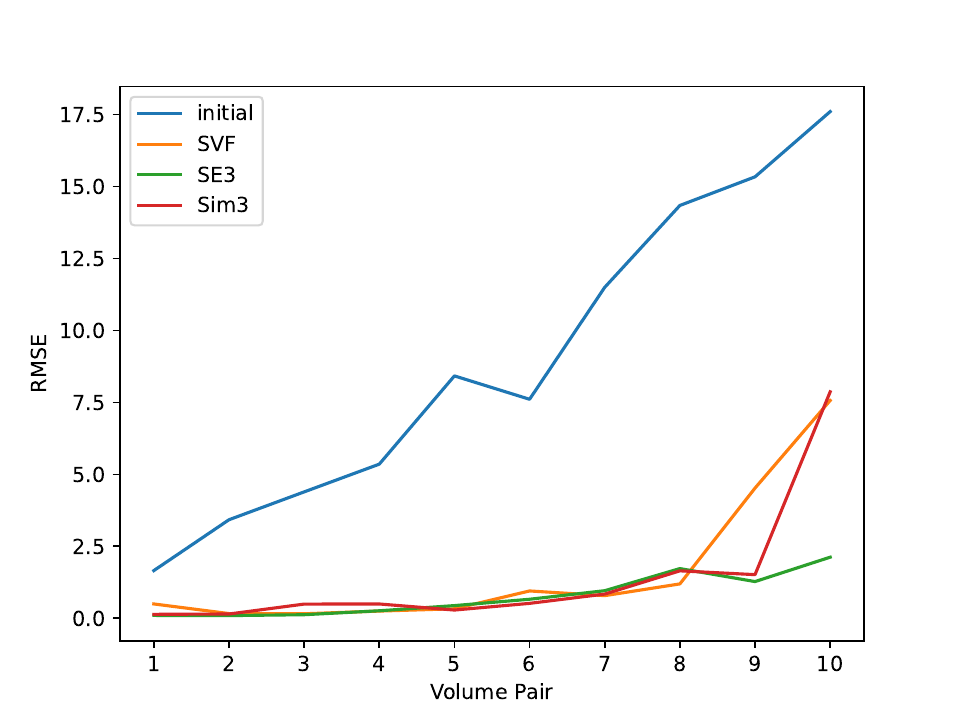}}
    \subfloat[with global rotation]{
    \includegraphics[width=0.49\linewidth]{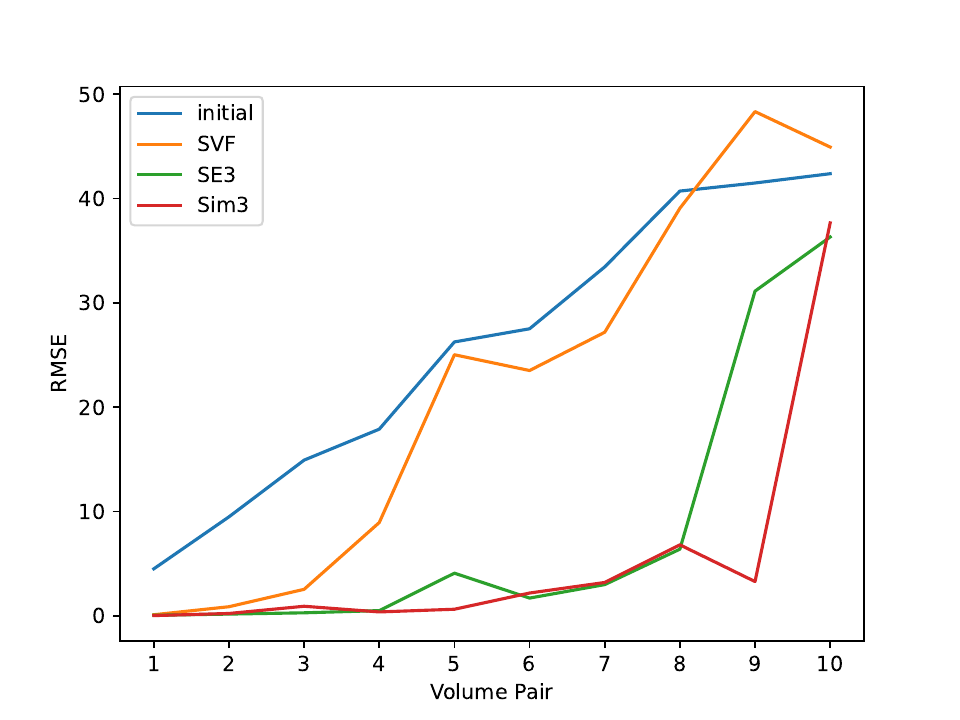}}
    \caption{Registration results of OASIS image pairs with synthetically generated nonlinear deformations of increasing magnitude. The RMSE between the recovered deformation field and the ground truth is shown.
    In the left figure, only a global translation and local perturbances are performed. The global translation increases from 0-10\% of the domain width from left to right. In the right figure, a global rotational component is further added whose angle increases from 0-90\%.
    Without a global rotation, all methods work reasonably well even for larger translations. In the presence of global rotations, however, the classical SVF breaks down already at smaller angles, whereas even large rotations up to~70° only have a small impact on the RMSE when using the proposed parametrization with matrix-valued fields (SE(3), SIM(3)), .}
    \label{fig:warping}
\end{figure}

To this end, we generated 10 rotation-free volumes by fixing the rotation angle to zero and linearly scaling the magnitude of the global translation as well as the local non-rigid deformation, such that the global translation was scaled between 0\% (volume pair 1) and 10\% (volume pair 10) of the domain width.
We then generated 10 more volumes, in which a rotation was added, and the angle was linearly increased from 0° (volume pair 1) to 90° (volume pair 10).
All 20 volumes were generated separately, that is, with different random components.

Fig.~\ref {fig:warping}(a) shows that the SVF, as well as both matrix-based methods, performed reasonably well even under large translations. 
In the presence of additional global rotations, however, the picture changed considerably. As can be seen in Fig.~\ref{fig:warping}(b), the SVF-based approach broke down completely once the rotation angle exceeded 30°, ultimately yielding deformations that hardly had a lower RMSE than the identity. The matrix-valued approaches, in contrast, returned stable results close to the ground truth even under larger rotations, and only started deteriorating at rotations with an angle greater than 70°.

We conclude that at least in these experiments, the matrix group approach achieved the goal of improving the network's ability to find deformations with large rotational components, and clearly outperformed the approach solely based on SVFs.
In all experiments, we used NCC as similarity term and the Hessian regularisation $R_h$, scaled with factor $2\cdot 10^{-5}$.

\mypar{Registering real-world OASIS data}
Finally, we tested our approach on the task of inter-patient registration with real-world data.
We used NCC as image similarity metric, gradient regularization $R_g$ on the final displacement field, scaled with a factor of 0.1, and the folding prevention term $R_\epsilon$ with a factor of~200.
We compared the classical SVF method with the matrix field-based methods for $\SE(3)$ and $\SIM(3)$, each in forward and bidirectional mode \eqref{eq: bidirectional}. Furthermore, we included the SyN method \cite{avants2008symmetric} with setting~$(100,100,25)$ for the maximal number of optimization steps in the pyramid in the comparison. 
The bidirectional approach improves the backward Dice score and the mean Dice score in the parametrizations with an SVF and the matrix field approaches, as expected.

For the experiments, we registered $150$ randomly selected image pairs from the OASIS-1 dataset. Additionally, we benchmarked on a task with larger deformations starting from the unaligned brain scans from \cite{marcus2007open}. We used FreeSurfer \cite{spf2007} for skull stripping the unprocessed volumes.
As there are no ground truth deformation fields available, we used the Dice score as a proxy.
For the second task, we furthermore list the SSIM metric.

The results of the benchmark are presented in Table~\ref{tab:table} and Table~\ref{tab: registration OASIS}. They show that, in general, the matrix-group approach performed similarly well as the classical SVF approach, both for the pre-aligned images as well as for the unaligned brain scans. In the unaligned case, that is, for larger deformations, both approaches also improve the performance on the Dice score compared to SyN~(Table~\ref{tab: registration OASIS}).

\begin{table}[tbp]
    \centering
    \resizebox{13.2cm}{!}
{\begin{tabular}{lllll} %
    \toprule
     method & Dice score $\pm$ std.~dev. ($\uparrow$) & Dice score backwards ($\uparrow$) & mean Dice score ($\uparrow$) & $\det(J_\phi)\leq 0$ ($\downarrow$)\\
     \hline
     initial  &  0.5838 $\pm$ 0.0610 & 0.5838 $\pm$ 0.0610 & 0.5838 $\pm$ 0.0610 &  - \\
     SVF \cite{han2023diffeomorphic} &   0.8153 $\pm$ 0.0231 & 0.8002 $\pm$ 0.0233 & 0.8077 $\pm$ 0.0223 & $2.35\cdot10^{-6}$ \\
     bidir. SVF &  0.8149 $\pm$ 0.0235 & \underline{0.8160 $\pm$ 0.0210} & \underline{0.8154 $\pm$ 0.02197} &  $3.58\cdot10^{-6}$ \\
     SE(3) [ours] & $\mathbf{0.8159 \pm  0.0231}$ & 0.8001 $\pm$  0.0233 & 0.8080 $\pm$ 0.0229 & $1.85\cdot10^{-6}$\\
     bidir. SE(3) [ours] & 0.8148 $\pm$ 0.0238 & 0.8159 $\pm$ 0.0216 & 0.8153 $\pm$ 0.0224 & $2.90\cdot10^{-6}$\\
     SIM(3) [ours] & \underline{0.8157 $\pm$ 0.0233} & 0.8009 $\pm$  0.0233 & 0.8083 $\pm$ 0.0231 & $\underline{1.95\cdot10^{-6}}$\\
     bidir. SIM(3) [ours] &  0.8151 $\pm$ 0.0236 & \textbf{0.8162 $\pm$ 0.0212} & $\mathbf{0.8156 \pm 0.0221}$ & $3.20\cdot10^{-6}$ \\
     SyN \cite{avants2008symmetric} & 0.8062  $\pm$ 0.0248 & 0.8069 $\pm$ 0.0235  & 0.8066 $\pm$ 0.0241  &  $\mathbf{0}$\\
     \bottomrule
     \end{tabular}
}   
    \caption{Results on inter-patient 3D registration on pre-aligned and skull-stripped volumes of the OASIS-1 dataset. The classical (SVF) and proposed the matrix-valued approaches ($\SE(3)$ and $\SIM(3)$) all yield similar Dice scores after registration. A bidirectional approach improves the backward Dice score and the mean of both scores.
    }
    \label{tab:table}
\end{table}

\begin{table}[tbp]
    \centering
    \scalebox{1}{
    \begin{tabular}{llll} 
    \toprule
     method & Dice score, forward ($\uparrow$) & SSIM, forward ($\uparrow$) & fraction with $\det(J_\phi)\leq 0$ ($\downarrow$)\\
     \hline
     initial &  $0.1178 \pm 0.1001$ & $ 0.7987 \pm 0.011$  & - \\
     SVF \cite{han2023diffeomorphic} &  $\mathbf{0.7278 \pm 0.0628}$ & $0.9444 \pm  0.0170$ & $\mathbf{4.04 \cdot 10^{-7}}$\\
     $\SE(3)$  & $\underline{ 0.7223 \pm  0.1012}$  & 
     $\underline{0.9479 \pm  0.0158}$  & $7.65 \cdot 10^{-7}$\\
     $\SIM(3)$ & $0.6199\pm 0.2416$ & $0.9367 \pm  0.0187$ & $\underline{4.75 \cdot 10^{-7}}$\\
     SyN \cite{avants2008symmetric} &  $0.4987 \pm 0.3081$ & $ \mathbf{0.9572\pm 0.0266}$  & $4.10 \cdot 10^{-6}$ \\ 
     \hline
    \end{tabular}}
    \caption{Results on unidirectional registration of unaligned OASIS brain scans. SVF and SE(3) work comparably well in terms of Dice score, beating the SyN method by a wide margin. Note that Dice score and SSIM provide only limited insight into the quality of the actual deformation field, compare Fig~\ref{fig:displacement fields} and Fig~\ref{fig:warping}.
    }
    \label{tab: registration OASIS}
\end{table}

These results stand in contrast to the ones on the synthetic examples presented in Fig.~\ref{fig:self-warped} and Fig.~\ref{fig:warping}, in which the matrix-based approaches SE(3) and SIM(3) performed significantly better than SVF. The reason for this performance difference likely lies in the nature of the underlying deformations: For the pre-aligned images, there are only marginal global deformations sought, and even in the unaligned case, the global rotational component of the deformation is relatively small.

Furthermore, in the synthetic examples, we were able to compare the recovered and ground truth deformation fields directly. On the OASIS data set (and other real-world image registration data), no dense ground truth deformation is available, which is why we resort to comparing Dice scores and SSIM. As these metrics, however, are only based on the segments and intensity values and thereby not explicitly on the deformation itself, they ultimately provide only limited insight into the quality of the actual deformation field. Therefore, even though SVF generated good results in terms of the deformed images, it might be obscured that the deformation is far from an actually sought ``truth'', see Fig.~\ref{fig:displacement fields} and Fig~\ref{fig:warping}.

\section{Conclusion}\label{sec5}
In this paper, we proposed, discussed and analyzed a novel approach to deformable image registration which extends the concept of integrating stationary velocity fields to matrix groups. We proved the unique existence of a solution to the extended flow equation in Thm. \ref{thm:existence} and derived a scaling-and-squaring algorithm to efficiently approximate this solution, see Alg. \ref{alg:exSnQ}.

Fairly evaluating non-linear image registration methods on real data is notoriously hard, as there is typically no ground truth for the deformation field available. As we observed, popular methods such as SVF can perform well in terms of intensity value based metrics as well as proxy metrics such as the Dice score on segmentations, while the generated deformation field might be far from the ground truth or a desired deformation.

Our experiments indicate that moving velocity field and flow equation from a purely translational appraoch (as in SVF) to the matrix group setting can ameliorate this issue. This is especially the case when the ``true'' deformation field comprises larger affine deformations such as a global rotation.

The matrix-based formulation allows the specification of the velocity field in a chosen matrix group which naturally covers a wider range of deformations. In our approach, we parametrize the matrix group by implicit neural representations, allowing for optimization techniques from machine learning. In future work, it will be interesting to adapt this idea to other network architectures, such as UNet- or Transformer-based approaches.

The formulation with a matrix group gives a fairly general method to choose between different parametrization schemes.
Further paths to explore include using the affine group $\text{Aff}(3)$ as parametrization for the velocity field. The lack of a closed form for the matrix exponential and logarithm in this setting, however, would require a different interpolation strategy or the use of iterative methods, making the optimization more difficult and time-consuming.
Other suitable matrix groups might also be considered.

Another interesting direction of research concerns the extension of matrix-valued fields to approaches with a non-stationary flow such as the LDDMM framework. However, this would require a different numeric integration method than scaling-and-squaring, which was central to our approach.

Overall, we hope that the proposed matrix group approach provides a building block for creating robust registration methods that can cope with large deformations not only in terms of image similarity, but also in terms of the quality of the deformation fields.

\section{Proofs}
\subsection{Proof of Theorem~\ref{thm:existence}}\label{sec:existence-proof}

\begin{proof}
Embedding the manifold of a matrix group in a surrounding vector field space (here $\R^{4 \times 4}$ with the operator norm) allows the use of classical results of ODE theory in this setting.
For the matrix subgroup $G \subset GL(\R,4)$, the space of continuous matrix fields $C(\Omega, G)$ can be embedded in the unitary function Banach algebra 
\begin{equation}
    \mathcal{B}:=C(\Omega,\R^{4\times 4})
\end{equation}
with the corresponding pointwise supremum norm on the compact domain $\Omega \subset \R^k$,
\begin{align}
    \|B\|_{\banach} := \sup_{x \in \Omega}|B(x)|_\frobenius \quad \text{for } B \in \banach , \label{eq:bnormsup}
\end{align}
and pointwise matrix multiplication.

Crucially, this norm is sub-multiplicative with regard to the pointwise multiplication of matrix fields. 
Consider for now $M$ as a 
univariate function of time $M:[0,1] \to \banach$. 
The time-invariant case of Equation \eqref{eq2:matflow} can then be written compactly as
    \begin{align}
    \frac{\partial M}{\partial t}&=f(M)  \label{eq:compact} \\
    M(0)&=id
    \end{align}
    with
    \begin{align}
        f&: \, \mathcal{B} \to \mathcal{B},\\
        f(B)&:=\left(x \mapsto i\left(\nu(PB(x)\Bar{x})\right)_{B(x)}\right),
    \end{align}
    for the canonical embedding $i$ which extends the right invariant vector fields on $G$ to the vector fields on $\R^{4 \times 4}$, and the projection $P:\R^4\to\R^3$ which removes the last component. For the sake of readability, we will omit this embedding in the following.
    
    As $\nu: \R^3 \to \mathfrak{g}$ maps each point to a right-invariant vector field on the manifold $G$, this vector field is uniquely described by its value at the identity of the matrix group:
    \begin{align}
           \nu(P B(x) \barx)_{B(x)}=\nu_{\id}(P B(x) \barx)B(x). \label{eq: identity value} 
    \end{align}
    By sub-multiplicativity of the matrix norm and the compactness of $\Omega$ (by which $\nu_{\id})$ is bounded on $\Omega$), it follows that
    \begin{align}
        \|f(B)\|_\banach = \sup_x |\nu_{\id}(P B(x) \barx)B(x)| \leq \sup_x |\nu_{\id}(P B(x)\barx)| \sup_x |B(x)|= C_\nu \|B\|_\banach ,
        \label{eq:lin_bound}
    \end{align} 
    yielding that $f$ is linearly bounded.
    By Gronvall's inequality \cite{schmidt1974w} it holds
    \begin{align}
        \|M(t)\|_B \leq \|M(0)\|_B e^{C_\nu t} = \|id\|_Be^{C_\nu t}, \label{eq: gronvall}
    \end{align}
    hence, every solution $M(t)$ is bounded for every finite time interval.

    Next, we show that $f$ is locally Lipschitz continuous. For $S, T \in \banach$, it holds
    \begin{align}
        \|f(S)-f(T)\|_\banach &= \sup_{x \in \Omega} |\nu(P S(x)\Bar{x})_{S(x)}-\nu(P T(x)\Bar{x})_{T(x)}|_\frobenius.
    \end{align}
    Using \eqref{eq: identity value} and adding zero, we obtain
    \begin{align}
        &= \sup_{x \in \Omega} |\left(\nu_\id(P S(x)x)-\nu_\id(P T(x)\Bar{x})\right)S(x)-\nu_\id(P T(x)\Bar{x})(T(x)-S(x))|_\frobenius.
    \end{align}
    By the triangle inequality and the boundedness of $\nu_{\id}$, we can estimate
    \begin{align}
        &\leq \sup_{x \in \Omega} |\left(\nu_\id(P S(x)\Bar{x})-\nu_\id(P T(x)\Bar{x})\right)S(x)|_\frobenius + \sup_{x \in \Omega} C_\nu |T(x)-S(x)|_\frobenius.
    \end{align}
    Using sub-multiplicativity,
    \begin{align}
        &\leq \sup_{x \in \Omega} |\nu_{\id}(P S(x)\Bar{x})-\nu_{\id} (P T(x)\Bar{x})|_\frobenius\sup_{x' \in \Omega}|S(x')|_\frobenius + C_\nu\|T-S\|_{\banach},
    \end{align}
    and then Lipschitz continuity of $\nu_{\id}$, we arrive at
    \begin{align}
        &\leq  L_{\nu} \sup_{x \in \Omega} |PS(x)\Bar{x}-PT(x)\Bar{x}|\,\|S\|_\banach+ C_\nu\|T-S\|_{\banach}.
    \end{align}
    Finally, using sub-multiplicativity, we conclude
    \begin{align}
        \|f(S)-f(T)\|_\banach&\leq  L_{\nu} \|S-T\|_\banach \, \sup_{x \in \Omega} |\Bar{x}| \, \|S\|_\banach+ C_\nu\|T-S\|_{\banach}\\
        &= (L_{\nu} \sup_{x \in \Omega} |\bar{x}| \, \|S\|_\banach + C_{\nu})\|T-S\|_{\banach}. \label{eq: lipschitz}
    \end{align}
    Hence, as $M(t)$ is bounded for a bounded time interval, $f$ is Lipschitz on some open neighborhood of the image of $M$ over this time.
    By Picard's theorem for Banach space-valued functions \cite{brezis2011functional}, there exists a unique solution $M:[0,1] \to \banach$ of Equation \eqref{eq2:matflow}.
    In fact, as the vector field $\nu(x$ is tangential on $G$ everywhere, it holds, that $M:[0,1] \to C(\Omega, G)$.
    By~\eqref{eq: gronvall}, this solution remains finite for finite time, in particular also on the unit interval $[0,1]$, which is important for the extended flow equation.
\end{proof}

\subsection{Proof of Theorem~\ref{thm:decomp}}\label{sec:decomp-proof}
\begin{proof}
    The proof consists of four steps. Throughout the proof, we will associate $M(t)(x)$ with $M(x,t)$:
    \begin{enumerate}
        \item  We first show that $M\in \mathcal{C}^2([0,1],\banach)$.
        \item We then prove that a semi-discrete forward Euler scheme in time, formulated in the embedding space $\banach$, converges to the solution of the flow equation~\eqref{eq2:matflow}.
        \item We afterwards construct an exponential discretization scheme on the manifold and show convergence towards the Euler scheme. This discretization satisfies the decomposition condition by construction. This scheme also builds the foundation of the scaling-and-squaring approach of Alg.~\ref{alg:exSnQ}.
        \item Finally, we prove that the decomposition condition is preserved under convergence and therefore transfers to the fully-continuous solution of the flow equation.
    \end{enumerate}

\mypar{Calculating $M''$}
    We aim at calculating the second derivative $M''$ of the solution of the extended flow equation with respect to the time.
    To this end, we first calculate the derivative of
    \begin{align}
        f_{\id} : \ &\banach \to \banach,\\
        &B \mapsto \nu_{\id}(P B(\cdot)\bar{\cdot}) \in \banach .
    \end{align} We denote the normed function space of continuous vector-valued functions as $C(\Omega,\R^3)$ with the corresponding supremum norm and apply the decomposition $f_{\id}= f_1 \circ f_2$ with $f_2: \banach \to  C(\Omega,\R^3)$, $B \mapsto P B(\cdot)\bar{\cdot}$ and $f_1: C(\Omega,\R^3) \to \banach$, $V \mapsto \nu_\id( V(\cdot))$.    
    We show that the Fréchet derivative of $f_1$ at $V$ is the left-multiplication
    \begin{align}
    D f_1(V)(h) = (D\nu_\id)(V(\cdot)) h(\cdot). \label{eq:frechetderiv}
    \end{align}
    This follows from
    \begin{align}
        &\sup_{x \in \Omega} |\nu_\id \left ( \left( V+h \right)(x)\right ) -\nu_\id(V(x)) - (D\nu_\id)(V(x)) h(x)|_\frobenius \\
        =  &\sup_{x \in \Omega} |\nu_\id \left( V(x) \right) + (D\nu_\id)(V(x)) h(x) + o(|h(x)|) - \nu_\id(V(x)) - (D\nu_\id)(V(x)) h(x)|_\frobenius\\
        = &\sup_{x \in \Omega} |o(|h(x)|)|. \label{eq:proofthm2:o-term}
    \end{align}
    

    As $\nu_\id \in C^1(\Omega,\R^{4\times4})$, it has a uniformly continuous derivative on the compact domain $\Omega$ and is therefore uniformly differentiable on $\Omega$.
    By definition, this means that~\cite{bartle2000introduction}
    \begin{align}
        \forall \epsilon>0 \, &\exists \delta>0: \, \forall x \in \Omega \ \exists L_x \in L(\R^3,\R^{4 \times 4}):  \nonumber\\
        &\forall h: 0 < |h| \leq \delta :   \frac{|\nu_\id (x+h)- \nu_\id (x) - L_x h|}{|h|} \leq \epsilon.
    \end{align}

     Hence, the convergence of the $o$-term in Eq. \eqref{eq:proofthm2:o-term} holds uniformly for all~$x\in\Omega$, and we can conclude
    \begin{align}
        \|\nu_\id \left ( (V+h)(\cdot) \right ) -\nu_\id(V(\cdot)) - (D\nu_\id)(V(\cdot)) h(\cdot)\|_\banach = o(\|h\|_\banach),
    \end{align}
    which shows the claim in \eqref{eq:frechetderiv}.

    The mapping $f_2$ is linear and therefore coincides with its Fréchet derivative:
    \begin{align}
    (D f_2 (B))(h) = f_2(h) = P h(\cdot)\bar{\cdot}.
    \end{align}    
    Collecting the results, one can conclude from the chain rule for the Fréchet derivative:
    \begin{align}
        (D f_\id(B))(h)
        &=D{f_1}(f_2(B))( D{f_2}(B) (h))\\
        &= D\nu_\id(P B(\cdot)\bar{\cdot}) P h(\cdot) \bar{\cdot} \label{eq:chainrule}.
    \end{align}
    We return to calculating $M''$  using $M'(t)=f(M(t))$, see \eqref{eq:compact}: 
    \begin{align}
        M''(t)
        =  (f \circ M)'(t)
        = D f (M(t))\circ M'(t).\label{eq:Kdef}
    \end{align}
    We can further expand on this by using $M'(t)=f(M(t))$ again:
    \begin{align}
        \ldots &= D f(M(t)) \circ f(M(t)).
    \end{align}
    Equation \eqref{eq: identity value} and inserting the definition of $f$ yield for the multiplication with the identity operator $e: \banach \to \banach, e(h)=h$ that
    \begin{align}
        \ldots&= D (f_\id e)(M(t)) \circ f(M(t)).
    \end{align}
    Using the product rule for bilinear pointwise function multiplication in $\banach$ for the left term and defining the left- and right-multiplication operator as
    \begin{align}
    L:\banach \to \mathcal{L}(\banach,\banach), \, \left( L(g)\right)(h)=g\cdot h\\R:\banach \to \mathcal{L}(\banach,\banach), \, (R(g))(h)= h \cdot g ,
    \end{align}
    we get
    \begin{align}
        \ldots &=  (L(f_\id(M(t))) + R(M(t))(Df_\id(M(t))))\circ f(M(t)).
    \end{align}
    Substituting the chain rule calculated in \eqref{eq:chainrule}, we obtain
    \begin{align}
        \ldots = L(f_\id(M(t)))f(M(t)) + R(M(t))(D\nu_\id(P M(\cdot)\bar{\cdot}) P f(M(t))(\cdot) \bar{\cdot})
    \end{align}
    which is as a combination of continuous functions continuous and, accordingly, ensures 
    $M \in \mathcal{C}^2([0,1],\banach)$.
    Additionally, as $[0,1]$ is compact, the range $M''([0,1])$ is compact and therefore bounded.
    
    \mypar{Euler discretization}
    For some fixed end time $T>0$, consider an equidistant time discretization $\{t_0=0 < t_1 <\ldots < t_n=T\}$ into $n$ intervals of length $\delta_t:=\frac{T}{n}$ and the associated forward-Euler discretization of the embedded flow equation \eqref{eq:compact}
    \begin{align}
        M_\euler (t_{k+1}) &:= M_\euler(t_k) + f(M_\euler(t_k))(t_{k+1}-t_k) \nonumber\\
        M_\euler (t_0) &:= \id
        \label{eq:eulerscheme}.
    \end{align}
    Note that due to the embedding, each $M_\euler(t_k)$ maps from $\Omega$ into $\R^{4\times 4}$.

    Our goal is to show convergence of $M_\euler(t_n)$ to $M(t_n)$, following the proof for the scalar-valued case in \cite[Sect.~2.2, Thm.~2.4]{atkinson2009numerical}. This requires an estimate of the truncation error. For this, we first use the fundamental theorem of calculus for Bochner integrals \cite{liu2015stochastic} twice and rearrange:
    \begin{align}
    M(t_{k+1})&= M(t_k)+ \int_{t_k}^{t_{k+1}} M'(s) ds\\
    &= M(t_k)+ \int_{t_k}^{t_{k+1}} \left(M'(t_k)+ \int_{t_k}^{s} M''(t) dt \right) ds\\
    &= M(t_k)+(t_{k+1}-t_k) M'(t_k) + \int_{t_k}^{t_{k+1}}\int_{t_k}^{s} M''(t)\,dt\,ds.
    \end{align}
    This is equivalent to
    \begin{align}
    M(t_{k+1})-M(t_k)-(t_{k+1}-t_k) M'(t_k)= \int_{t_k}^{t_{k+1}}\left(\int_{t_k}^{s} M''(t) dt\right) ds.
    \end{align}
    Taking the norm on both sides and using the Bochner inequality~\cite{liu2015stochastic} (recall that $\|\cdot\|_\banach$ is the supremum norm as defined in \eqref{eq:bnormsup}), we obtain
    \begin{align}
    \|M(t_{k+1})-M(t_k)-(t_{k+1}-t_k) M'(t_k)\|_\banach &\leq \int_{t_k}^{t_{k+1}}\int_{t_k}^{s} \|M''(t)\|_\banach\,dt\,ds \\
    & \leq \frac{(t_{k+1}-t_k)^2}{2} \max_{t \in[t_k,t_{k+1}]} \|M''(t)\|_\banach.
    \end{align}
    Noting that $M'(t_k)=f(M(t_k))$, this provides a bound for the one-step truncation error of the forward Euler method depending on the norm of the second derivatives $M''$.

    This extends the first step of the convergence proof of the forward Euler method for the scalar-valued case in \cite[Sect.~2.2, Thm.~2.4]{atkinson2009numerical} to the Banach space-valued case; the remainder of the proof is identical if one replaces absolute values with $\|\cdot\|_\banach$. This ensures
    \begin{align}
    \|M(t)-M_\euler(t)\|_\banach \leq \frac{e^{t L_\nu}-1}{L_\nu} \frac{\delta_t}{2} \max_{s \in[0,t]} \|M''(s)\|_\banach,\label{eq:mtmeubound}
    \end{align}
    where $L'_\nu$ is the Lipschitz constant of $f$. Lipschitz continuity of $f$ on some open neighborhood of the image of $M$ was shown in Eq. \eqref{eq: lipschitz} during the proof of Thm.~\ref{thm:existence}. It requires Lipschitz continuity of $\nu_\id$, which is ensured here by the $C^1$ assumption on $\nu_\id$ and compactness of $\Omega$.

 Overall, \eqref{eq:mtmeubound} proves convergence of the (final-time) solution of the forward Euler approximation $M_\euler(T)$ to the solution $M(T)$ of the time-continuous flow equation, i.e.,
    \begin{align}
    M_\euler(T) \to M(T) \quad \text{in $O(\delta_t)$}. \label{eq:eu_convergence}
    \end{align}

\mypar{Exponential discretization}
    For the second step, in order to motivate the construction of the exponential discretization scheme, consider the matrix flow equation~\eqref{eq:matflow2-main} with spatially constant velocity $\mu \in \mathfrak{g}$, starting at $g_0 \in G$:
    \begin{align}
    \frac{\partial M}{\partial t}(x,t) &= \mu_{M(x,t)} \label{eq:1} \\
    M(\cdot, 0) &= g_0. 
    \end{align}
    Due to the right invariance of $\mu$, the solution takes the analytic form (compare~\eqref{eq: integral curve}): 
    \begin{align}
    M(x,t)=\exp(t\mu_{\id})g_0 ,\label{eq: analytic}
    \end{align}
    where $\exp$ denotes the matrix exponential. 
    Applying this idea for each time interval $[t_k, t_{k+1}]$ and to each point $x$ separately, and setting $\mu := \nu(PM(t_k)\bar{x})$ and $g_0:=M(x,t_k)$, we obtain the time-discrete exponential scheme
    \begin{align}
        M_{\expon}(t_{k+1}) &:=  \exp((t_{k+1}-t_k)\nu_\id(P M_{\expon}(t_k)\bar{\cdot})) \, M_{\expon}(t_k) \nonumber\\*
        M_{\expon}(t_{0}) &:= id
        \label{eq:expscheme}.
    \end{align}
    Fixing $x\in \Omega$ and rewriting the exponential scheme evaluated at the endpoint $T$ as a product of matrices, results in
    \begin{align}      
    M_{\expon}(x,T)&=\rprod_{k=0}^{n-1}\exp\left(\delta_t \nu_{\id}\left(P M_{\expon}\left(x,t_k\right) \bar{x}\right)\right), \label{eq: approx}
    \end{align}
    where $\rprod$ denotes the matrix product evaluated from right to left, i.e.,
    \begin{align}
    \rprod_{k=0}^{n-1} A_k := A_{n-1} A_{n-2} \cdots A_1 A_0.
    \end{align}

    We will later need a bound on the norm of $M_\expon$. We use the fact that $|\exp(M)|\leq e^{|M|}$ to derive
    \begin{align}
        |M_{\expon}(x,t_k)| &\leq \prod_{j=0}^{k-1}|\exp\left(\delta_t \nu_{\id}\left(P M_{\expon}\left(x, t_j\right)\bar{x} \right)\right) | \\
        &\leq \prod_{j=0}^{k-1} e^{|\delta_t \nu_{\id}\left(P M_{\expon}\left(x,t_j\right)\bar{x} \right)|} \\
        &= e^{\sum_{j=0}^{k-1} |\delta_t\nu_{\id}\left(P M_{\expon}\left(x,t_j\right)\bar{x} \right)|}.
    \end{align}
    As $\nu_{\id} \in C^1(\Omega, \R^{4\times4})$ is bounded on the compact set $\Omega$ by some $C_{\nu}>0$, we continue
    \begin{align}
        |M_{\expon}(x,t_k)| &\leq e^{\sum_{j=0}^{k-1} \left(\delta_t C_{\nu}\right)}\\
        &= e^{t_k C_{\nu}}\\
                &\leq e^{T C_{\nu}}=: C'_{\nu,T}. \label{eq:expbound}
    \end{align}
    Importantly, this bound depends neither on the time step $\delta_t$ nor on the spatial position~$x$.

    In order to bound the difference between the forward Euler iteration in the previous section and the exponential approach defined by \eqref{eq:expscheme}, we define the error
    \begin{align}
    e_{k}(x) := M_\euler(x,t_k) - M_\expon(x,t_k). \label{eq:ekdef}
    \end{align}
    The proof for bounding the error closely follows the strategy for proving convergence of the classical forward Euler method~\cite{atkinson2009numerical}.    By definition of $M_\euler$ and $M_\expon$,
    \begin{align}
         |e_{k+1}(x)| &= |M_{\euler}(x,t_k) + \delta_t \nu(P M_{\euler}(x,t_k)\Bar{x})_{M_{\euler}(x,t)}  \nonumber\\
         &\quad -\exp\left(\delta_t\nu_{\id}\left(P M_{\expon}\left(x,t_k\right)\Bar{x} \right)\right) M_{\expon}(x,t_k)|.
    \end{align}
    We rewrite the matrix exponential by its Taylor series:
    \begin{align}
         |e_{k+1}(x)| & \leq \bigg| M_{\euler}(x,t_k) + \delta_t \nu_{\id}(P M_{\euler}(x,t_k)\Bar{x}) \, M_{\euler}(x,t_k) \nonumber\\
         &\quad - \sum_{l=0}^\infty \frac{(\delta_t)^l\nu_{\id}\left(P M_{\expon}\left(x,t_k\right)\Bar{x} \right)^l}{l!} M_{\expon}(x,t_k) \bigg| . 
    \end{align}
    Reordering the terms and using the triangle inequality, we obtain 
    \begin{align}
         |e_{k+1}(x)| &= |M_{\euler}(x,t_k)- M_{\expon}(x,t_k)| \nonumber \\
         &\quad+ \delta_t
         | 
         \nu_{\id}(P M_{\euler} (x,t_k)\Bar{x})M_{\euler}(x,t_k)- \nu_{\id}(P M_{\expon}(x,t_k)\Bar{x})M_{\expon}(x,t_k) |  \nonumber \\
         &\quad+ \left|\sum_{l=2}^\infty \frac{(\delta_t)^l\nu_\id\left(P M_{\expon}\left(x,t_k\right)\Bar{x} \right)^l}{l!} M_{\expon}(x,t_k)\right| \\
         &= |e_k(x)| \nonumber \\
         &\quad+ \delta_t |
         f(M_\euler(t_k))(x) - f(M_\expon(t_k))(x) | \nonumber \\
         &\quad+ \left|\sum_{l=2}^\infty \frac{(\delta_t)^l\nu_\id\left(P M_{\expon}\left(x,t_k\right)\Bar{x} \right)^l}{l!} M_{\expon}(x,t_k)\right|. \label{eq:someboundek}
    \end{align}
We rewrite the second term in the sum using the Lipschitz estimate \eqref{eq: lipschitz}:
\begin{align}
         & \delta_t |
         f(M_\expon(t_k))(x) - f(M_\euler(t_k))(x) | \\
\leq & \delta_t (L_{\nu} (\sup_{x \in \Omega} |\bar{x}| \|M_\expon(t_k)\|_\banach + C_{\nu})\|M_\euler(t_k)-M_\expon(t_k)\|_{\banach} \\
\overset{\eqref{eq:expbound}}{\leq} & \delta_t (L_{\nu} (\sup_{x \in \Omega} |\bar{x}| C'_{\nu,T} + C_{\nu}) \|e_k\|_{\banach} \\
=& \delta_t C''_{\nu,T} \|e_k\|_{\banach}
\end{align}
for some constant $C''_{\nu,T}$.

We continue bounding the third term in \eqref{eq:someboundek}:
\begin{align}
    &\quad \left| \sum_{l=2}^\infty \frac{(\delta_t)^l\nu_\id\left(P M_{\expon}\left(x,t_k\right)\Bar{x} \right)^l}{l!} M_{\expon}(x,t_k) \right|\\
    &=(\delta_t)^2\left|\nu_\id \left(P M_{\expon}\left(x,t_k\right)\Bar{x} \right)^2 \sum_{l=0}^\infty \frac{(\delta_t)^l\nu_\id\left(P M_{\expon}\left(x,t_k\right)\Bar{x} \right)^l}{(l+2)!} M_{\expon}(x,t_k)\right|\\
    &\leq (\delta_t)^2\left|\nu_\id \left(P M_{\expon}\left(x,t_k\right)\Bar{x} \right)\right|^2 \sum_{l=0}^\infty \frac{ \left|(\delta_t)^l\nu_\id\left(P M_{\expon}\left(x,t_k\right)\Bar{x} \right)^l \right|}{l!}  |M_{\expon}(x,t_k)|\\
    &\overset{\eqref{eq:expbound}}{\leq} (\delta_t)^2 (C_\nu)^2 e^{\delta_t C_\nu}C'_{\nu,T}\\
    &=: (\delta_t)^2C'''_{\nu,T}.
\end{align}

Inserting both bounds into \eqref{eq:someboundek} leads to
\begin{align}
|e_{k+1}(x)| \leq |e_k(x)| + \delta_t C''_{\nu,T} \|e_k\|_{\banach} +(\delta_t)^2C'''_{\nu,T}. \label{eq:finalboundek}
\end{align}
Taking the supremum on both sides yields:
\begin{align}
    \|e_{k+1}\|_\banach \leq (1 + \delta_t C''_{\nu,T}) \|e_k\|_\banach +(\delta_t)^2C'''_{\nu,T}. \label{eq: onestep}
\end{align}
    Iterating this estimate leads to an estimate for the final error $e_n$:
    \begin{align}
        \|e_n\|_\banach \leq &(1 + \delta_t C''_{\nu,T})^n\|e_0\|_\banach \nonumber\\
        &+ (1+ (1 + \delta_t C''_{\nu,T}) + (1 + \delta_t C''_{\nu,T})^2+\hdots+ (1 + \delta_t C''_{\nu,T})^{n-1})(\delta_t)^2C'''_{\nu,T}.
    \end{align}
    By the \emph{finite} geometric series,
    \begin{align}
        \|e_n\|_\banach &\leq (1 + \delta_t C''_{\nu,T})^n\|e_0\|_\banach - \frac{1-(1 + \delta_t C''_{\nu,T})^n}{\delta_t C''_{\nu,T}}(\delta_t)^2 C'''_{\nu,T}\\
        &= (1 + \delta_t C''_{\nu,T})^n\|e_0\|_\banach + \frac{(1 + \delta_t C''_{\nu,T})^n-1}{C''_{\nu,T}} \delta_t C'''_{\nu,T}\\
        &\leq (1 + \delta_t C''_{\nu,T})^n\|e_0\|_\banach + \frac{e^{n\delta_t C''_{\nu,T}}-1}{C''_{\nu,T}}\delta_t C'''_{\nu,T}\\
        &= (1 + \delta_t C''_{\nu,T})^n\|e_0\|_\banach + \frac{e^{T C''_{\nu,T}}-1}{C''_{\nu,T}}\delta_t C'''_{\nu,T}.\label{eq:enestimatefinal}
    \end{align}
    As the evolution of the matrix fields as well as of both schemes starts in the identity, the initial error $\|e_0\|_\banach$ is 0.
    Thus,
    \begin{equation}
        \| M_\expon (T) - M_\euler (T)\|_\banach = \|e_n\|_\banach \leq \delta_t C^{(4)}_{\nu,T},
        \label{eq:exeu_convergence}
    \end{equation}
    showing that the solution of the exponential discretization converges to the solution of the Euler scheme for $\delta_t \to 0$.
    
     Combining this with the convergence of the Euler scheme \eqref{eq:eu_convergence} gives 
    \begin{align}
        M_\expon(T) \to M(T) \quad \text{in $O(\delta_t)$} \label{eq:ex_convergence}
    \end{align}
    for arbitrary $T>0$.
    Thus, the solution of the exponential discretization converges to the continuous solution.

    \mypar{Decomposition property}
    Now, we can infer the decomposition property by defining the difference
    between the exponential scheme and the analytical solution as
    \begin{align}
        e_{\delta_t,T,\nu_\id}(x) := M(x,T) - M_\expon (x,T).
    \end{align}
    and calculate
    \begin{align}
    M(x,2T) = M_\expon(x,2T)+e_{\delta_t,2T,\nu_\id}(x).
    \end{align}
    Inserting the definition of $M_{\expon}$ in \eqref{eq: approx}, we obtain
    \begin{align}
    ...&=\rprod_{k=0}^{2n-1}\exp\left(\delta_t\nu_\id\left(P M_{\expon}\left(x,t_k\right)\Bar{x}\right)\right)+e_{\delta_t,2T,\nu_\id}(x)\\
    &=\rprod_{k=n}^{2n-1}\exp\left(\delta_t\nu_\id\left(P M_{\expon}\left(x,t_k\right)\Bar{x}\right)\right) \rprod_{k=0}^{n-1}\exp\left(\delta_t\nu_\id\left(P M_{\expon}\left(x,t_k\right)\Bar{x}\right)\right)+e_{\delta_t,2T,\nu_\id}(x)\\
    &\overset{(*)}{=} M_\expon(PM_\expon(x,T)\bar{x},T) M_\expon(x,T)+e_{\delta_t,2T,\nu_\id}(x)\\
    &=\left(M(PM_\expon(x,T)\bar{x},T) - e_{\delta_t,T,\nu_{\id}}(PM_\expon(x,T)\bar{x})\right) \left(M(x,T) - e_{\delta_t,T,\nu_\id}(x)\right) \nonumber\\
    &\quad + e_{\delta_t,2T,\nu_\id}(x) \\
    &= M\left(PM_\expon(x, T)\Bar{x}, T\right)M\left(x, T\right)
    - e_{\delta_t,T,\nu_\id}(PM_\expon(x,T)\bar{x}) M(x, T) \nonumber\\
    &\quad - M\left(P M_\expon(x, T)\Bar{x}, T\right)e_{\delta_t,T,\nu_\id}(x)
    + e_{\delta_t,T,\nu_\id} (P M_\expon(x, T)\Bar{x}) e_{\delta_t,T,\nu_\id}(x) \nonumber\\
    &\quad + e_{\delta_t,2T,\nu_\id}(x).
    \end{align}
    The step in $(\ast)$ will be further explained below. As this equation for $M(x,2T)$ holds for all $\delta_t>0$, the error terms vanish for $\delta_t\to 0$. As further $M_\expon(T)$ is bounded for arbitrarily small $\delta_t > 0$, we obtain the claimed decomposition property:
    \begin{align}
     M(x,2T)
     &= \lim_{\delta_t \to 0} M\left(PM_\expon(x, T)\Bar{x}, T\right)M\left(x, T\right)\\
     &= M\left(PM(x, T)\Bar{x}, T\right)M\left(x, T\right),
    \end{align}
    where the last equality follows from $\lim_{\delta_t\to 0} M_\expon(x,T)= M(x,T)$ and the continuity of $M$. This concludes the proof of the theorem.

    $(\ast)$: We further explain the steps in this equality.
    The equality 
    \begin{align}
         M_\expon(x,T)=\rprod_{k=0}^{n-1}\exp\left(\delta_t\nu_\id\left(P M_{\expon}\left(x,t_k\right)\Bar{x}\right)\right)
    \end{align}
    holds by definition of the exponential scheme, see \eqref{eq: approx}.
    We now show the equality
    \begin{align}
         M_\expon(PM_\expon(x,T)\barx,T)=\rprod_{k=n}^{2n-1}\exp\left(\delta_t\nu_\id\left(P M_{\expon}\left(x,t_k\right)\Bar{x}\right)\right).
         \label{eq:prod-claim}
    \end{align}
    From \eqref{eq: approx} with $x=P M_\expon(x,T)\barx$, it follows
    \begin{align}
        M_\expon(PM_\expon(x,T)\barx,T)&= \rprod_{k=0}^{n-1}\exp\left(\delta_t \nu_{\id}\left(P M_{\expon}\left(PM_\expon(x,T)\barx,t_k\right) \overline{PM_\expon(x,T) \barx}\right)\right) . \label{eq:prod-proof0}
    \end{align}
    The projection operator $P$ and mapping $\bar{\cdot}$ to homogeneous coordinates cancel:
    \begin{align}
         \ldots &= \rprod_{k=0}^{n-1}\exp\left(\delta_t \nu_{\id}\left(P M_{\expon}\left(PM_\expon(x,T)\barx,t_k\right) M_\expon(x,T) \barx \right)\right).   
    \end{align}
    Splitting the right-most factor from the matrix product, we obtain
    \begin{align}
        \ldots = &\rprod_{k=1}^{n-1}\exp\left(\delta_t \nu_{\id}\left(P M_{\expon}\left(PM_\expon(x,T)\barx,t_k\right) M_\expon(x,T) \barx \right)\right)  \nonumber \\ 
        &\cdot \exp\left(\delta_t \nu_{\id}\left(P M_{\expon}\left(PM_\expon(x,T)\barx,t_0\right) M_\expon(x,T) \barx \right)\right).
    \end{align}
    From the definition of the exponential scheme, it holds that $M_\expon(\cdot,t_0)=id$, see \eqref{eq:expscheme}, and the expression simplifies to 
    \begin{align}
        M_\expon(PM_\expon(x,T)\barx,T) = &\rprod_{k=1}^{n-1}\exp\left(\delta_t \nu_{\id}\left(P M_{\expon}\left(PM_\expon(x,T)\barx,t_k\right) M_\expon(x,T) \barx \right)\right)  \nonumber \\ 
        &\cdot \exp\left(\delta_t \nu_{\id}\left(P M_\expon(x,T) \barx \right)\right).  
    \end{align}
    As $t_n=T$, it holds
    \begin{align}
        \exp\left(\delta_t \nu_{\id}\left(P M_\expon(x,T) \barx \right)\right) = \exp\left(\delta_t \nu_{\id}\left(P M_\expon(x,t_n) \barx \right)\right) , \label{eq:prod-proof}
    \end{align}
    which is precisely the right-most factor in \eqref{eq:prod-claim}.
    This shows that the right-most factors in the products of \eqref{eq:prod-claim} and \eqref{eq:prod-proof0} coincide. Iteratively applying the fact that each factor in the exponential scheme \eqref{eq:expscheme} depends only on the (temporally) previous one, we arrive at the claimed equality.
\end{proof}


\bibliography{sn-bibliography}
\newpage
\begin{appendices}
\section{Hyperparameter Tuning Results}\label{sec:hyperparameter-tuning}
\begin{table}[tbh]
\centering
{\begin{tabular}{llllllllll} %
    \toprule
     & \multicolumn{3}{c}{vector field fitting} & \multicolumn{3}{c}{synthetic deformations} & \multicolumn{3}{c}{inter-patient registration}\\
     method & $w_0$ & lr & sf & $w_0$ & lr & sf & $w_0$ & lr & sf\\
     \hline
     SVF \cite{han2023diffeomorphic}&  4.85  & 0.017 & 0.009 & 1.1e-4 &  3.8e-5 & 0.053 &  15.18 & 0.004 & 0.006\\
     SE(3) (ours) & 4.12 & 0.007 & 0.006 & 2.8e-3 &  0.001 & 0.058 & 13.11 & 0.005 & 0.002 \\
     SIM(3) (ours) & 4.89  & 0.009 & 0.017 & 2.8e-3 & 0.001 & 0.058 & 13.92 & 0.008 & 0.018 \\
     \bottomrule
     \end{tabular}
}
    \caption{Hyperparameters after tuning with the Optuna framework: SIREN initial frequency scaling ($w_0$), learning rate (lr), and scaling factor (sf); see Section~\ref{sec:network-architecture}.}
    \label{tab:hyperpar}
\end{table}
We used the Optuna framework \cite{optuna_2019} for choosing the hyperparameters for the experiments in Section \ref{sec4}. The results of the parameter tuning are presented in Table~\ref{tab:hyperpar}. Additional information is presented in figures \ref{fig:parametertuning_Deformationfit}, \ref{fig:parametertuning}, \ref{fig:parametertuningOASIS}. The bar graphs on the left show Optuna's estimate of how strong the specific hyperparameter impacts the score on the set of test image pairs. On the right are 2D projections of the score functions with respect to different pairs of hyperparameters.

\begin{figure}[tbp]
    \centering
    \subfloat[][SVF]
    {\includegraphics[width=0.45\textwidth]{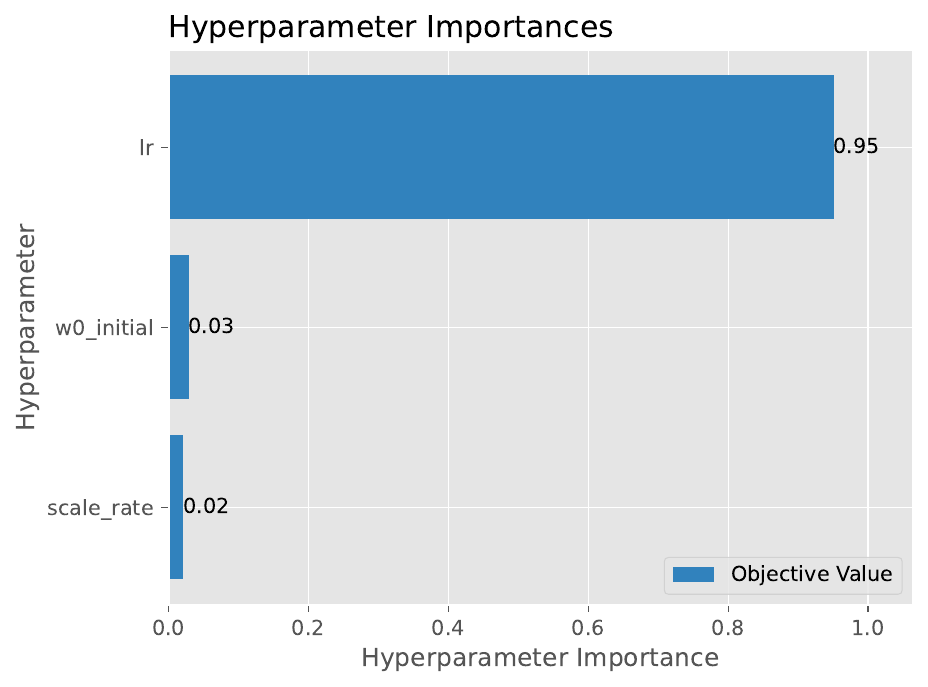}}\hspace{0.1\textwidth}%
    \includegraphics[width=0.45\textwidth]{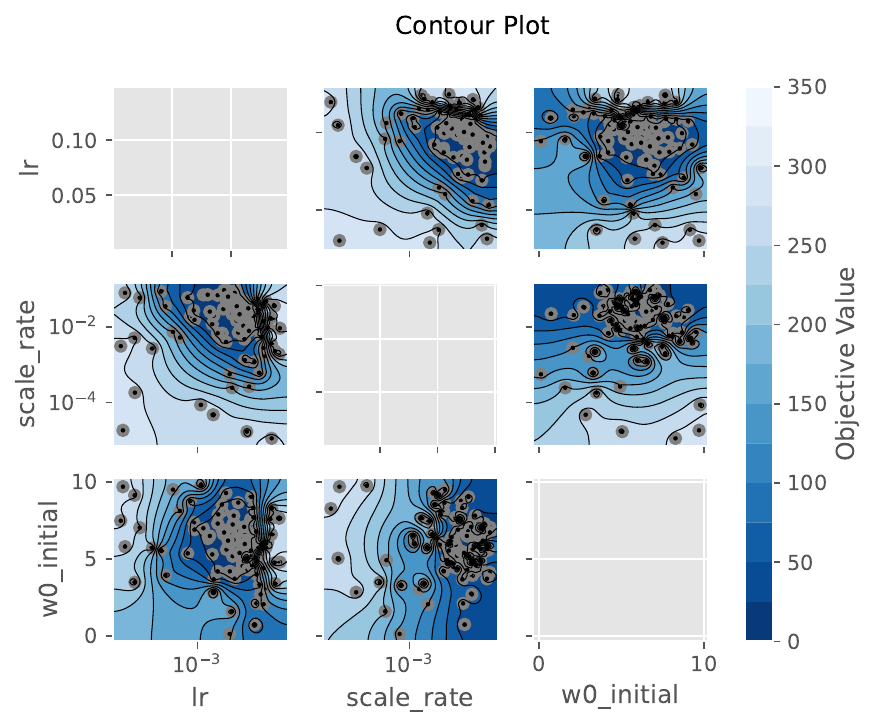}\par
    \subfloat[][$\SE(3)$]{\includegraphics[width=0.45\textwidth]{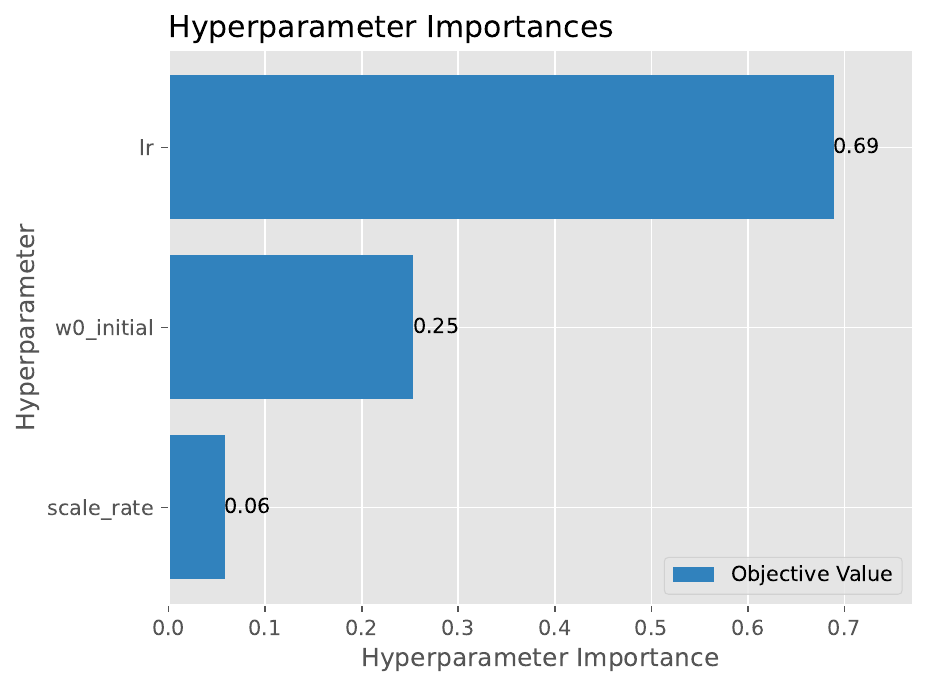}}\hspace{0.1\textwidth}%
    \includegraphics[width=0.45\textwidth]{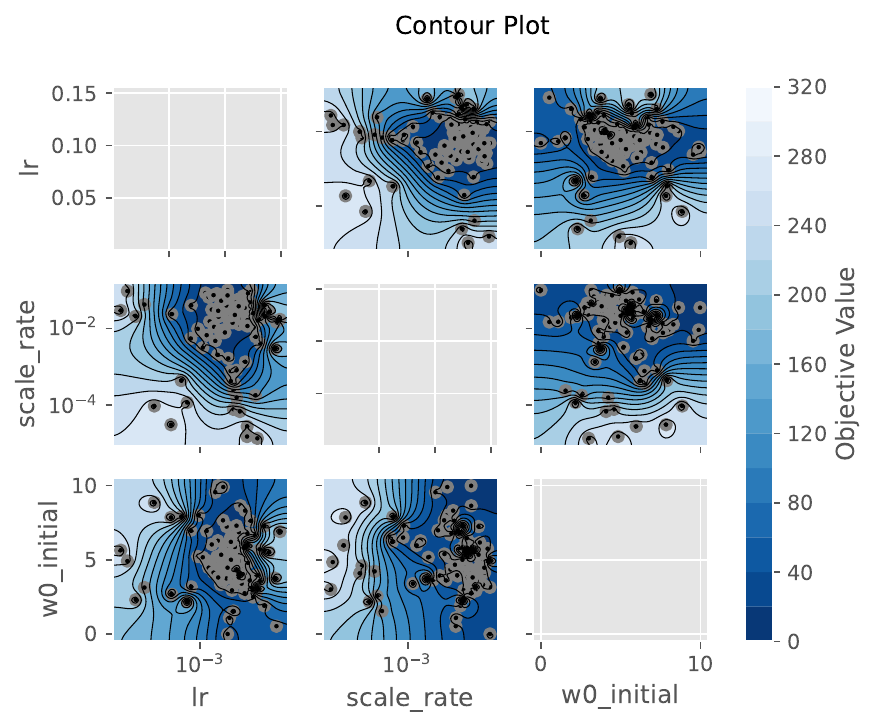}\\
    \subfloat[][$\SIM(3)$]
    {\includegraphics[width=0.45\textwidth]{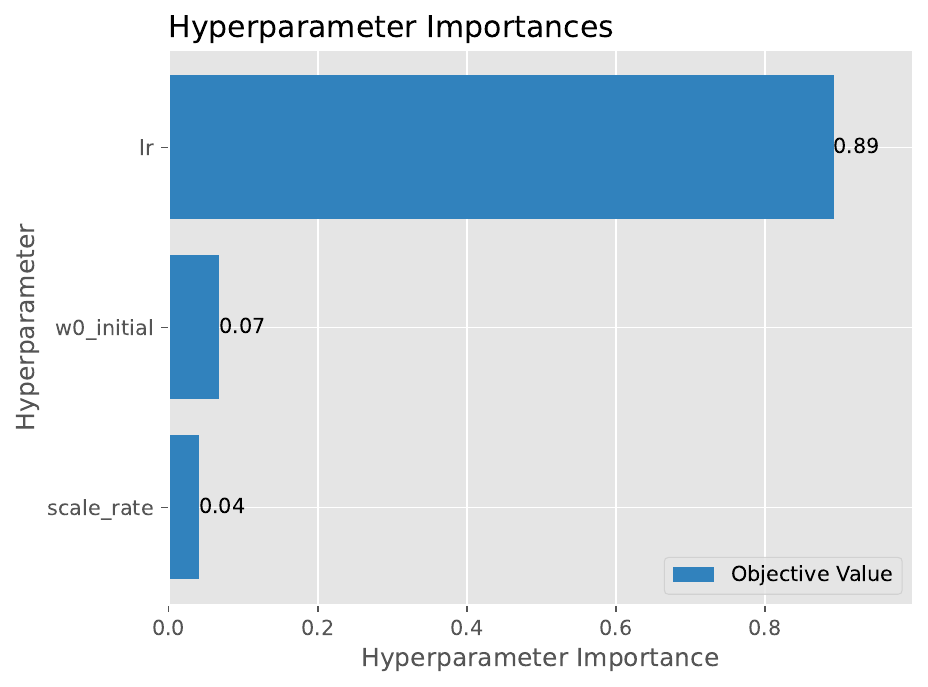}}\hspace{0.1\textwidth}%
    \includegraphics[width=0.45\textwidth]{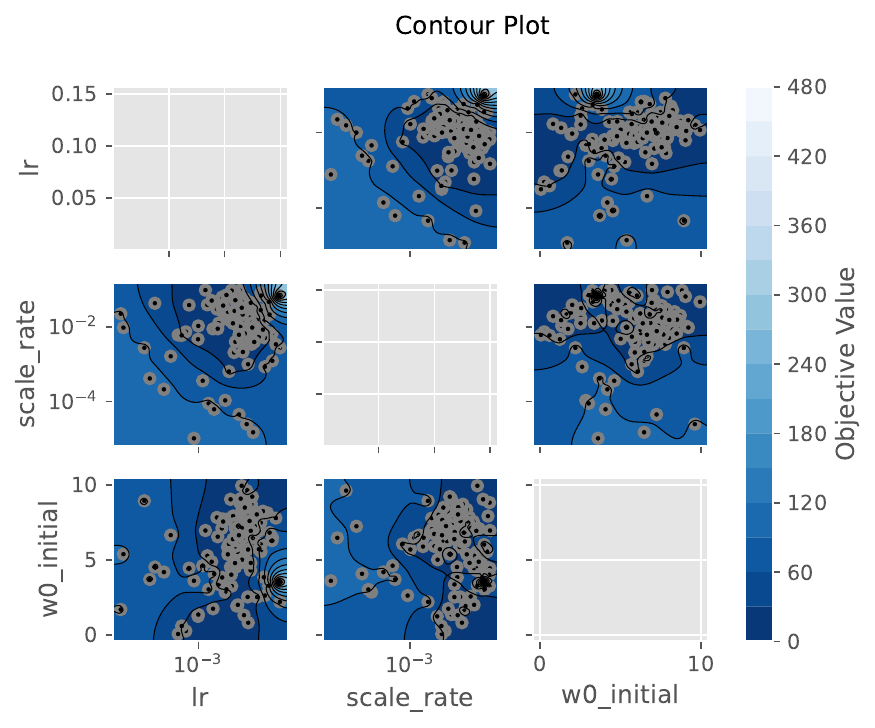}\\  
     \caption{Hyperparameter for fitting 5 synthetically generated deformation fields (Fig.~\ref{fig:deformation-fitting}). Parameters were optimized to minimize the RMSE between the given and generated deformation fields.}
    \label{fig:parametertuning_Deformationfit}
\end{figure}

\begin{figure}[tbp]
    \centering
    \subfloat[][SVF]
    {\includegraphics[width=0.45\textwidth]{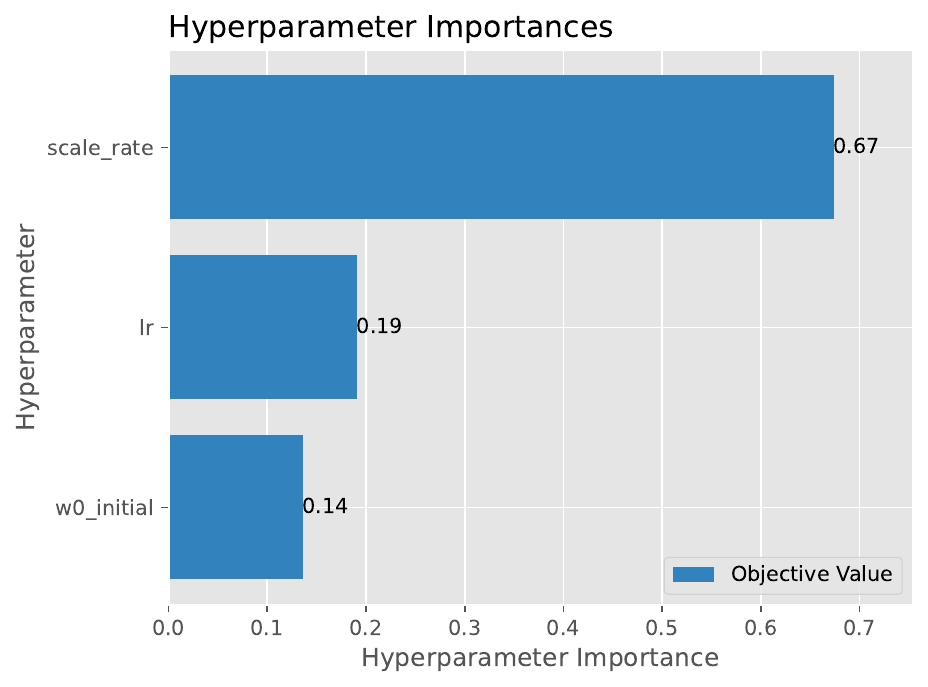}}\hspace{0.1\textwidth}%
    \includegraphics[width=0.45\textwidth]{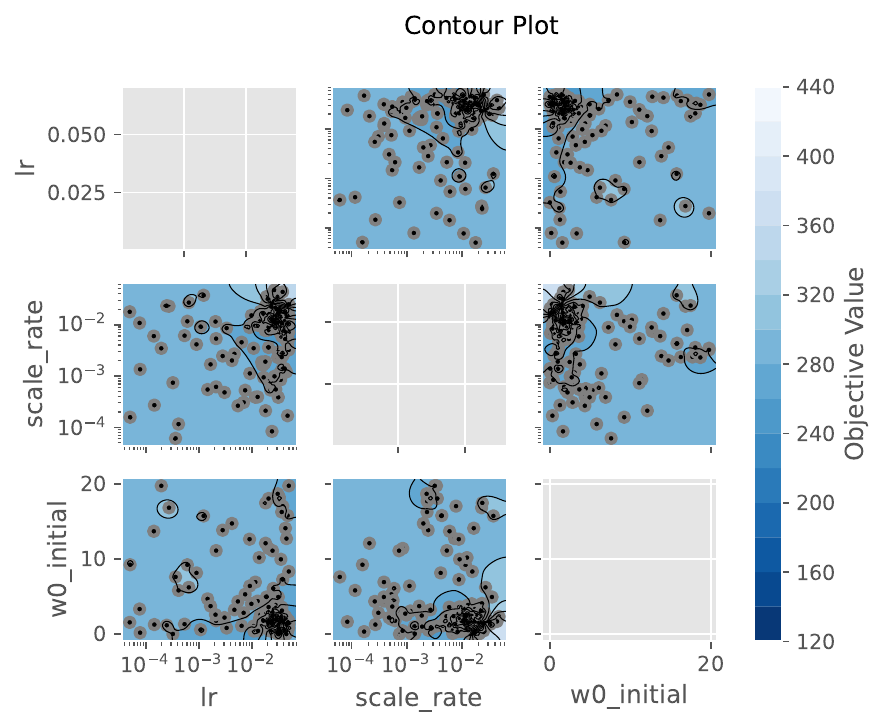}\par
    \subfloat[][SE3]{\includegraphics[width=0.45\textwidth]{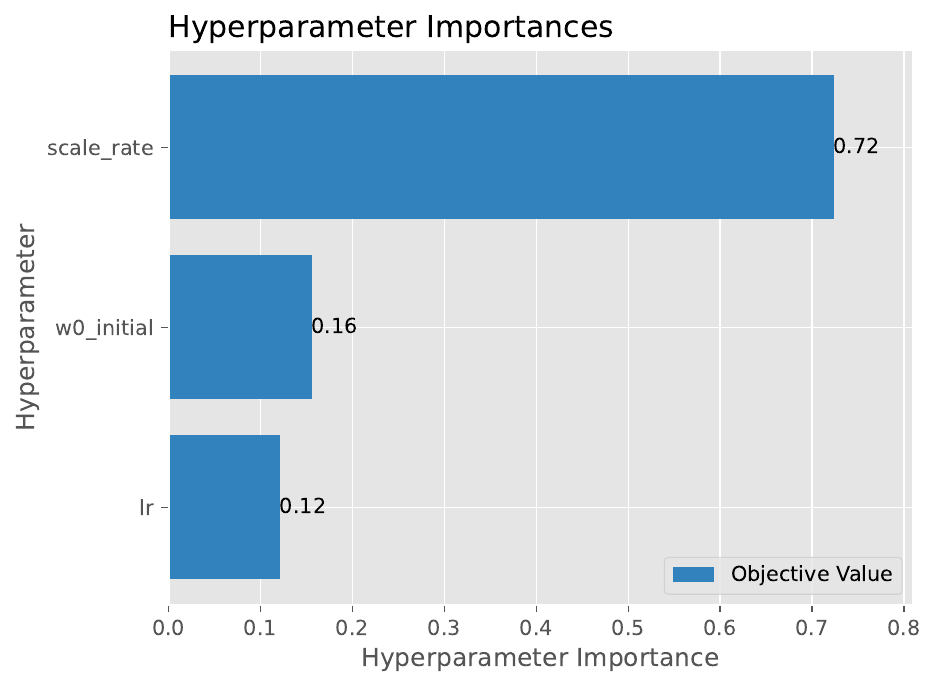}}\hspace{0.1\textwidth}%
    \includegraphics[width=0.45\textwidth]{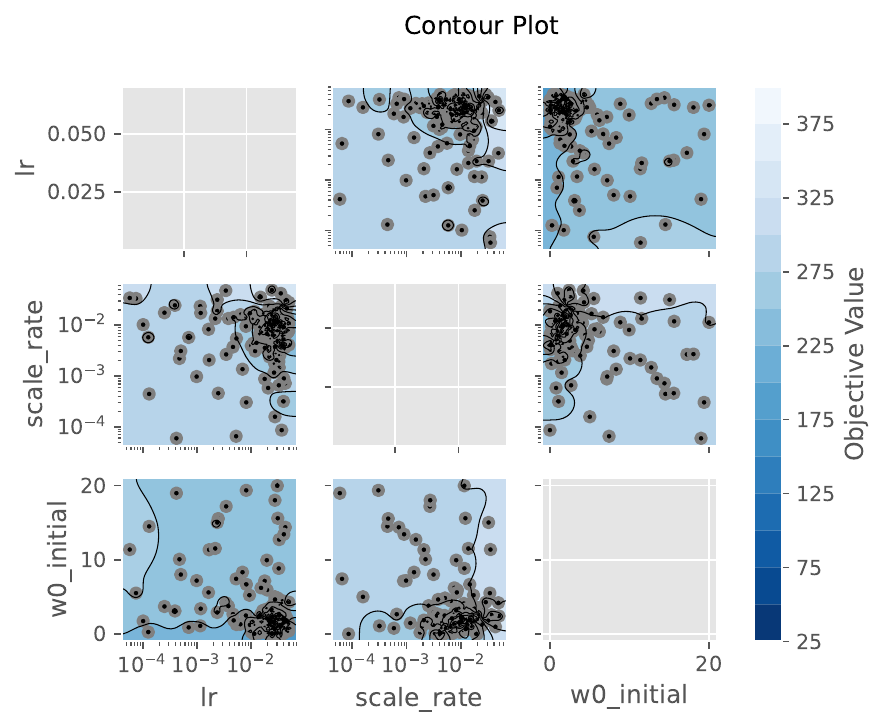}\\
    \subfloat[][SIM3]
    {\includegraphics[width=0.45\textwidth]{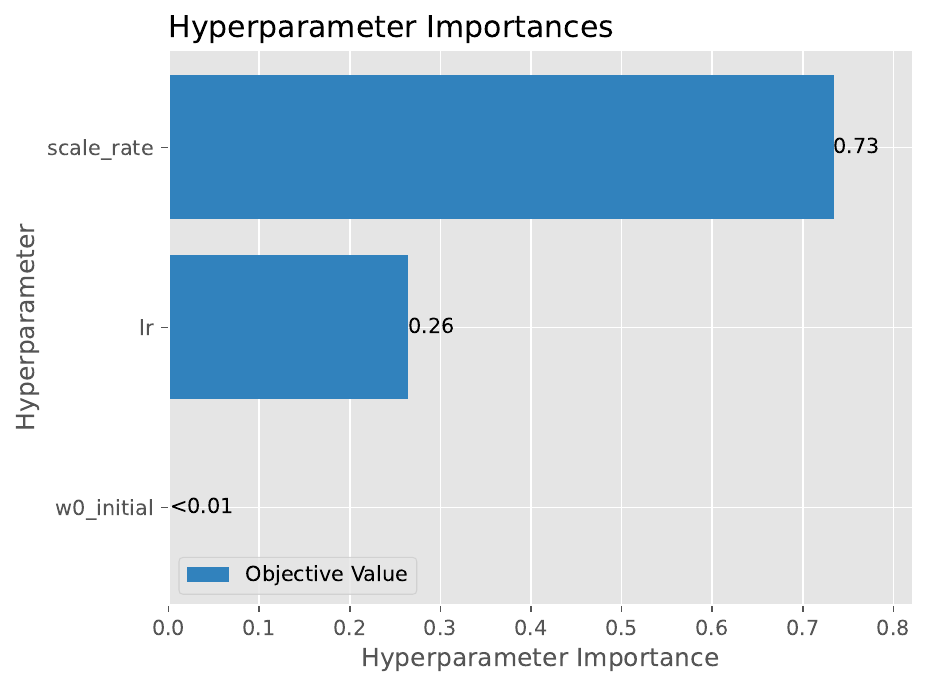}}\hspace{0.1\textwidth}%
    \includegraphics[width=0.45\textwidth]{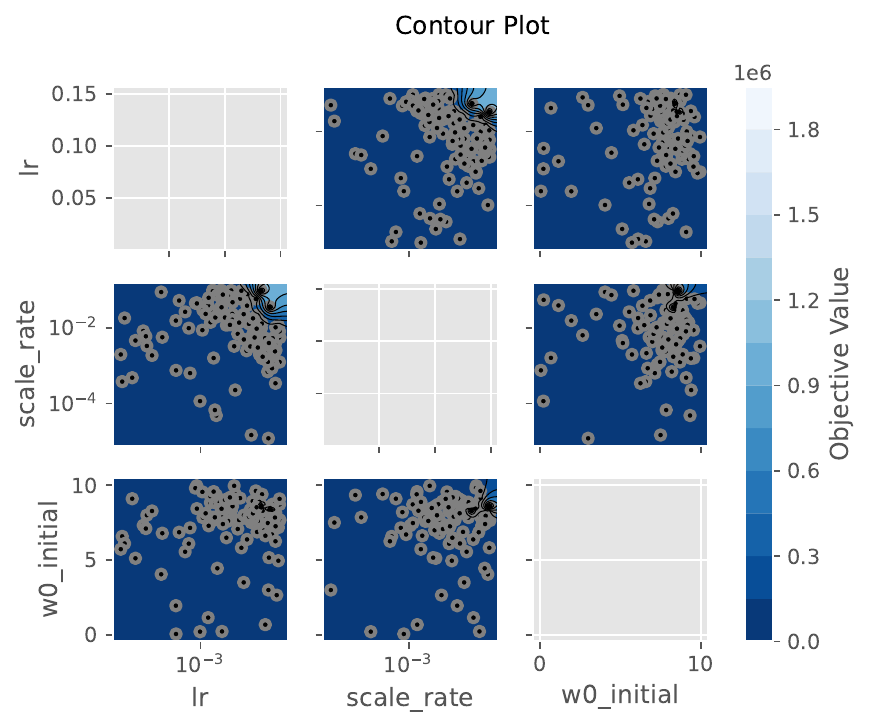}\\    
    \caption{Hyperparameter Tuning for registration with synthetic deformations (Fig.~\ref{fig:self-warped}). For $\SIM(3)$, hyperparameters were ultimately chosen the same as for $\SE(3)$, as the automated search did not yield satisfactory results.}
    \label{fig:parametertuning}
\end{figure}

\begin{figure}[tbp]
    \centering
    \subfloat[][SVF]
    {\includegraphics[width=0.45\textwidth]{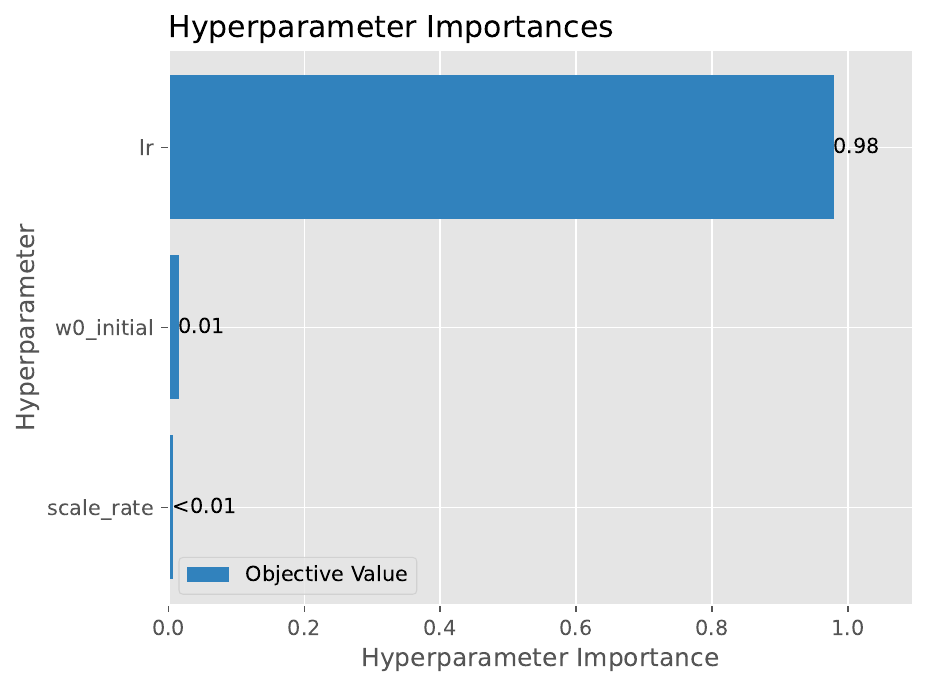}}\hspace{0.1\textwidth}%
    \includegraphics[width=0.45\textwidth]{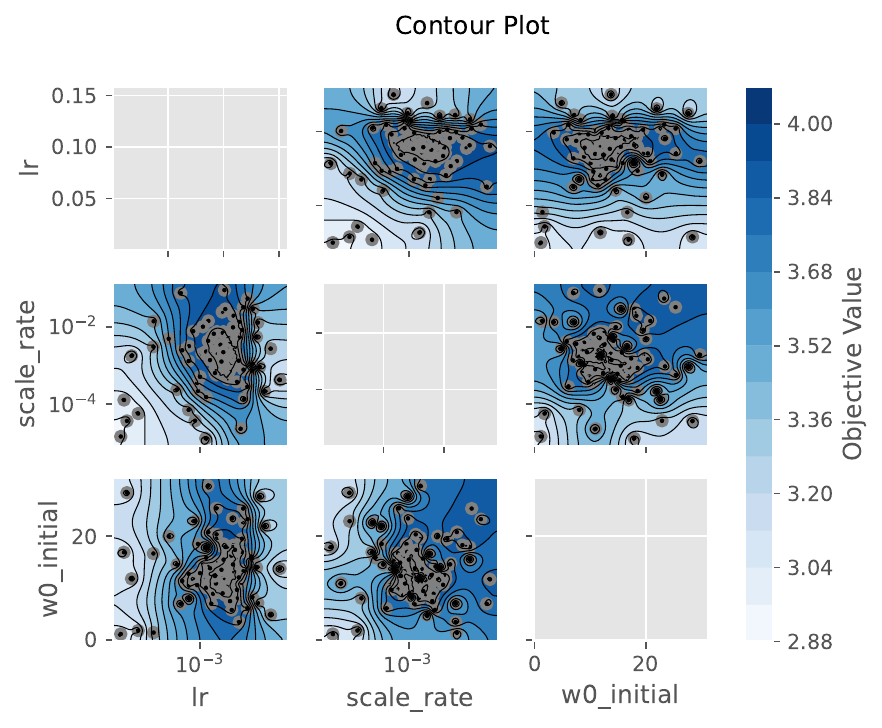}\par
    \subfloat[][$\SE(3)$]{\includegraphics[width=0.45\textwidth]{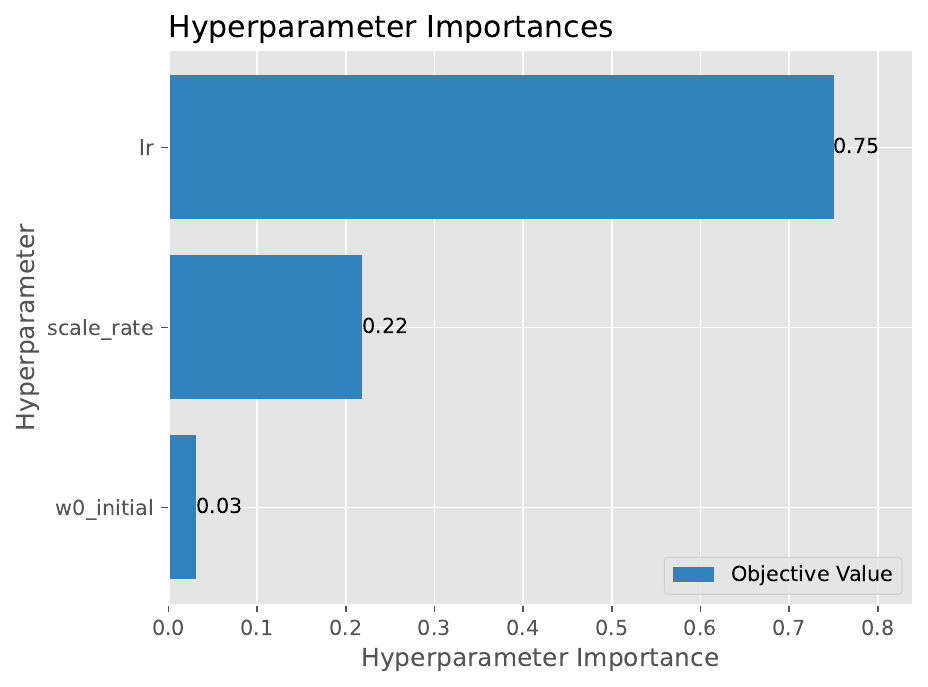}}\hspace{0.1\textwidth}%
    \includegraphics[width=0.45\textwidth]{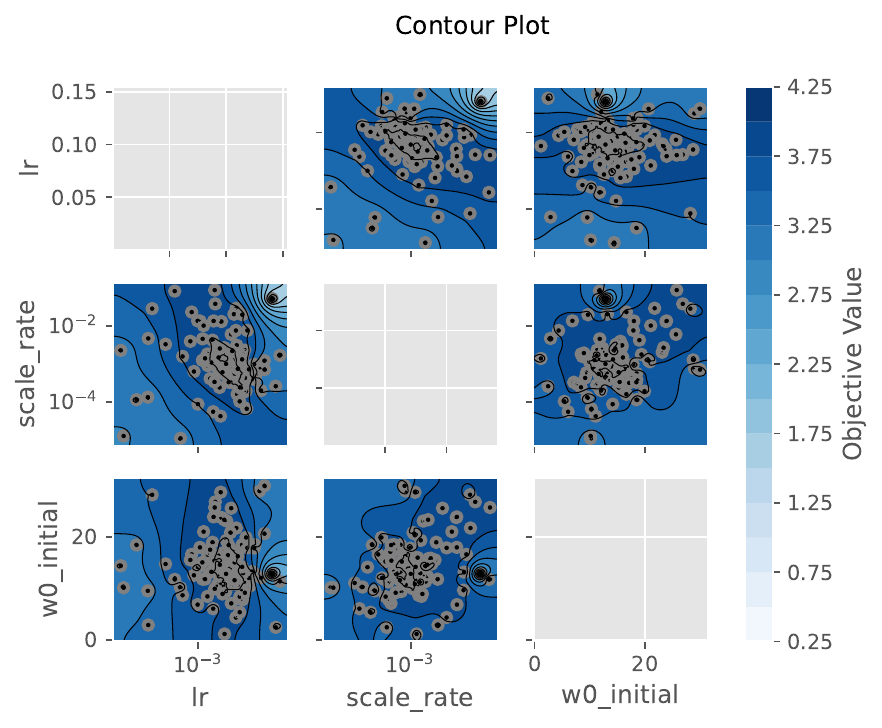}\\
    \subfloat[][$\SIM(3)$]
    {\includegraphics[width=0.45\textwidth]{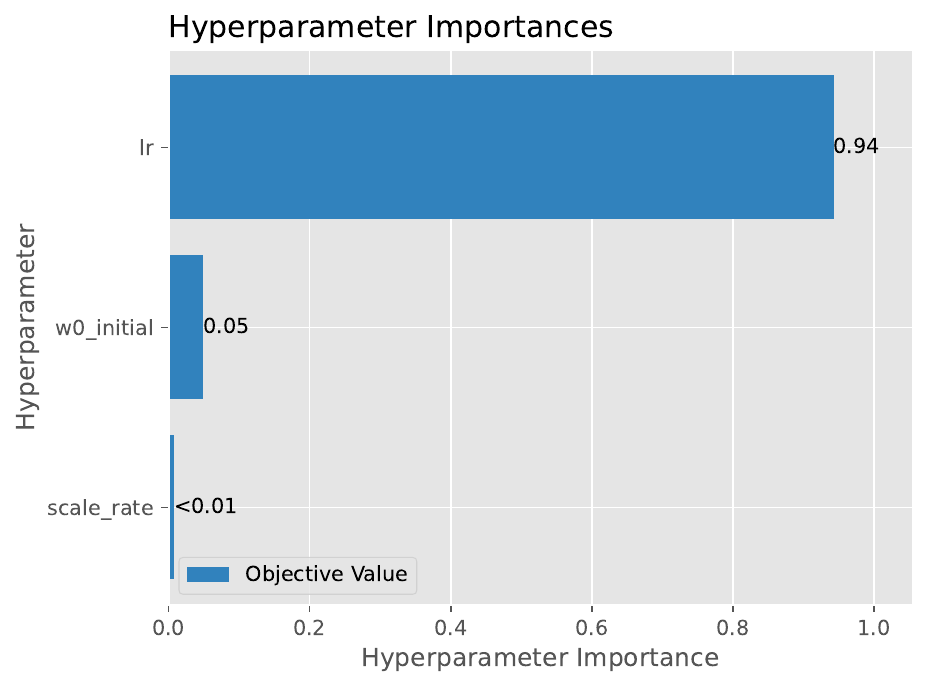}}\hspace{0.1\textwidth}%
    \includegraphics[width=0.45\textwidth]{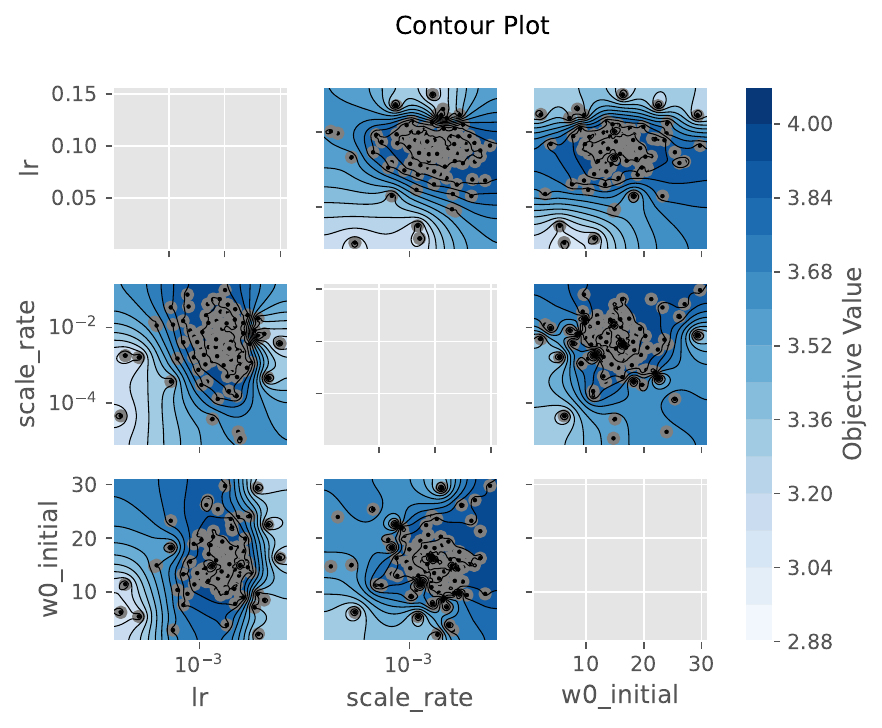}\\    
    \caption{Hyperparameter tuning on 5 selected scull stripped volume pairs from the OASIS dataset. Parameters were optimized to maximize the sum of the Dice scores (Table~\ref{tab:table}).}
    \label{fig:parametertuningOASIS}
\end{figure}




\end{appendices}

\end{document}